\documentclass{article}

\usepackage[main,final]{neurips_2026}
\usepackage[utf8]{inputenc}
\usepackage[T1]{fontenc}
\usepackage{microtype}
\usepackage{graphicx}
\usepackage{subcaption}
\usepackage{booktabs}
\usepackage{multirow}
\usepackage{float}
\usepackage{longtable}
\usepackage{algorithm}
\usepackage{algorithmic}
\usepackage{hyperref}
\hypersetup{hidelinks,draft}
\usepackage{url}
\usepackage{amsfonts}
\usepackage{nicefrac}
\usepackage{xcolor}
\usepackage{amsmath}
\usepackage{amssymb}
\usepackage{mathtools}
\usepackage{amsthm}
\usepackage[capitalize,noabbrev]{cleveref}
\usepackage{tabularx}
\usepackage{adjustbox}
\newcommand{\INPUT}[1]{\STATE \textbf{Input: } #1}

\theoremstyle{plain}
\newtheorem{theorem}{Theorem}[section]
\newtheorem{proposition}[theorem]{Proposition}
\newtheorem{lemma}[theorem]{Lemma}

\theoremstyle{definition}
\newtheorem{definition}[theorem]{Definition}
\newtheorem{assumption}[theorem]{Assumption}
\theoremstyle{remark}

\title{Residual Calibration via Local Feature-Space Refinement}

\author{Xihao Wang$^\text{a}$\thanks{Co-first authors and contributed equally to this work.}, \quad Chengyao Yu$^\text{b}$\footnotemark[1], \quad Ruixing Ming$^\text{a}$\thanks{Correspondence to: Ruixing Ming <rxming@mail.zjgsu.edu.cn>}\\
$^\text{a}$School of Statistics and Data Science,
Zhejiang Gongshang University;\\
$^\text{b}$Department of Statistics and Data Science,
Southern University of Science and Technology
}

\begin{document}

\maketitle
\begin{abstract}
Deep neural networks often struggle to provide reliable confidence estimates, making post-hoc calibration essential for reliable decision-making. Existing post-hoc methods largely rely on score-based transformations, which improve marginal calibration but often fail to capture spatially varying miscalibration in local regions of the learned feature space. To address this gap, we propose Local Adaptive Probability Estimation (LAPE), a post-hoc residual calibration framework. Specifically, LAPE treats the global calibrator output as a stable anchor and refines the individual prediction by leveraging non-parametric local evidence from neighboring calibration samples. As a plug-and-play refinement module, LAPE effectively utilizes the information about feature-space geometry and can reuse the calibration data that are used to construct the global calibrator. Furthermore, we extend LAPE to the setting with multi-view inputs through Cascade Feature Augmentation Fusion (CFAF). Extensive experiments demonstrate that the proposed methods can improve the calibration quality of various existing baselines and remain efficient and robust across different settings.
\end{abstract}

\section{Introduction}

Deep neural networks have exhibited remarkable accuracy on classification problems, but often fail to provide reliable confidence scores that reflect the ground truth correctness likelihoods~\citep{guo2017calibration}. This mismatch can appear as overconfidence or underconfidence, both of which are critical especially in high-stakes applications and downstream tasks relying prediction uncertainty~\citep{yu2026robust,yu2026anytime}.
For example, in automated clinical diagnostics, a reliable confidence score can indicate when a prediction should be deferred to experts for further examination, and in robust conformalized selection~\citep{yu2026robust}, a reliable confidence score can improve selection power.
Therefore, a network should provide reliable calibrated probabilities rather than directly applying the point predictions~\citep{amodei2016concrete, lakshminarayanan2017simple, jiang2018trust}.

Post-hoc calibration provides a cost-effective solution to rectify miscalibration without retraining. However, existing global parametric maps such as Temperature Scaling (TS)~\citep{guo2017calibration}, ensemble TS (ETS)~\citep{zhang2020mix}, and Dirichlet calibration~\citep{kull2019beyond} rely solely on scalar logits. Consequently, predictions with similar scores tend to obtain identical calibrations, ignoring the varying empirical reliability across distinct feature neighborhoods. To this end, local TS (LTS)~\citep{Ding_2021_ICCV} introduces spatial adaptivity, but its increased parametric capacity risks overfitting on limited calibration data~\citep{guo2017calibration, kull2019beyond, tomani2022parameterized}. Conversely, purely non-parametric local methods capture spatial patterns by adapting to neighborhood distributions but suffer from data inefficiency and discard informative global logits~\citep{zadrozny2001obtaining, zadrozny2002transforming, naeini2015obtaining, kumar2019verified}.
These limitations raise a foundational challenge:
\textit{how to develop a calibration framework that leverages local feature-space evidence to correct residual errors while strictly preserving global predictive stability?}

 To address the above challenge, we propose \textit{Local Adaptive Probability Estimation} (LAPE), a post-hoc residual calibration framework that effectively utilizes local feature-level evidence to further enhance existing calibration methods. Specifically, LAPE formulates calibration as a \textit{residual refinement} problem. It takes the output of any existing post-hoc calibrator as a stable anchor (also called global anchor), and then applies a local correction to it only when the nearby calibration samples support it.
To tackle the overfitting issue caused by reusing the calibration set to fit both a global anchor and residual refinement, we introduce a $V$-fold cross-fitting strategy. It generates out-of-sample anchor scores in the calibration set and can be more effective than the standard data-splitting strategy.
Finally, we develop \textit{Cascade Feature Augmentation Fusion} (CFAF), which extends LAPE to tackle multi-view scenarios~\citep{DBLP:journals/corr/abs-1304-5634, baltruvsaitis2018multimodal}.

In theory, we establish the asymptotic consistency of the proposed LAPE. Empirically, we extensively evaluate LAPE across diverse benchmarks that encompass tabular and image-type data. Results show that LAPE helps to enhance the calibration performance of various baselines, such as TS~\citep{guo2017calibration}, Feature Clipping (FC)~\citep{tao2025fc}, and Semantic-Aware Grouping (SAG)~\citep{yang2023sag}. Extensive experiments in Appendix~\ref{app:supplementary_results} further demonstrate the efficiency, robustness, and effectiveness of LAPE.

We summarize our contributions as follows:
\begin{itemize}
    \item[1.] We propose LAPE, a residual calibration framework that enhances existing post-hoc calibration methods by integrating feature-level evidence. Importantly, LAPE can reuse the calibration dataset for building the global calibrator and residual refinement.

    \item[2.] We extend LAPE to multi-view settings through CFAF, which combines calibrated view-level predictions with local refinement on a rectified joint feature space.

    \item[3.]  We establish asymptotic consistency of LAPE, providing a consistency analysis for feature-space residual calibration of the full multiclass probability vector.

    \item[4.] We conduct comprehensive evaluations across various datasets and architectures, demonstrating that LAPE can improve the performance of existing post-hoc calibration methods.
\end{itemize}

\section{Preliminary}
\label{sec:related_work}

\subsection{Problem Setup}

Let $X\in \mathcal{X}$ denote the feature variable and $Y\in\mathcal{Y}=\{1,\dots, K\}$ denote the label variable.
Assume access to a calibration dataset $\{(x_i,y_i)\}_{i=1}^{n}$.
Let $\hat{\pi}_0:\mathcal{X}\to \Delta^{K-1}$ be a pre-trained soft-classifier for the $K$-class classification, where $\Delta^{K-1}$ represents the $(K-1)$-dimensional probability simplex.
Let $p(k;x)=\mathbb{P}(Y=k~|~X=x)$.
For any $x\in\mathcal{X}$, we write $\hat{\pi}_0(x)=(\hat{\pi}_{0}(1;x),\dots,\hat{\pi}_{0}(K;x))$, where $\hat{\pi}_0(k;x)$ represents the base estimate of $p(k;x)$.
Denote the feature embedding of $x$ by $z:=\phi(x)$ and write $z_i=\phi(x_i)$, $i\in[n]:=\{1,\dots,n\}$.
The initial goal of LAPE is to construct a calibrated estimator of $p(k;x)$, denoted by $\hat{p}(k;x)$, by leveraging $\hat{\pi}_0(k;x)$, $\mathcal{D}_{\text{cal}}$, and $\{z_i\}_{i=1}^n$. 

Furthermore, let $f_{\text{cal}}\left(\cdot;\hat{\pi}_{\text{0}}, \mathcal{D}\right):\mathcal{X}\to \Delta^{K-1}$ be \textit{any} existing post-hoc calibration algorithm, which calibrates the initial soft-classifier $\hat{\pi}_0$ based on a calibration dataset $\mathcal{D}\subseteq \mathcal{D}_{\text{cal}}$. For example, the mapping $f_{\text{cal}}$ can be defined by calibration algorithms such as VS and TS. We aim to utilize LAPE to further improve these calibration methods.

\subsection{Motivation}

A learned representation can be discriminative for classification while still leaving calibration errors that vary across different feature-space neighborhoods. For example, global post-hoc calibrators typically learn a fixed transformation from logits or confidence scores to calibrated probabilities.
As a result, two samples with similar scores receive similar corrections, even if their nearby calibration samples exhibit different empirical label frequencies~\citep{tomani2022parameterized,pmlr-v180-luo22a,Ding_2021_ICCV}.
Such ``\textit{same-score, different-reliability}'' implies a possible route beyond purely score-based calibration: retain the global calibrator as a \textit{stable anchor} and use nearby held-out samples as \textit{local evidence} for residual mismatch, when the feature geometry makes such neighborhoods informative.

The key insight of LAPE is to use local frequencies in calibration samples as residual evidence for probability calibration. Figure~\ref{fig:lape_concept} illustrates this mechanism in a controlled two-dimensional simulation, where training and calibration samples are generated from the same distribution, and only the training samples are used to fit the classifier. The figure highlights two observations as follows.

\begin{figure*}[t]
    \centering
    \begin{minipage}[t]{0.48\textwidth}
        \centering
        \includegraphics[width=\textwidth]{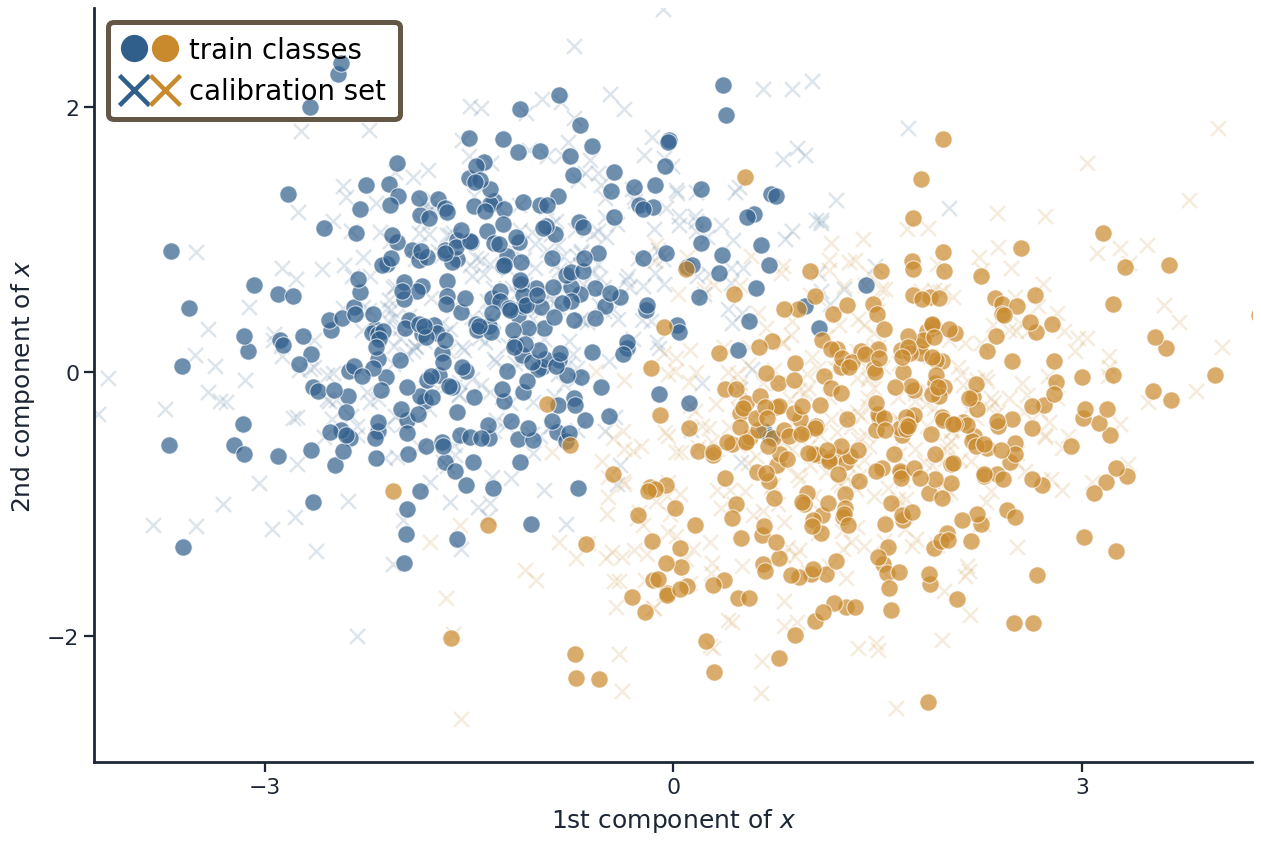}
        
        \vspace{0.05cm}
        \centerline{\small (a) Raw feature space}
    \end{minipage}
    \hfill
    \begin{minipage}[t]{0.48\textwidth}
        \centering
        \includegraphics[width=\textwidth]{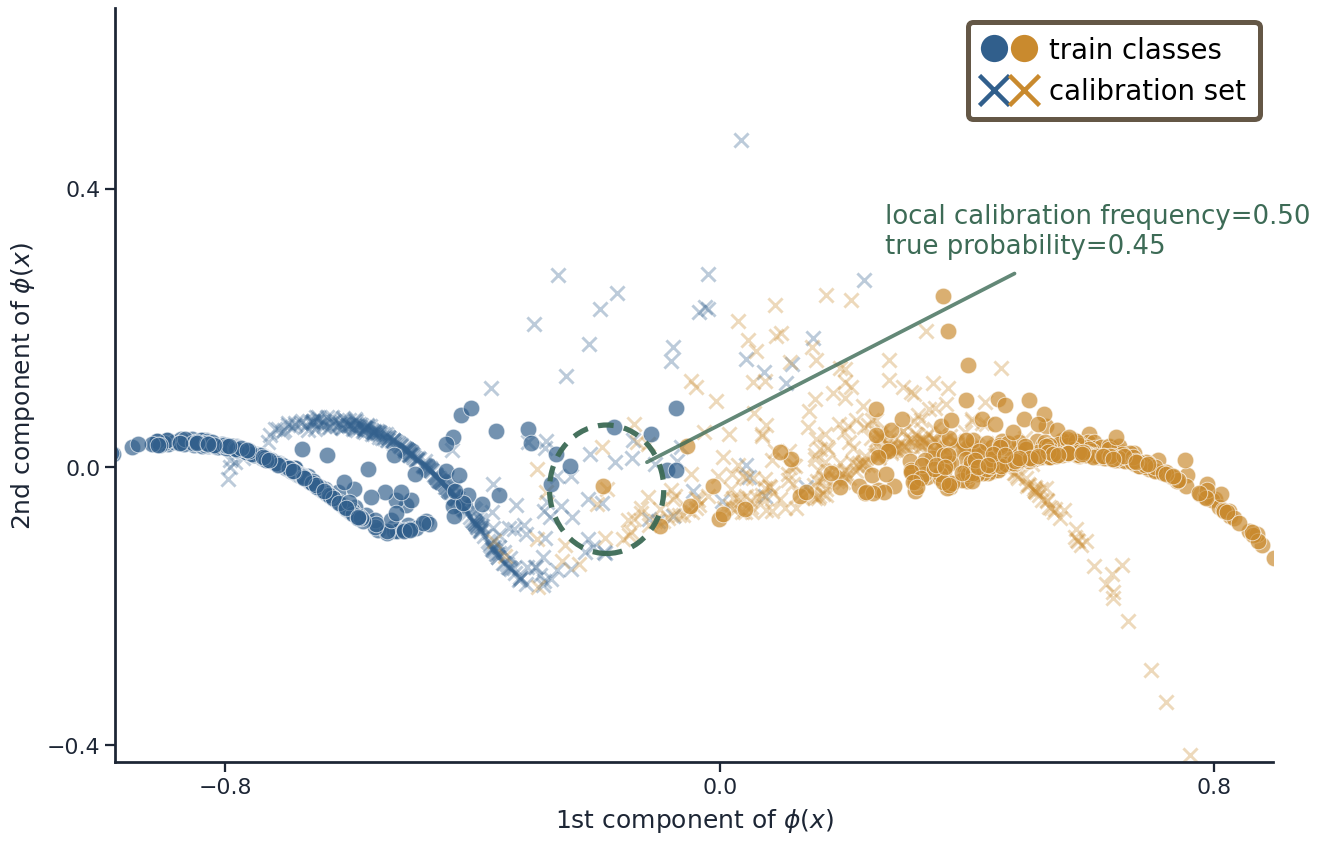}
        
        \vspace{0.05cm}
        \centerline{\small (b) Learned feature space}
    \end{minipage}
    \vspace{0.1cm}
    \caption{
    \textbf{Representation separation can not guarantee uniform calibration support.} (a) Raw features of training and calibration samples. (b) Learned features separate the training samples clearly, but calibration samples can remain mixed near induced boundary regions. Different colors represent different classes.
    The highlighted neighborhood illustrates the local evidence used by LAPE.}
    \label{fig:lape_concept}
\end{figure*}

\paragraph{Observation 1.} After representation learning, the training samples become more clearly separated, whereas held-out calibration samples can still remain locally mixed near induced boundary regions.

\paragraph{Observation 2.}
For held-out points with class-1 softmax scores in $[0.35,0.65]$, local frequencies from 32 nearest neighbors in the full learned feature space better approximate oracle probabilities than softmax predictions (MAE: 0.126 vs.\ 0.156).

Observation 1 reflects that the classifier is optimized to separate the observed training dataset but can fail to separate held-out samples near classification boundaries.
Observation 2 demonstrates that nearby held-out calibration samples can provide empirical evidence about the local class composition that is not directly used by a purely score-based global calibrator.
LAPE therefore uses this local frequency as a residual correction to a global calibrated anchor.

{
\paragraph{Evidence beyond two dimensions.}

We further examine the above observations in a controlled simulation with 100-dimensional inputs. 
We generate i.i.d. training, calibration, and test sets, each containing 2,000 samples.
The classifier is obtained on the training dataset.
For a split \(A\in\{\mathrm{train},\mathrm{cal}\}\) and a representation 
\(h\in\{\mathrm{raw},\mathrm{learned}\}\), let \(\mu_{A,c}^{h}\) denote the centroid of class \(c\):
\[
\mu_{A,c}^{h}
=
\frac{1}{|\mathcal I_{A,c}|}
\sum_{i\in\mathcal I_{A,c}} h(x_i),
\qquad
\mathcal I_{A,c}
=
\{i:x_i\in A,\ y_i=c\}.
\]
We use $S_A^{h} =\left\| \mu_{A,1}^{h}-\mu_{A,0}^{h}\right\|_2$ as a class-separation diagnostic, after standardizing each representation using statistics from the training split.

Results are averaged over ten random seeds. 
In the raw input space, the training and calibration sets have nearly identical class separation (\(2.510\pm0.176\) versus \(2.519\pm0.162\)). 
After representation learning, the separation is moderately larger for the training samples than for the held-out calibration samples (\(3.663\pm0.343\) versus \(2.981\pm0.234\)), showing that a modest train--calibration discrepancy can remain even under i.i.d. sampling.

We then evaluate probability estimation on an independent test set. 
The boundary region is specified before model fitting as
\(0.35\leq P(Y=1\mid X)\leq0.65\).
For each test point, the local estimate is computed from the Laplace-smoothed label frequency of its 50 nearest calibration samples in the learned representation. 
Within this predefined region, the mean absolute error to the oracle conditional probability is
\(0.1067\pm0.0098\) for the softmax estimate and
\(0.0969\pm0.0055\) for the local-frequency estimate.
Over the entire test samples which includes points that fall outside the boundary region, however, softmax remains more accurate than the local estimate
(\(0.0941\pm0.0054\) versus \(0.1058\pm0.0042\)). 
These results support using local calibration frequencies as a residual correction in selected regions, rather than as a replacement for the global prediction.

\section{Local Adaptive Probability Estimation} \label{sec:methodology}

This section presents LAPE in two parts. We first introduce the general LAPE method, which constructs a non-parametric residual correction for a pre-trained soft-classifier \(\hat{\pi}_0\). We then describe how to apply LAPE to existing post-hoc calibrators by treating their calibrated outputs as \(\hat{\pi}_0\) under the shared calibration dataset \(\mathcal{D}_{\mathrm{cal}}\).

\subsection{General Methodology}\label{general_method}

We introduce the general method of LAPE in three steps. We first describe how to identify the neighbors of a given test point. Then, we introduce the hybrid refinement, which fuses the global anchor with this local evidence to produce the final calibrated probability. Finally, we describe how to employ the adaptive optimization to solve for the correction parameters that minimize residual errors.

\paragraph{Step 1: Local evidence retrieval.}
Given a new test sample $x$ and its feature embedding vector $z$, the first step of LAPE is to select samples from $\mathcal{D}_{\text{cal}}$ that are geometrically close to $x$ in the feature space. Specifically, let $d_i=\|z_i-z\|_2$, $i\in [n]$. For any fixed $m\in[n]$, the set of $m$ nearest neighbors $\mathcal{N}_{m}(x)$ is defined by
\begin{equation*}
\mathcal{N}_{m}(x) := \left\{ i\in[n] : \|z - z_i\|_2 \le d_{(m)}\right\},
\end{equation*}
where $d_{(m)}$ denotes the $m$-th smallest value of $\{d_i\}_{i=1}^n$. By leveraging the local calibration dataset $\mathcal{N}_m(x)$, LAPE then computes the local empirical distribution as
\begin{equation}\label{eq:p_emp}
\tilde{p}_{0}(k;x, m) := \frac{\sum_{i\in\mathcal{N}_{m}(x)}\mathbb{I}(y_{i}=k) + 1/K}{m+1},\quad k\in[K],
\end{equation}
where $1/K$ is a smoothing term to avoid zero probabilities.

{
\paragraph{Step 2: Hybrid refinement.} Let $\pi^{\text{anchor}}(k;x)$ be the anchor estimate of $p(k;x)$, $k\in[K]$. We focus on the case that 
\[
\pi^{\text{anchor}}(k;x)=\hat{\pi}_0(k;x).
\]
The idea of hybrid refinement is that, if the base prediction $\hat{\pi}_0(\cdot)$ is already reliable in many regions, then it is unnecessary to adopt a large local correction. For sparse regions where $\hat{\pi}_0(\cdot)$ performs not satisfactorily, local frequencies may also be weakly supported, which means that over-reliance on the local evidence can further increase variance. 
Therefore, LAPE treats $\tilde{p}_0(k;x,m)$ as a delicate \textit{residual correction signal} rather than a replacement for the global anchor. Specifically, given a neighborhood size $m\in[n]$ and a weight $\alpha \in [0,1]$, the calibrated probability of $\hat{\pi}_0(k;x)$ is given by
\begin{equation}\label{eq:final_convex}
\hat{p}(k;x,m,\alpha) := (1 - \alpha) \cdot \pi^{\text{anchor}}(k;x) + \alpha \cdot \tilde{p}_{0}(k;x, m),\quad k\in[K].
\end{equation}
In what follows, we describe how to choose $m$ and $\alpha$ adaptively.
}

{
\paragraph{Step 3: Adaptive hyperparameter selection.}
LAPE selects the hyperparameter vector $(m,\alpha)$ by leveraging weighted leave-one-out (LOO) analysis. Specifically, for any fixed $i\in[n]$, we denote the LOO probability estimate of $x_i$ by
\begin{equation}\label{eq:lape_mixed_prob}
p^{(i)}(y_i;x_i,m,\alpha) := (1 - \alpha)\pi^{\text{anchor}}(y_i;x_i)
+ \alpha \tilde{p}_{-i}(y_i;x_i,m).
\end{equation}
where $\tilde{p}_{-i}(k;x_i,m)$ is the LOO local empirical distribution that
\begin{equation}\label{eq:loo_empirical}
\tilde{p}_{-i}(k;x_i,m) := \frac{\sum_{j \in \mathcal{N}_m(x) \setminus \{i\}} \mathbb{I}(y_j = k) + 1/K}{m},\quad k\in[K].
\end{equation}
Each calibration sample is weighted by its distance to the feature embedding of the test point $x$, i.e.,
\begin{equation}\label{eq:kernel_weight}
w_i^{(m)} =
\frac{\exp\left(-\|z-z_i\|_2^2/\sigma^2\right)}
{\sum_{j\in\mathcal{N}_m(x)}\exp\left(-\|z-z_j\|_2^2/\sigma^2\right)},
\end{equation}
where $\sigma=d_{(\lfloor \sqrt{n}/2 \rfloor)}$ is the bandwidth. We optimize $m$ over the grid $\mathcal{M}=\{M_{\min},\dots,M_{\max}\}\subseteq[n]$, where $M_{\text{min}}$ and $M_{\text{max}}$ have the same order $O(\sqrt{n})$. In practice, we simply set $M_{\text{min}}=\lfloor \sqrt{n}/10 \rfloor$ and $M_{\text{max}}=\lfloor \sqrt{n}/2 \rfloor$. Finally, the hyperparameter vector $(m^*,\alpha^*)$ is adaptively determined by minimizing the density-weighted negative log-likelihood:
\begin{equation}\label{eq:loo_optimization}
(m^*, \alpha^*) = \operatorname*{arg\,min}_{m \in \mathcal{M}, \alpha \in [0,1]} \sum_{i\in\mathcal{N}_m(x)} -w_i^{(m)} \cdot \log \left( p^{(i)}(y_i; x_i, m, \alpha) \right).
\end{equation}
Correspondingly, the final calibrated estimate $\hat{p}(x):=(\hat{p}(1;x),\dots,\hat{p}(K;x))$ satisfies
\begin{equation}\label{eq:final}
\hat{p}(k;x) = (1 - \alpha^*) \cdot \pi^{\text{anchor}}(k;x) + \alpha^* \cdot \tilde{p}_{0}(k;x, m^*),\quad k\in[K].
\end{equation}
The proposed method is summarized in Algorithm~\ref{alg:lape}.

\begin{algorithm}[t]
\caption{Local Adaptive Probability Estimation}
\label{alg:lape}
\begin{algorithmic}[1]
\INPUT Test sample \(x\), calibration set 
\(\mathcal D_{\mathrm{cal}}=\{(x_i,y_i)\}_{i=1}^{n}\), feature map \(\phi\),
anchor scores \(\pi^{\mathrm{anchor}}(x)\) and
\(\{\pi^{\mathrm{anchor}}(x_i)\}_{i=1}^{n}\), candidate grids
\(\mathcal M\) and \(\mathcal A\).
\STATE {\bfseries Output:} Calibrated probability vector \(\hat p(x)\).

\STATE Compute \(z=\phi(x)\) and \(z_i=\phi(x_i)\), \(i\in[n]\).
\STATE Compute the neighbor sets \(\mathcal N_m(x)\) for all \(m\in\mathcal M\).
\STATE Set $\sigma=d_{(\lfloor\sqrt n/2\rfloor)}$, the distance from $z$ to its
$\lfloor\sqrt n/2\rfloor$-th nearest calibration embedding.

\FOR{\(m\in\mathcal M\) and \(\alpha\in\mathcal A\)}
    \STATE Initialize \(L(m,\alpha)\leftarrow 0\).
    \FOR{\(i\in\mathcal N_m(x)\)}
        \STATE Compute the local weight $w_i^{(m)}$ as in \eqref{eq:kernel_weight}.
        \STATE Compute the leave-one-out local frequency $\tilde p_{-i}(k;x_i,m)$ as in \eqref{eq:loo_empirical}.
        \STATE Compute $\hat p^{(i)}(y_i;x_i,m,\alpha)$ as in \eqref{eq:lape_mixed_prob}.
        \STATE Update 
        \[
        L(m,\alpha)\leftarrow L(m,\alpha) -w_i^{(m)}(x) \log \hat p^{(i)}(y_i;x_i,m,\alpha).
        \]
    \ENDFOR
\ENDFOR

\STATE Select $(m^*,\alpha^*) = \arg\min_{m\in\mathcal M,\alpha\in\mathcal A} L(m,\alpha)$.
\STATE Compute $\hat{p}(k;x)$ as in \eqref{eq:final}, $k\in[K]$.

\STATE \textbf{Return} $\hat{p}(x)=(\hat{p}(1;x),\dots, \hat{p}(K;x))$.
\end{algorithmic}
\end{algorithm}

}

\subsection{Applying LAPE to Calibrators with Shared Calibration Dataset}\label{sec:32}
{
By \eqref{eq:final}, the quality of the global anchor $\hat{\pi}_{0}(x)$ is important to the calibrated $\hat{p}(x)$. Directly taking the softmax probability vector of a neural network as $\hat{\pi}_0(x)$ is concerning, as it usually fails to provide a reasonable estimate of $(p(1;x),\dots,p(K;x))$. An appealing solution is to apply LAPE to \textit{already calibrated estimates}. For example, we can take the result of VS as $\hat{\pi}_0(x)$. The remaining concern is that existing confidence calibration methods also need a calibration dataset, and reuse the same set $\mathcal{D}_{\text{cal}}$ for fitting $\hat{\pi}_0(k;x)$ and constructing $\hat{p}(k;x)$ will result in calibration-set overfitting.
A direct idea is to split $\mathcal{D}_{\text{cal}}$ into two disjoint parts, but this strategy often comes at the cost of sacrificing the utilization of the calibration set. To tackle this, we introduce a $V$-fold cross-fitting strategy as follows.

Specifically, we first partition $\mathcal{D}_{\text{cal}}$ into $V$ disjoint subsets $\{\mathcal{D}_{\text{cal}}^{(v)}$, $v=1,\dots,V\}$. 
For each fold $\mathcal{D}_{\text{cal}}^{(v)}$, we first calibrate $\hat{\pi}_0$ on the set $\mathcal{D}_{\text{cal}} /\ \mathcal{D}_{\text{cal}}^{(v)}$ to obtain a \textit{calibrated soft-classifier} $f_{\text{cal}}:\mathcal{X}\to  \Delta^{K-1}$. We then apply $f_{\text{cal}}$ to the hold-out calibration dataset $\mathcal{D}_{\text{cal}}^{(v)}$. That is, we take $\pi^{\text{anchor}}(x_i)=(\pi^{\text{anchor}}(1;x_i),\dots,\pi^{\text{anchor}}(K;x_i))$ as
\begin{equation*}
\pi^{\text{anchor}}(x_i) = f_{\text{cal}}\left(x_i;\hat{\pi}_{\text{0}}, \mathcal{D}_{\text{cal}} \setminus \mathcal{D}_{\text{cal}}^{(v)}\right), \quad \forall (x_i,y_i) \in \mathcal{D}_{\text{cal}}^{(v)},\quad v=1,\dots,V.
\end{equation*}
This ensures that the prediction function for the point $x_i$ is independent of $(x_i,y_i)$, $i\in[n]$. For the test point $x$, due to its independence from $\mathcal{D}_{\text{cal}}$, the calibrated function is fitted on the whole set $\mathcal{D}_{\text{cal}}$:
\begin{equation*}
\pi^{\text{anchor}}(x)= f_{\text{cal}}\left(x;\hat{\pi}_{\text{0}}, \mathcal{D}_{\text{cal}}\right).
\end{equation*}
The remaining steps are identical to subsection~\ref{general_method}.
}

\section{Extension to Multi-View Scenarios}\label{sec:cfaf}

{ 
For many applications, object may be described by several feature groups or multiple modalities; we refer to these as multi-view scenarios.
It is common in post-hoc deployment that the view-specific soft-classifiers have already been trained, and retraining an end-to-end interaction model may be costly or infeasible. Given a shared held-out calibration dataset $\mathcal{D}_{\text{cal}}$, a naive calibration strategy is late fusion, which calibrates each view separately and then averages them to obtain a final calibrated probability \citep{snoek2005early,wolpert1992stacked,baltruvsaitis2018multimodal}. However, this strategy operates on predictors that are trained separately for each view, which can miss cross-view interactions.
The Cascade Feature Augmentation Fusion (CFAF) is designed to tackle this problem.
}

{
Let each input be observed through \(R\) views:
\(x=(x^{(1)},\ldots,x^{(R)})\). For each view \(r\in[R]\), let \(\hat\pi_{0}^{(r)}:\mathcal X^{(r)}\to\Delta^{K-1}\) be the frozen view-specific
soft-classifier and let \(z^{(r)}=\phi^{(r)}(x^{(r)})\) be its feature representation.
For the test data $x$, we fit a post-hoc calibrator for each view and obtain calibrated view-level predictions
\[
\hat q^{(r)}(x)= f_{\mathrm{cal}}^{(r)}\bigl(x^{(r)};\hat\pi_{0}^{(r)},\mathcal{D}_{\mathrm{cal}}\bigr),\quad r\in[R].
\]
Then we apply the \(V\)-fold cross-fitting strategy in Section~\ref{sec:32} to compute \(\hat q_{i}^{(r),\mathrm{oof}}\), which represents the out-of-fold calibrated prediction of
view \(r\) for \(x_i\). We compute the fusion weights by
\begin{equation}\label{fusion_w}
w^*=\arg\min_{w\in\Delta_R}-\sum_{i=1}^{n}\log\left(
\sum_{r=1}^{R} w_r \hat q_{i}^{(r),\mathrm{oof}}(y_i)
\right),
\qquad
\Delta_R=\left\{w:w_r\ge 0,\ \sum_{r=1}^{R}w_r=1\right\}.
\end{equation}
Therefore, the resulting CFAF anchor for a test sample is
\begin{equation}\label{anchor_cfaf}
\pi_{\mathrm{CFAF}}^{\mathrm{anchor}}(x)=\sum_{r=1}^{R} w_r^* \hat q^{(r)}(x).
\end{equation}
CFAF then applies LAPE in a \textit{joint feature space} rather than in any single view.
Specifically, for $i\in[n]$, define the concatenated feature representation by $z_i^{\mathrm{joint}}=[z_i^{(1)}\oplus\cdots\oplus z_i^{(R)}]$. For the test point $x$, write $z^{\mathrm{joint}}
=[z^{(1)}\oplus\cdots\oplus z^{(R)}]$. We apply a rectification map fitted on the calibration features to obtain $\tilde z_i=\Psi(z_i^{\mathrm{joint}};\mathcal D_{\mathrm{cal}})$, $i\in[n]$, where \(\Psi\) denotes standardization, optionally followed by whitening. Write $\tilde z=\Psi(z^{\mathrm{joint}};\mathcal D_{\mathrm{cal}})$.
Using
\(\{\tilde z_i,y_i\}_{i=1}^n\) for neighbor retrieval, CFAF computes the same local
empirical distribution as LAPE:
\[
\tilde p_0(k;x,m) = \frac{\sum_{i\in\mathcal N_m(\tilde z)}\mathbb I(y_i=k)+1/K}{m+1},
\quad k\in[K].
\]
With \((m^*,\alpha^*)\) selected as in \eqref{eq:loo_optimization}, the final CFAF prediction is
\begin{equation}\label{fusion_cfaf}
\hat p(k;x) = (1-\alpha^*)\pi_{\mathrm{CFAF}}^{\mathrm{anchor}}(k;x) + \alpha^*\tilde p_0(k;x,m^*), \quad k\in[K].
\end{equation}
The CFAF method is summarized in Algorithm~\ref{alg:cfaf}.
}

\begin{algorithm}[t]
\caption{LAPE with Cascade Feature Augmentation Fusion}
\label{alg:cfaf}
\begin{algorithmic}[1]
\INPUT Test sample \(x\), soft-classifiers \(\{\hat\pi_0^{(r)},\phi^{(r)}\}_{r=1}^{R}\), calibration set
\(\mathcal D_{\mathrm{cal}}\), search grid \(\mathcal A\).
\STATE {\bfseries Output:} Calibrated probability vector \(\hat p(x)\).
\STATE Compute out-of-fold calibrated view predictions
\(\{\hat q_i^{(r),\mathrm{oof}}\}\) on \(\mathcal D_{\mathrm{cal}}\).
\STATE Compute fusion weights $w^*$ by \eqref{fusion_w}.
\STATE Fit view-level calibrators on the full \(\mathcal D_{\mathrm{cal}}\) and compute $\pi_{\mathrm{CFAF}}^{\mathrm{anchor}}(x)$ by \eqref{anchor_cfaf}.
\STATE Construct joint features $z_i^{\mathrm{joint}}=[z_i^{(1)}\oplus\cdots\oplus z_i^{(R)}]$ for $i\in[n]$ and $z^{\mathrm{joint}}=[z^{(1)}\oplus\cdots\oplus z^{(R)}]$.
\STATE Compute $\tilde z_i=\Psi(z_i^{\mathrm{joint}};\mathcal D_{\mathrm{cal}})$, $i\in[n]$, and $\tilde z=\Psi(z^{\mathrm{joint}};\mathcal D_{\mathrm{cal}})$.
\STATE Apply LAPE to \(\pi_{\mathrm{CFAF}}^{\mathrm{anchor}}(x)\) and \(\{(\tilde z_i,y_i)\}_{i=1}^{n}\) as in \eqref{fusion_cfaf} to compute $\hat{p}(k;x)$, $k\in[K]$.
\STATE\textbf{Return} \(\hat p(x)\).
\end{algorithmic}
\end{algorithm}

\section{Theoretical Results}
\label{sec:theory}

{
In this section, we first establish the asymptotic consistency of LAPE; detailed proofs are deferred to Appendix \ref{proofs}. Then we provide a geometric interpretation of its finite-sample behavior.
}

{
We regard consistency as an important statistical property needed by a post-hoc calibration framework. That is, as the size of calibration dataset $n$ tends to infinity, the estimated probability vector should converge in probability to its true conditional distribution. Specifically, let
\[
p(x):=(p(1;x),\dots,p(K;x))=(\mathbb{P}(Y=1~|~X=x),\dots,\mathbb{P}(Y=K~|~X=x)).
\]
Rewrite the calibrated estimate vector $\hat{p}(x)$ under $\mathcal{D}_{\text{cal}}$ by $\hat{p}_n(x)$. Theorem~\ref{thm:consistency_lape} demonstrates that, under regular assumptions
(Assumptions~\ref{ass:reg_prob}--\ref{ass:candidate_grid};
see Appendix~\ref{assump}), the proposed estimator $\hat p_n(x)$ satisfies
$\hat p_n(x)\to p(x)$ in probability as $n\to\infty$.

\begin{theorem}[Consistency of LAPE]\label{thm:consistency_lape}
Let $x \in \mathcal{X}$ be a test sample with unobserved label $y$.
Suppose that $\mathcal{D}_{\mathrm{cal}}\cup\{(x,y)\}$ are i.i.d.
Let $\hat p_n(x)$ denote the LAPE estimator defined by \eqref{eq:final} when
constructed from a calibration set of size $n$.
Under Assumptions \ref{ass:reg_prob}--\ref{ass:candidate_grid}, the estimator
$\hat p_n(x)$ is asymptotically consistent. That is, for any fixed
$\varepsilon>0$,
\[
\lim_{n \to \infty}
\mathbb{P}\left(\|\hat p_n(x)-p(x)\|_2>\varepsilon\right)=0,
\]
where $\|\cdot\|_2$ denotes the Euclidean norm.
\end{theorem}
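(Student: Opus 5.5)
The estimator in \eqref{eq:final} is a convex combination of the anchor and the local frequency, with a data-driven weight $\alpha^*\in[0,1]$. So it suffices to show two things: that $\tilde p_0(\cdot;x,m)\to p(x)$ uniformly over $m\in\mathcal M$, and that the anchor term cannot prevent convergence. Since Assumptions~\ref{ass:reg_prob}--\ref{ass:candidate_grid} are stated only in the appendix, I will use the form I expect them to take. Concretely:
\begin{itemize}
\item the embedding $\phi$ is fixed, and $p(k;\cdot)$, seen as a function of $z=\phi(x)$, is continuous at the test embedding;
\item the embedded distribution gives positive mass to every ball around $z$;
\item the grid satisfies $M_{\min}\to\infty$ and $M_{\max}/n\to0$, which holds for $M_{\min},M_{\max}\asymp\sqrt n$;
\item either the anchor is itself consistent, $\pi^{\text{anchor}}(x)\to p(x)$ in probability, or the selection rule forces $\alpha^*\to1$ whenever the anchor is miscalibrated at $x$.
\end{itemize}
I plan to use the first branch of the last item as the main route and the second as a fallback.

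\textbf{Step 1: consistency of the local frequency.} Conditioning on the test point, write
\[
\tilde p_0(k;x,m)-p(k;x) = \frac{1}{m+1}\sum_{i\in\mathcal N_m(x)}\bigl(\mathbb I(y_i=k)-p(k;x_i)\bigr) + \frac{1}{m+1}\sum_{i\in\mathcal N_m(x)}\bigl(p(k;x_i)-p(k;x)\bigr) + O(1/m).
\]
\begin{itemize}
\item \emph{Bias term.} Because $M_{\max}/n\to0$ and balls around $z$ have positive mass, the radius $d_{(M_{\max})}\to0$ almost surely; this is the standard $k$-NN argument via binomial tail bounds on ball counts. Continuity then drives the bias term to zero uniformly over $m\le M_{\max}$.
\item \emph{Noise term.} Conditionally on the embeddings $\{z_j\}$, the labels are independent with means $p(k;x_i)$, so Hoeffding's inequality applies. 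This gives a tail bound of $2K\exp(-2m\varepsilon^2)$ for each $m$. A union bound over $|\mathcal M|\le\sqrt n$ values yields $\sqrt n\,e^{-2M_{\min}\varepsilon^2}\to0$, because $M_{\min}\asymp\sqrt n$.
\item \emph{Ties.} Distance ties can be handled by assuming a continuous distribution of embeddings or by random tie-breaking.
\end{itemize}
Together these give $\max_{m\in\mathcal M}\|\tilde p_0(\cdot;x,m)-p(x)\|_2\to0$ in probability.

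\textbf{Step 2: combining with the anchor.} Using the triangle inequality and $\alpha^*\in[0,1]$,
\[
\|\hat p_n(x)-p(x)\|_2 \le (1-\alpha^*)\,\|\pi^{\text{anchor}}(x)-p(x)\|_2+\alpha^*\max_{m\in\mathcal M}\|\tilde p_0(\cdot;x,m^*)-p(x)\|_2 .
\]
The second term vanishes by Step 1, whatever value of $m^*$ is selected.
\begin{itemize}
\item \emph{Consistent anchor.} The first term vanishes directly. For the cross-fitted construction of Section~\ref{sec:32}, the test anchor is fitted on all of $\mathcal D_{\text{cal}}$, which is independent of $x$, so assuming consistency of $f_{\text{cal}}$ suffices.
\item \emph{Fallback.} I would show that the LOO criterion \eqref{eq:loo_optimization} pushes $\alpha^*\to1$ when the anchor is biased near $x$. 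The criterion is a weighted empirical cross-entropy over neighbors in a shrinking ball. Its population limit is the expectation, under $p(x)$, of $-\log\bigl((1-\alpha)\pi^{\text{anchor}}(Y;x)+\alpha\, p(Y;x)\bigr)$. This limit is strictly convex in $\alpha$ and uniquely minimized at $\alpha=1$, by Gibbs' inequality, unless the anchor already equals $p(x)$.
\end{itemize}

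\textbf{Main obstacle.} The fallback argument is where I expect the difficulty, in two places.
\begin{itemize}
\item \emph{Uniform convergence of the criterion.} I must show the criterion converges uniformly in $(m,\alpha)$ to its limit. The log loss needs a lower bound on the probabilities; the smoothing term gives $\tilde p_{-i}\ge 1/(Km)$, but this bound degenerates as $m\to\infty$, so I would truncate using an assumption that $p(k;x)$ is bounded away from zero near $x$. The LOO frequencies also share neighbors and are therefore dependent; I would control this through a bounded-differences argument.
\item \emph{Continuity of the anchor.} The anchor values $\pi^{\text{anchor}}(x_i)$ at neighbors must approach $\pi^{\text{anchor}}(x)$, which requires continuity of the calibrated map in $z$.
\end{itemize}
Because of this, I would primarily rely on the consistent-anchor branch of the assumptions, where Step 2 is immediate and the proof reduces to the $k$-NN argument of Step 1.
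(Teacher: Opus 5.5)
Your Step~1 is sound under the paper's hypotheses: Lipschitz $p_\phi$, density bounded below, $\max\mathcal M_n\le\lfloor\sqrt n/2\rfloor$ (so every candidate radius is at most $\sigma_n\to0$), and $\log|\mathcal M_n|=o(m_{\min,n})$. It is slightly stronger than the paper's Step~1, which only centers $\tilde p_0$ at its conditional mean.

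\textbf{The gap is in Step~2.} Your main route assumes $\pi^{\text{anchor}}(x)\to p(x)$, but no such assumption is among Assumptions~\ref{ass:reg_prob}--\ref{ass:candidate_grid}. There the anchor is an arbitrary fixed, label-independent map. It is only required to be Lipschitz in $z$ (Assumption~\ref{ass:lipschitz_base}) and to have all coordinates at least $2\delta$ (Assumption~\ref{ass:bounded_loss}). Anchors such as TS or VS are misspecified parametric maps and are not consistent, and the point of the theorem is that LAPE is consistent anyway. Listing ``the selection rule forces $\alpha^*\to1$'' as an assumption is likewise assuming the conclusion. Under the stated hypotheses, the term $(1-\alpha^*)\|\pi^{\text{anchor}}(x)-p(x)\|_2$ in your bound does not vanish for free. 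The whole content of the theorem is therefore your fallback, which you leave as an unresolved sketch. As written, the proposal proves a different and much easier statement.

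\textbf{How the paper closes this.} The paper's proof is exactly your fallback, carried out.
\begin{itemize}
\item Assumption~\ref{ass:bounded_loss} keeps every fused leave-one-out probability at least $\delta$ with probability tending to one, so the log loss is bounded and $1/\delta$-Lipschitz there.
\item $m\le m_{\sigma,n}$ gives $w_i^{(m)}\le e/m$.
\item McDiarmid's inequality with a union bound over the grid gives $\sup_{m,\alpha}|\mathcal J_n-\mathcal R_n|\to0$ (Lemma~\ref{lem:cond_unif_conv}).
\item Proposition~\ref{thm:bias_conv} then argues by contradiction. A selected candidate whose conditional mean stays $c$-far from $p(x)$ has conditional risk at least $H(p(x))+\kappa_c-o_p(1)$, by Jensen and Pinsker (Lemma~\ref{lem:risk_inf}) plus Lipschitz continuity of $p_\phi$ and the anchor. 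The pure-local competitor $(m_n^\circ,1)$ has risk $H(p(x))+o_p(1)$; it is available because Assumption~\ref{ass:candidate_grid} requires $1\in\mathcal A_n$.
\end{itemize}

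Your Gibbs formulation is the same mechanism, since $g(\alpha)-g(1)=\mathrm{KL}\bigl(p(x)\,\|\,(1-\alpha)\pi^{\text{anchor}}(x)+\alpha p(x)\bigr)\ge\tfrac12(1-\alpha)^2\|\pi^{\text{anchor}}(x)-p(x)\|_1^2$. Your Step~1 would even let you finish it without McDiarmid:
\begin{itemize}
\item $|\tilde p_{-i}(k;x_i,m)-\tilde p_0(k;x,m)|\le 1/m$, so the leave-one-out frequencies converge to $p(x)$ uniformly in $(m,i)$.
\item Hence $\mathcal J_n(x;m,\alpha)=\sum_k\hat\nu^{(m)}_k\ell_\alpha(k)+o_p(1)$ uniformly. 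Here $\hat\nu^{(m)}$ is the $w^{(m)}$-weighted label frequency and $\ell_\alpha(k)=-\log\bigl((1-\alpha)\pi^{\text{anchor}}(k;x)+\alpha p(k;x)\bigr)$.
\item Hoeffding with weights at most $e/m$ gives $\hat\nu^{(m)}\to p(x)$ uniformly in $m$.
\item Comparing with any $(m,1)$ then yields $(1-\alpha^*)\|\pi^{\text{anchor}}(x)-p(x)\|_2\to0$ in probability.
\end{itemize}
Some such argument, using $1\in\mathcal A_n$ and the lower bounds of Assumption~\ref{ass:bounded_loss}, is what your proposal is missing.
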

}

{

\paragraph{Geometric Interpretation.}
The term $\tilde{p}_0(k;x,m)$ given by \eqref{eq:p_emp} can be regarded as a smoothed local frequency that estimates the local conditional class probability around the test point $x$. The bias is therefore controlled by how much the conditional probability changes inside the retrieved neighborhood. Specifically, let $B_{\sigma}(z)$ be the neighborhood of radius $\sigma(x)$ around $z=\phi(x)$.
Ignoring the $O(1/m)$ contribution from Laplace smoothing, the geometric component of the local estimation bias satisfies
\begin{equation*}
\mathrm{Bias}(x) \le
\sup_{\phi(x')\in B_{\sigma}(z)}
\|p(x')-p(x)\|_2
\le L_p\cdot\sigma(x),
\end{equation*}
where $\mathrm{Bias}(x)$ denotes the unsmoothed geometric bias and $L_p$ is a local Lipschitz constant of the conditional probability map in $B_{\sigma}(z)$.
Thus, local refinement is reliable when nearby points have similar conditional probabilities. When the neighborhood mixes regions with different class probabilities, the local frequency can be biased; LAPE can then shrink the neighborhood or reduce the interpolation weight, relying more on the anchor prediction. Table~\ref{tab:oracle_smoothness_diagnostic} in Appendix \ref{appendix:Oracle_Neighborhood} supports this interpretation: LAPE is closer to the true probability in locally smooth neighborhoods, whereas the oracle gap increases when neighborhood variation is large.

}

\section{Experimental Results}\label{sec:experiments}

In this section, we present the experimental results of LAPE and other baselines on diverse datasets and neural network architectures.
We aim to demonstrate that the performance of confidence calibration of existing baselines can be enhanced by leveraging LAPE.
Codes can be found at \url{https://github.com/AltonWongsource/residual-calibration}.

\subsection{Experimental Setup}
\label{sec:protocol_fairness}
We introduce the main information about the experiments; detailed settings and additional results such as ablation studies are presented in Appendix \ref{app:experimental_details} and \ref{app:supplementary_results}, respectively.

\paragraph{Datasets.}
For the main result in Figure~\ref{fig:main_method_suite}, we evaluate our framework on four datasets that encompass both tabular and image categories. For tabular data, we consider Obesity Risk~\citep{palechor2019dataset}. For image data, we consider CIFAR-100~\citep{Krizhevsky2009LearningML} and two medical-image datasets: BreastMNIST~\citep{medmnistv2} and MHIST~\citep{wei2021petri}. We also evaluate the proposed method on many other datasets; see Appendix~\ref{app:datasets} for full descriptions.

\paragraph{Model Architectures}
For the tabular dataset, we use an MLP backbone. For medical transfer learning datasets, we use ImageNet-pretrained ResNet18 or DenseNet121 backbones. For CIFAR-100, we use ResNet50. Detailed settings are presented in Appendix \ref{app:backbone_frameworks}.

\paragraph{Methods} 
We consider four calibration baselines, including TS~\citep{guo2017calibration}, Vector Scaling (VS), FC~\citep{tao2025fc}, and SAG~\citep{yang2023sag}. We compare these methods with their adjusted version by LAPE. We named these adjusted baselines by TS+LAPE, VS+LAPE, FC+LAPE, and SAG+LAPE, respectively. Other baselines are also compared; see Appendix \ref{app:additional_baselines} for details.

\paragraph{Evaluation.}
Let $I_1,\dots,I_M$ partition the confidence interval $[0,1]$. For $i$-th test point, let $\hat y_i=\arg\max_y \hat p_y(x_i)$ and $\hat c_i=\max_y \hat p_y(x_i)$ be the predicted label and confidence, respectively. We evaluate calibration performance using two complementary metrics: \textit{Negative Log-Likelihood} (NLL)~\citep{gneiting2007strictly} and \textit{Expected Calibration Error} (ECE)~\citep{naeini2015obtaining}:
\begin{equation*}
    \mathrm{NLL} = -\frac{1}{N}\sum_{i=1}^{N}\log \hat p_{y_i}(x_i),\quad     \mathrm{ECE}
    =\sum_{m=1}^{M}\frac{|B_m|}{N}\left|\mathrm{acc}(B_m)-\mathrm{conf}(B_m)\right|,
\end{equation*}
where $B_m=\{i:\hat c_i\in I_m\}$, and
\begin{equation*}
\text{acc}(B_m)=\frac{1}{|B_m|}\sum_{i\in B_m}\mathbf{1}\{\hat y_i=y_i\},
\qquad
\text{conf}(B_m)=\frac{1}{|B_m|}\sum_{i\in B_m}\hat c_i.
\end{equation*}

Note that we report NLL since minimizing NLL is equivalent to minimizing the KL divergence to the true conditional distribution: $\mathbb{E}\!\left[-\log q(Y \mid X)\right]= \mathbb{E}\!\left[ H(P(\cdot \mid X))+\mathrm{KL}\!\left(P(\cdot \mid X)\,\|\,q(\cdot \mid X)\right)\right]$.
The definition and results for ACE~\citep{nixon2019measuring} are presented in Appendix~\ref{app:additional_metrics} and \ref{app:supplementary_results}, respectively.

\subsection{Results}
\label{sec:main_results_unified}

{\color{blue}}

\begin{figure}[H]
\centering
\includegraphics[width=\textwidth]{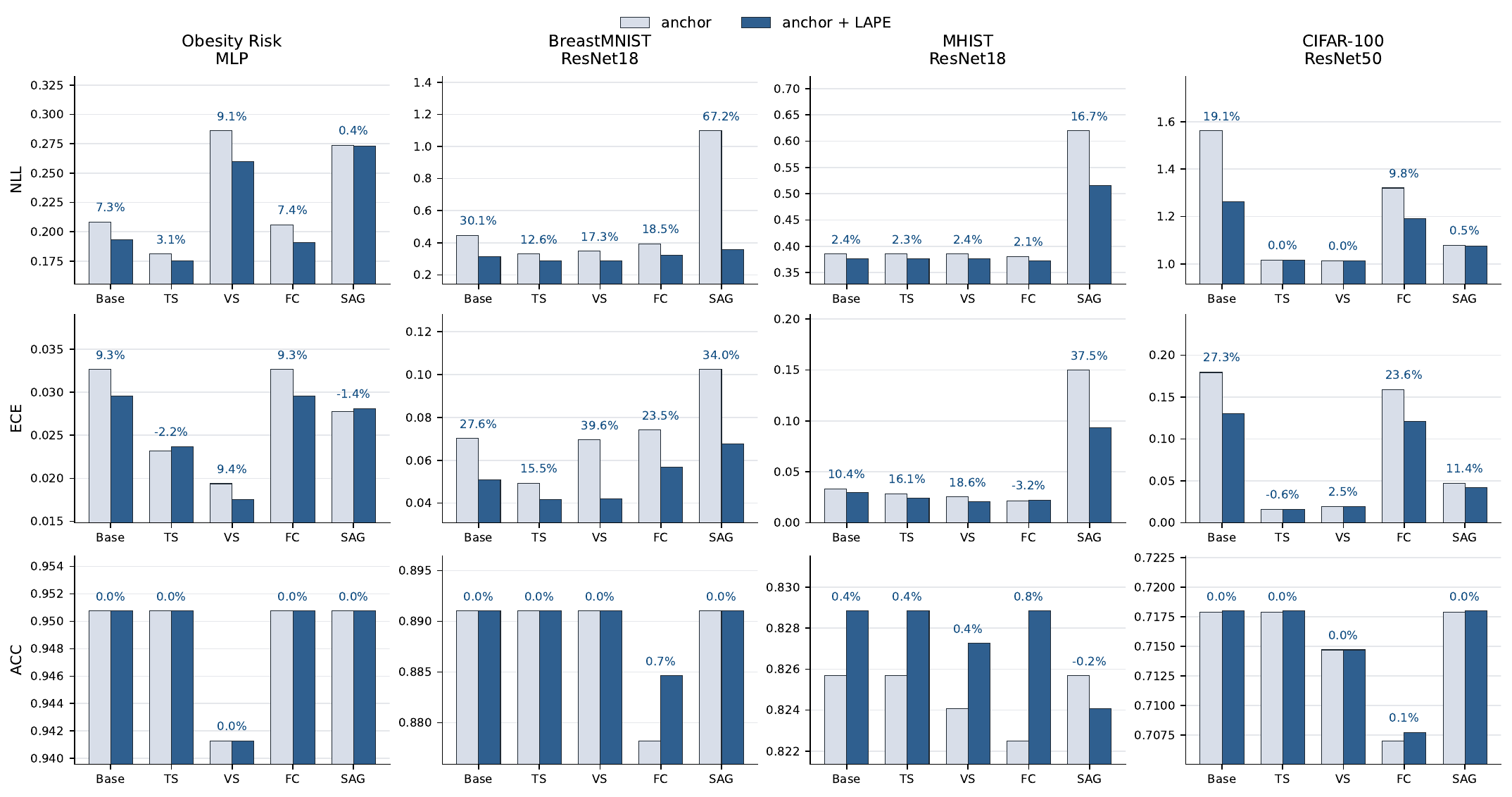}
\caption{\textbf{Paired comparison of LAPE-side refinement across calibration anchors.} Each anchor and its LAPE refinement are evaluated under the same dataset and backbone. Rows report NLL, ECE, and accuracy.
Bar annotations show relative changes, computed as \((v_{\rm anchor}-v_{\rm LAPE})/v_{\rm anchor}\times100\%\) for NLL/ECE and \((v_{\rm LAPE}-v_{\rm anchor})/v_{\rm anchor}\times100\%\) for accuracy.}
\label{fig:main_method_suite}
\end{figure}

\paragraph{LAPE improves existing post-hoc calibration methods.} Figure~\ref{fig:main_method_suite} demonstrates that LAPE can further improve a range of existing post-hoc calibration methods. Across tabular MLPs, medical-image transfer models, and natural-image backbones, adding LAPE often reduces NLL and ECE while largely preserving accuracy. This supports the role of LAPE as a residual refinement module: it keeps the global calibrator as a stable anchor and uses nearby calibration samples to correct remaining local miscalibration. The gains are not uniform across all anchors and metrics, which is expected because the benefit of local refinement depends on both the quality of the anchor and the informativeness of the learned feature geometry.

\begin{table}[H]
\centering
\caption{\textbf{Results of CFAF-LAPE and late fusion on multi-view tasks.}}
\label{tab:cfaf_main}
\small
\setlength{\tabcolsep}{4pt}
\renewcommand{\arraystretch}{1.06}
\begin{tabular}{llcccc}
\toprule
\textbf{Setting} & \textbf{Method} & \textbf{NLL} $\downarrow$ & \textbf{ECE} $\downarrow$ & \textbf{Acc.} $\uparrow$ & \textbf{Full results} \\
\midrule
Synthetic XOR views 
& Late fusion 
& 0.507 & 0.099 & 0.790
& Table~\ref{tab:cfaf_synthetic_full} \\
& CFAF-LAPE 
& \textbf{0.413} & \textbf{0.057} & \textbf{0.848}
&  \\

\midrule
Multi-view Digits 
& Late fusion 
& 0.112 & 0.029 & 0.960
& Table~\ref{tab:cfaf_digits_full} \\
& CFAF-LAPE 
& \textbf{0.092} & \textbf{0.025} & \textbf{0.964}
&  \\

\midrule
HAM10000 melanoma vs nevus 
& Late fusion 
& 0.255 & 0.023 & 0.896
& Table~\ref{tab:cfaf_ham_full} \\
& CFAF-LAPE 
& \textbf{0.224} & \textbf{0.014} & \textbf{0.907}
&  \\
\bottomrule
\end{tabular}
\end{table}

\paragraph{CFAF-LAPE improves multi-view calibration.} Table~\ref{tab:cfaf_main} examines whether local residual refinement remains useful in multi-view settings. Across all tasks, CFAF consistently improves NLL, ECE, and accuracy, and is much better than the late fusion strategy. This suggests that late fusion is insufficient if residual calibration errors depend on cross-view interactions. CFAF addresses this by forming a calibrated fusion anchor and applying local residual refinement in the rectified joint representation, without retraining an end-to-end fusion model.

\begin{figure}[H]
\centering
\includegraphics[width=\textwidth]{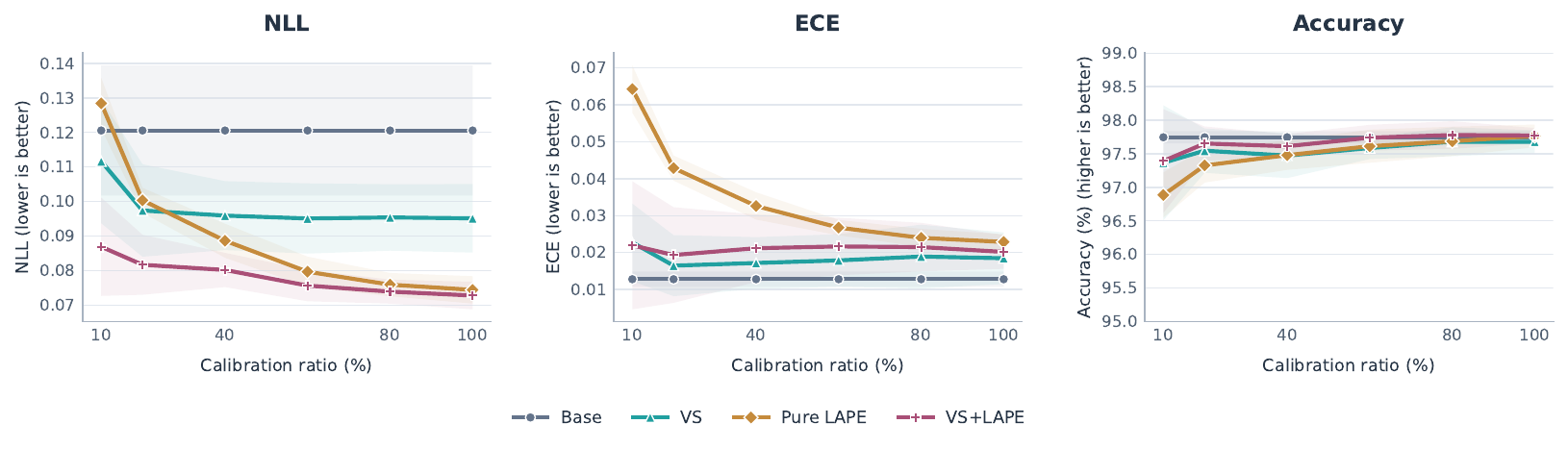}
\caption{\textbf{Sensitivity to calibration data size (calibration ratio).} NLL, ECE, and accuracy are reported, with shaded areas indicating standard deviations across five calibration split seeds.}
\label{fig:sample_ratio_sensitivity}
\end{figure}

\paragraph{Robustness to calibration-set size.}
Figure~\ref{fig:sample_ratio_sensitivity} shows that VS+LAPE lowers NLL at every ratio, even while VS still improves with more data (Table~\ref{tab:sim2_full}). As more calibration data become available, its NLL decreases further; ECE and accuracy also show favorable overall trends despite small fluctuations. These results suggest that LAPE benefits from additional calibration data but does not require a large calibration set to improve the primary NLL metric.

\paragraph{Summary of results in appendix.}

Appendix~\ref{app:supplementary_results} provides additional evidence for the advantage of LAPE.
The runtime study shows that the accelerated implementation keeps LAPE practical at inference time (Table~\ref{tab:appendix_runtime_speed}).
The heterogeneity and covariate-shift diagnostics show that LAPE is most useful when residual errors are structured in feature-space neighborhoods and when target-domain calibration samples reveal those errors; the CIFAR-10-C experiment further confirms this behavior under real-image corruption (Tables~\ref{tab:sim1_full}, \ref{tab:sim3_full}, and~\ref{tab:cifar10c_gaussian}).
The high-dimensional and structural-noise studies shows that, less reliable retrieval geometry can reduce the gains, but the refinement remains stable for most anchors (Tables~\ref{tab:sim7_full} and~\ref{tab:sim5_full_b}).
The architecture-depth diagnostic shows that LAPE benefits from usable local structure across different MLP depths, and the ablation study verifies that both adaptive neighborhood selection and anchor--local interpolation contribute to the final gains (Tables~\ref{tab:architecture_depth_full} and~\ref{tab:sim8_full}).
The CLIP and training-time calibration experiments show that LAPE remains effective beyond standard post-hoc anchors, providing further refinement after prompt tuning, DAC calibration, and calibration-aware training objectives (Tables~\ref{tab:coop_dac_lape} and~\ref{tab:train_time_lape}).
The cross-fitting experiment shows that out-of-fold anchor construction is more effective than direct reuse or simple data splitting when flexible anchors are used with limited calibration data (Table~\ref{tab:cross_fitting}).
The dynamic-fusion experiment further shows that CFAF-LAPE remains complementary to stronger learned multi-view fusion methods such as QMF and PDF (Table~\ref{tab:dynamic_fusion_lape}).
Results on more datasets and backbones further demonstrate the advantage and generality of LAPE (Tables~\ref{tab:appendix_tabular_full} and~\ref{tab:image_transfer_full}).

\section{Conclusion}
\label{sec:conclusion}

We propose LAPE, a post-hoc residual calibration framework that refines global calibrated predictions with local feature-space evidence. Instead of replacing existing calibrators, LAPE treats their outputs as stable anchors and applies a non-parametric local correction using nearby calibration samples. To reduce calibration-set overfitting while retaining data efficiency, we introduce a \(V\)-fold cross-fitting strategy for constructing out-of-sample anchor scores on the calibration set. We further propose CFAF to extend the framework to multi-view scenarios. Theoretically, we establish the asymptotic consistency of LAPE under regular assumptions. Extensive experiments demonstrate that LAPE can improve the calibration quality of diverse post-hoc baselines while remaining practical under varying calibration set sizes and distribution shifts. We expect that the proposed LAPE can serve as a basic tool towards reliable, data-efficient, and geometry-aware probability calibration.

\paragraph{Limitation.}
Although LAPE provides a flexible residual refinement framework for post-hoc calibration, its effectiveness depends on the quality of the learned feature geometry. When nearby samples in feature space do not share similar conditional label distributions, local frequency estimates may become biased and the method should rely more heavily on the global anchor. Future work may study more robust retrieval metrics.
Furthermore, extending local residual refinement to other confidence representations, such as verbalized confidence in large reasoning models \citep{wang2026pitfalls}, remains an interesting direction for future work.

\begin{ack}
This research is supported by the Zhejiang Provincial Peak Discipline Program (Statistics, Zhejiang Gongshang University), the Collaborative Innovation Center of Zhejiang Gongshang University, and the Southern University of Science and Technology.
\end{ack}

\bibliography{ref}
\bibliographystyle{plainnat}

\newpage
\appendix
\onecolumn

\section{Related Work}

\paragraph{Global score-based post-hoc calibration.}
Post-hoc calibration aims to improve the reliability of a trained classifier without retraining the base model.
Classical methods in this family include Temperature Scaling (TS), Vector Scaling (VS), Matrix Scaling (MS), Beta Calibration, Dirichlet Calibration, and ensemble or compositional variants such as Ensemble Temperature Scaling (ETS)~\citep{guo2017calibration,kull2017beyond,kull2019beyond,zhang2020mix}.
These methods are widely used because they are simple, stable, and require only a held-out calibration set.
They preserve the learned classifier and usually fit only a small number of calibration parameters, which makes them practical for modern neural networks and transfer-learning pipelines.
However, their corrections are mainly determined by logits, confidence scores, or low-dimensional transformations of the predicted probability vector.
As a result, examples with similar score patterns tend to receive similar corrections, even when they lie in different regions of the learned feature space and have different empirical reliability.
Thus, global score-based calibrators can improve average calibration but may still leave spatially structured residual errors, especially near decision boundaries, rare subpopulations, or locally mixed regions.

\paragraph{Adaptive parametric and group-wise calibration maps.}
A natural way to address the rigidity of global score transformations is to increase the expressiveness of the calibration map.
Parameterized Temperature Scaling (PTS) predicts a sample-specific temperature from the logit vector~\citep{tomani2022parameterized}, while Sample-Dependent Adaptive Temperature Scaling (AdaTS) assigns different temperatures to different inputs~\citep{joy2023sample}.
Local Temperature Scaling (LTS) further makes temperature spatially local by predicting pixel- or voxel-level temperature values for semantic segmentation~\citep{Ding_2021_ICCV}.
Semantic-Aware Grouping (SAG) follows a group-wise route: it learns a semantic grouping function from frozen deep features and logits, partitions the data into semantic subsets, and fits a separate calibrator within each group~\citep{yang2023sag}.
These methods are important steps toward instance-dependent, spatially adaptive, or group-wise calibration.
Nevertheless, their adaptivity is obtained by fitting additional calibration components, such as neural temperature predictors, local temperature fields, or semantic grouping heads.
Such extra degrees of freedom require regularization, hyperparameter selection, and sufficient held-out calibration data.
Indeed, recent studies on adaptive temperature scaling observe that highly expressive calibrators can perform well when calibration data are abundant, but may become fragile under limited calibration samples~\citep{balanya2024adaptive}.
This motivates a different route: instead of learning a richer parametric calibration map, LAPE keeps a global calibrated anchor fixed and uses held-out reference labels only as non-parametric local residual evidence.

\paragraph{Non-parametric score-level calibration.}
Non-parametric calibration has a long history in classifier probability estimation.
Histogram binning estimates empirical label frequencies within score bins, while isotonic regression learns a monotone score-to-probability map~\citep{zadrozny2001obtaining,zadrozny2002transforming}.
Bayesian Binning into Quantiles (BBQ) improves histogram binning by averaging over multiple quantile-binning models~\citep{naeini2015obtaining}.
For multiclass calibration, IROvA extends binary isotonic regression by fitting one class-wise monotone map for each one-vs-all problem, while multiclass isotonic regression, also referred to as MIR or IRM, improves data efficiency by pooling probability-label entries from all classes and learning a shared monotone map~\citep{zhang2020mix}.
Scaling-binning calibrators further combine parametric scaling with binning to balance calibration guarantees and sample efficiency~\citep{kumar2019verified}.
These methods provide flexible alternatives to global parametric maps, but their flexibility is still mainly defined over scalar scores, confidence bins, class-wise probabilities, or probability-vector entries.
They do not directly exploit the geometry of the learned feature space.
Moreover, when used as stand-alone local or partition-based probability estimators, they can be data-inefficient in finite calibration samples and may discard useful information already encoded in the neural network logits.

\paragraph{Position of LAPE.}
LAPE combines the stability of global post-hoc calibration with the flexibility of local non-parametric evidence.
Unlike global score transformations, it can distinguish examples with similar logits but different feature-space neighborhoods.
Unlike adaptive parametric or group-wise calibrators, it does not learn a new temperature predictor, spatial temperature map, or semantic partition.
Unlike score-level non-parametric methods, its local evidence is defined by neighboring reference samples in the learned feature space rather than by score bins or probability entries.
The local empirical distribution is therefore used only as a residual correction to a stable global anchor, not as a replacement for the classifier output.
Under regularity conditions, this anchor-preserving local refinement yields a consistent estimator of the conditional class probability.

Table~\ref{tab:posthoc_calibration_taxonomy} summarizes the main families of confidence calibration methods considered in this work and highlights the methodological position of LAPE.

\begin{table}[t]
\centering
\footnotesize
\caption{Main families of confidence calibration methods and the position of LAPE. 
The comparison focuses on how each method family obtains calibration adaptivity and whether it directly uses feature-space reference-label evidence.}
\label{tab:posthoc_calibration_taxonomy}
\renewcommand{\arraystretch}{1.15}
\setlength{\tabcolsep}{3pt}
\begin{tabularx}{\textwidth}{
>{\raggedright\arraybackslash}p{0.19\textwidth}
>{\raggedright\arraybackslash}p{0.25\textwidth}
>{\raggedright\arraybackslash}p{0.24\textwidth}
>{\raggedright\arraybackslash}X}
\toprule
\textbf{Method family}
& \textbf{Examples}
& \textbf{Main idea}
& \textbf{Main limitation / difference from LAPE} \\
\midrule
Global score/logit maps
& TS, VS, MS, Beta, Dirichlet, ETS~\citep{guo2017calibration,kull2017beyond,kull2019beyond,zhang2020mix}
& Fit a global transformation on logits, confidence scores, or probability vectors.
& Stable but mainly score-level; similar score patterns receive similar corrections even when feature-space neighborhoods have different empirical reliability. \\
\midrule
Adaptive parametric or group-wise maps
& PTS, AdaTS, LTS, SAG~\citep{tomani2022parameterized,joy2023sample,Ding_2021_ICCV,yang2023sag}
& Learn sample-specific temperatures, local temperature fields, or semantic group-wise calibrators.
& More adaptive, but introduces extra fitted components such as temperature predictors or grouping heads; expressive calibrators may be fragile with limited calibration data~\citep{balanya2024adaptive}. \\
\midrule
Score-level non-parametric calibration
& Histogram binning, isotonic regression, BBQ, IROvA, MIR/IRM, scaling-binning~\citep{zadrozny2001obtaining,zadrozny2002transforming,naeini2015obtaining,zhang2020mix,kumar2019verified}
& Estimate flexible calibration maps from score bins, monotone score transformations, or probability-vector entries.
& Non-parametric but still mainly score/probability-level; does not directly exploit learned feature geometry and can be data-inefficient in finite calibration samples. \\
\midrule
LAPE
& Pure LAPE; TS+LAPE; VS+LAPE; Dir+LAPE; FC+LAPE; SAG+LAPE
& Preserve a global calibrated anchor and refine it with local reference-label evidence in feature space.
& Uses non-parametric local evidence only as a residual correction, rather than learning a richer calibration map or replacing the classifier output. \\
\bottomrule
\end{tabularx}
\end{table}

\section{Technical Proofs}\label{proofs}

In this section, we provide the necessary assumptions and the proof of Theorem 5.1.

\subsection{Notations}

We use the notation introduced in Sections 2 and 3. In particular,
$\mathcal{D}_{\mathrm{cal}}=\{(x_i,y_i)\}_{i=1}^n$ is the calibration set,
$z_i=\phi(x_i)$ and $z=\phi(x)$ are the feature embeddings,
$\mathcal{N}_m(x)$ is the $m$-nearest-neighbor set in the feature space,
$\tilde p_0(x,m)$ is the smoothed local empirical probability vector,
$\pi^{\mathrm{anchor}}(x)$ is the anchor probability vector, and
$\hat p_n(x)$ denotes the final LAPE estimator constructed from a calibration
set of size $n$.

For the proof, we make the following $n$-dependent notation explicit. Let
$P_X$ denote the marginal distribution of $X$. For the retrieval embedding
$Z=\phi(X)$, let $P_Z$ and $p_Z$ denote its marginal distribution and density,
respectively, and let $\mathcal{Z}\subset\mathbb{R}^{d_z}$ denote the support
of $Z$. Let
\[
d_i(x):=\|z_i-z\|_2,
\]
and let $d_{(r)}(x)$ denote the $r$-th smallest value among
$\{d_i(x)\}_{i=1}^n$. We write
\[
m_{\sigma,n}:=\left\lfloor\frac{\sqrt n}{2}\right\rfloor,
\qquad
\sigma_n:=d_{(m_{\sigma,n})}(x).
\]
Let $\mathcal{M}_n$ and $\mathcal{A}_n$ denote the candidate grids for the
neighborhood size and interpolation weight, respectively, and write
\[
m_{\min,n}:=\min\mathcal{M}_n.
\]
The selected candidate is denoted by
\[
(m_n^*,\alpha_n^*)
\in
\arg\min_{m\in\mathcal{M}_n,\ \alpha\in\mathcal{A}_n}
\mathcal{J}_n(x;m,\alpha),
\]
where $\mathcal{J}_n(x;m,\alpha)$ is the local weighted leave-one-out negative
log-likelihood. Thus,
\[
\hat p_n(x)=\hat p_n(x;m_n^*,\alpha_n^*),
\]
where $\hat p_n(x;m,\alpha)$ denotes the corresponding LAPE candidate before
inserting the selected parameters.

\begin{definition}[Optimal Fused Bias]
\label{def:fused_bias}
For a fixed test point $x$ and candidate neighborhood size $m$, define the
conditional mean of the local empirical probability vector by
\[
\bar p_0(x,m):=\mathbb{E}\{\tilde p_0(x,m)\mid z_1,\ldots,z_n\}.
\]
The optimal fused bias of LAPE is
\[
\operatorname{Bias}_{\mathrm{LAPE}}(x;m)
:=
\min_{\alpha\in[0,1]}
\left\|
(1-\alpha)\pi^{\mathrm{anchor}}(x)
+
\alpha\bar p_0(x,m)
-
p(x)
\right\|_2 .
\]
The smoothing term in $\tilde p_0(x,m)$ is kept in the definition, but it is
asymptotically negligible when $m\to\infty$.
\end{definition}

\subsection{Assumptions}\label{assump}

\begin{assumption}[Regularity of Probability Space]
\label{ass:reg_prob}
For every $x\in\mathcal{X}$, the conditional distribution
$\mathbb{P}(Y\mid X=x)$ is well-defined, and $p(x)\in\Delta^{K-1}$. Moreover,
the feature representation used for retrieval is compatible with the target
conditional probability: there exists a measurable map
$p_{\phi}:\mathcal{Z}\to\Delta^{K-1}$ such that
\[
p(x)=p_{\phi}(\phi(x))
\]
for $P_X$-almost every $x$.
\end{assumption}

\begin{assumption}[Compact Support and Regularity]
\label{ass:compact_supp}
The retrieval support $\mathcal{Z}$ is compact, and the density $p_Z$ is
bounded away from zero and infinity on $\mathcal{Z}$:
\[
0<C_{\min}\le p_Z(z)\le C_{\max}<\infty,
\qquad z\in\mathcal{Z}.
\]
Moreover, $\mathcal{Z}$ satisfies a standard local volume regularity condition:
there exist constants $0<c_{\mathrm{vol}}\le C_{\mathrm{vol}}<\infty$ and
$r_0>0$ such that, for every $z\in\mathcal{Z}$ and every $0<r\le r_0$,
\[
c_{\mathrm{vol}}r^{d_z}
\le
\operatorname{Vol}\bigl(B(z,r)\cap\mathcal{Z}\bigr)
\le
C_{\mathrm{vol}}r^{d_z}.
\]
\end{assumption}

\begin{assumption}[Lipschitz Continuity in the Retrieval Space]
\label{ass:lipschitz_data}
The marginal density $p_Z(z)$ and the conditional probability map
$p_{\phi}(z)$ are Lipschitz continuous on $\mathcal{Z}$. That is, there exist constants $L_Z<\infty$ and $L_p<\infty$ such that, for all
$z,z'\in\mathcal{Z}$,
\[
|p_Z(z)-p_Z(z')|\le L_Z\|z-z'\|_2,
\]
\[
\|p_{\phi}(z)-p_{\phi}(z')\|_2\le L_p\|z-z'\|_2.
\]
\end{assumption}

\begin{assumption}[Lipschitz Continuity of the Anchor Predictor]
\label{ass:lipschitz_base}
The anchor probability map $\pi^{\mathrm{anchor}}$ is Lipschitz continuous in the retrieval space. Specifically, there exists a constant
$L_{\mathrm{anchor}}<\infty$ such that, for all $x,x'\in\mathcal{X}$,
\[
\|\pi^{\mathrm{anchor}}(x)-\pi^{\mathrm{anchor}}(x')\|_2
\le
L_{\mathrm{anchor}}\|\phi(x)-\phi(x')\|_2.
\]
Throughout the consistency proof, the anchor predictor is treated
as a fixed function independent of the calibration labels.
\end{assumption}

\begin{assumption}[Bounded Negative Log-Likelihood]
\label{ass:bounded_loss}
The local objective uses the negative log-likelihood $\mathcal{L}(y,q)=-\log q_y$.
Assume that there exists a constant
$\delta\in(0,1/(2K))$ such that $p_{\phi,k}(z)\geq 2\delta$ for every $z\in\mathcal{Z}$ and $k\in[K]$, and the fixed anchor predictor satisfies
\[
\pi^{\mathrm{anchor}}_k(x)\geq 2\delta,
\qquad x\in\mathcal{X},\quad k\in[K].
\]

Define the truncated log-loss $\mathcal{L}_{\delta}(y,q)
=-\log(\max\{q_y,\delta\})$.
Then
\[
0\leq\mathcal{L}_{\delta}(y,q)
\leq C_{\delta}:=-\log\delta<\infty.
\]
The truncated and ordinary log-losses coincide whenever
$q_y\geq\delta$.
\end{assumption}

\begin{assumption}[Candidate Neighborhood Grid]
\label{ass:candidate_grid}
Let
\[
m_{\min,n}:=\min\mathcal{M}_n,
\qquad
m_{\max,n}:=\max\mathcal{M}_n .
\]
The candidate neighborhood grid satisfies
\[
m_{\min,n}\to\infty,
\qquad
\log n=o(m_{\min,n}),
\qquad
m_{\max,n}\le m_{\sigma,n}.
\]
The interpolation grid satisfies $1\in\mathcal{A}_n$,
and the total grid size grows subexponentially in the smallest candidate
neighborhood size:
\[
\log(|\mathcal{M}_n||\mathcal{A}_n|)=o(m_{\min,n}).
\]
Moreover, there exists $m_n^\circ\in\mathcal{M}_n$ such that
\[
m_n^\circ\to\infty,
\qquad
\frac{m_n^\circ}{n}\to0.
\]
Thus the optimization can compare against a consistent pure-local candidate.
The practical grid used in the main algorithm satisfies these conditions.
\end{assumption}

\subsection{Preliminary Lemmas}
We introduce adaptive slices in the retrieval space.
We establish Lemma \ref{lem:geom_consist}, which ensures that the number of calibration embeddings inside each slice concentrates around its expectation. This prevents large geometric fluctuations and provides a stability prerequisite for the local empirical frequency estimator.

For the proof, let $a_n$ be an auxiliary slice-count scale satisfying
\[
a_n\to\infty,
\qquad
\frac{a_n}{m_{\min,n}}\to0,
\qquad
\log n=o(a_n).
\]
Such a sequence exists for the practical grid used in the main algorithm,
where $m_{\min,n}\asymp\sqrt n$; for example, one may take
$a_n=n^{1/4}$. This sequence is used only to define the proof slices and is
not an algorithmic tuning parameter.

\begin{lemma}[Geometric Consistency on Adaptive Slices]
\label{lem:geom_consist}
Fix a test point $x$ and write $z=\phi(x)$. Consider a partition of a
neighborhood of $z$ in the retrieval space into a finite set of disjoint slices
$\{S_j\}_{j=0}^{T_n-1}$. Let the probability mass of each slice under the
marginal distribution of $Z=\phi(X)$ satisfy
\[
c_1\frac{a_n}{n}
\le
P_Z(S_j)
\le
c_2\frac{a_n}{n},
\qquad j=0,\ldots,T_n-1,
\]
for some constants $c_1,c_2>0$. Define $\lambda_j:=nP_Z(S_j)$.
Then
\[
\lambda_j\asymp a_n,
\qquad
\lambda_j\to\infty,
\qquad
\frac{\lambda_j}{m_{\min,n}}\to0,
\]
where $m_{\min,n}:=\min\mathcal{M}_n$ is the smallest candidate neighborhood
size. Let $n_j=\sum_{i=1}^{n}\mathbb{I}(z_i\in S_j)$ be the empirical count in slice $S_j$, and let $\lambda_{\min,n}:=\min_{0\le j<T_n}\lambda_j$.
Then, for any $0<\delta<1$,
\[
\mathbb{P}\left(\max_{0\le j<T_n}|n_j-\lambda_j|>\delta\lambda_j\right)
\le
2T_n\exp\{-c\lambda_{\min,n}\delta^2\}
\]
for a universal constant $c>0$. In particular, for the adaptive slices
constructed in Lemma \ref{lem:adapt_slice}, we have
$T_n=O(n/a_n)$, and therefore the failure probability converges to zero.
\end{lemma}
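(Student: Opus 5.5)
The proof is a standard Binomial concentration argument followed by a union bound over the slices.

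\textbf{Step 1: deterministic scaling of $\lambda_j$.} By the mass condition, $c_1 a_n\le \lambda_j=nP_Z(S_j)\le c_2 a_n$, so $\lambda_j\asymp a_n$. The choice of the auxiliary scale gives $a_n\to\infty$ and $a_n/m_{\min,n}\to 0$. Hence $\lambda_j\to\infty$ and $\lambda_j/m_{\min,n}\le c_2a_n/m_{\min,n}\to0$, uniformly in $j$.

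\textbf{Step 2: one slice.} Fix a slice $S_j$. Since the calibration embeddings $z_1,\dots,z_n$ are i.i.d. from $P_Z$, and the slices are determined by $x$ and the population geometry rather than by the sample, we have $n_j\sim\mathrm{Binomial}(n,P_Z(S_j))$ with mean $\lambda_j$. The multiplicative Chernoff bound gives, for $0<\delta<1$,
\[
\mathbb{P}(n_j>(1+\delta)\lambda_j)\le \exp\{-\lambda_j\delta^2/3\},
\qquad
\mathbb{P}(n_j<(1-\delta)\lambda_j)\le \exp\{-\lambda_j\delta^2/2\}.
\]
Combining the two tails yields $\mathbb{P}(|n_j-\lambda_j|>\delta\lambda_j)\le 2\exp\{-\lambda_j\delta^2/3\}$. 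Bernstein's inequality would work equally well, using $\mathrm{Var}(n_j)\le\lambda_j$ and centred summands bounded by $1$.

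\textbf{Step 3: all slices.} A union bound over $j=0,\dots,T_n-1$, together with $\lambda_j\ge\lambda_{\min,n}$, gives the stated bound $2T_n\exp\{-c\lambda_{\min,n}\delta^2\}$ with $c=1/3$.

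\textbf{Step 4: the failure probability vanishes.} For the slices of Lemma~\ref{lem:adapt_slice}, disjointness and the lower mass bound give
\[
T_n\cdot c_1a_n/n\le \sum_j P_Z(S_j)\le 1,
\]
so $T_n\le n/(c_1a_n)=O(n/a_n)$. Moreover $\lambda_{\min,n}\ge c_1a_n$. The bound is therefore at most
\[
\frac{2n}{c_1a_n}\exp\{-cc_1\delta^2a_n\}
=\exp\{\log n-\log a_n+O(1)-cc_1\delta^2a_n\}.
\]
This tends to $0$ because $\log n=o(a_n)$, for each fixed $\delta$.

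\textbf{Main obstacle.} The delicate point is the independence in Step 2. If the slices are built adaptively from $z$ and population quantities only, the Binomial law holds exactly. If instead their boundaries were built from the sample (for example through $\sigma_n$), I would handle it by conditioning on the deterministic population construction of Lemma~\ref{lem:adapt_slice}, or by taking a union over a polynomial-size family of candidate slices. That union costs only a $\log n$ factor, which the condition $\log n=o(a_n)$ again absorbs.
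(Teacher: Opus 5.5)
Your proof is correct and follows essentially the same route as the paper: the exact Binomial law for each slice count, the two-sided multiplicative Chernoff bound with $c=1/3$, a union bound over the $T_n$ slices, and the condition $\log n=o(a_n)$ to absorb the $T_n=O(n/a_n)$ prefactor. Your derivation of $T_n\le n/(c_1a_n)$ directly from disjointness and the lower mass bound is a slightly more explicit justification than the paper's appeal to the slice construction, but the argument is the same.
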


\begin{proof}
For each slice $S_j$, the count $n_j$ follows a binomial distribution
$\operatorname{Bin}(n,P_Z(S_j))$ with mean $\lambda_j=nP_Z(S_j)$. By the
multiplicative Chernoff bound, for any $0<\delta<1$,
\[
\mathbb{P}(|n_j-\lambda_j|>\delta\lambda_j)
\le
2\exp\left(-\frac{\delta^2\lambda_j}{3}\right).
\]
Taking a union bound over all $T_n$ slices gives
\[
\mathbb{P}\left(\max_{0\le j<T_n}|n_j-\lambda_j|>\delta\lambda_j\right)
\le
2\sum_{j=0}^{T_n-1}\exp\left(-\frac{\delta^2\lambda_j}{3}\right)
\le
2T_n\exp\left(-\frac{\delta^2\lambda_{\min,n}}{3}\right).
\]
Since $\lambda_{\min,n}\asymp a_n$ and $a_n\to\infty$, the exponential term
vanishes. Moreover, the adaptive construction in Lemma \ref{lem:adapt_slice}
uses slices with probability mass of order $a_n/n$, so $T_n=O(n/a_n)$ and
$\log T_n=O(\log n)$. Because $\log n=o(a_n)$, the union-bound failure
probability converges to zero. Finally,
\[
\frac{\lambda_j}{m_{\min,n}}
\asymp
\frac{a_n}{m_{\min,n}}
\to0,
\]
so a single adaptive slice is asymptotically negligible relative to any
admissible candidate neighborhood size. This property will be used when
controlling partial boundary slices.
\end{proof}

We establish the next lemma. This lemma ensures that the true conditional probability remains stable inside the region bounded by two cut points. Hence, the variation within each slice is controlled and the local expectation can be approximated by its cut-point value.

\begin{lemma}[Adaptive Slicing via Probability Mass]
\label{lem:adapt_slice}
Fix a test point $x$ and write $z=\phi(x)$. For all sufficiently large $n$,
there exists a sequence of radii
\[
0=r_0<r_1<\cdots<r_{T_n}=R_{\max},
\qquad
R_{\max}:=\sup_{u\in\mathcal{Z}}\|u-z\|_2,
\]
in the retrieval space that generates the slices required by Lemma
\ref{lem:geom_consist}. Specifically, with
\[
B_z(r):=\{u\in\mathcal{Z}:\|u-z\|_2<r\},
\qquad
S_j:=B_z(r_{j+1})\setminus B_z(r_j),
\]
the slices satisfy
\[
c_1\frac{a_n}{n}
\le
P_Z(S_j)
\le
c_2\frac{a_n}{n},
\qquad j=0,\ldots,T_n-1,
\]
for some constants $c_1,c_2>0$. Moreover, the number of slices satisfies
$T_n=O(n/a_n)$.

Define the theoretical local label-frequency vector up to radius $r>0$ by
\[
\mu(r):=\mathbb{E}\{e_Y\mid Z\in B_z(r)\}
=\frac{A(r)}{V(r)},
\]
where
\[
e_Y=(\mathbb{I}(Y=1),\ldots,\mathbb{I}(Y=K)),
\]
and
\[
V(r):=\int_{B_z(r)}p_Z(u)\,du,
\qquad
A(r):=\int_{B_z(r)}p_{\phi}(u)p_Z(u)\,du.
\]
For notational convenience, set $\mu(0):=p_{\phi}(z)$.
Under Assumptions \ref{ass:compact_supp} and \ref{ass:lipschitz_data}, for sufficiently large $n$, the variation of $\mu(r)$ inside each slice satisfies
\[
\sup_{r,r'\in[r_j,r_{j+1}]}
\|\mu(r)-\mu(r')\|_2
\le
C_{\mathrm{slice}}
\left(\frac{a_n}{n}\right)^{1/d_z},
\]
where $C_{\mathrm{slice}}<\infty$ depends only on the density bounds, the Lipschitz constants, and the local volume constants of the retrieval space.
In particular, this variation vanishes as $n\to\infty$.
\end{lemma}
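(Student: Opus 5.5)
The plan has two parts: first construct the radii, then bound the variation of $\mu$ inside a slice.

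\textbf{Construction of the radii.} I will use the continuity of $V(r)=P_Z(B_z(r))$ in $r$. Since $p_Z\le C_{\max}$, spheres have zero Lebesgue measure, so $V$ is continuous and nondecreasing, with $V(0)=0$ and $V(R_{\max})=1$. Set $N_n:=\lfloor n/a_n\rfloor$ and define the quantile radii
\[
r_j:=\inf\{r\ge0: V(r)\ge j/N_n\},\qquad j=0,\ldots,N_n .
\]
Continuity gives $V(r_j)=j/N_n$, so $P_Z(S_j)=1/N_n\in[a_n/n,\,2a_n/n]$ for large $n$. Merge any repeated radii; this can only occur on flat stretches of $V$, where the slice masses are unaffected. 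The result is $T_n=N_n=O(n/a_n)$ slices, with $c_1=1$ and $c_2=2$, which are exactly the slices required by Lemma~\ref{lem:geom_consist}.

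\textbf{Variation bound, small radii.} The plan is to control $\|\mu(r)-\mu(r')\|_2$ for $r_j\le r<r'\le r_{j+1}$. I will separate two regimes according to whether the radius is below or above a fixed small $\rho\le r_0$.

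For $r\le\rho$, averaging gives $\|\mu(r)-p_\phi(z)\|_2\le L_p r$. This alone is not enough, so I will also use the algebraic identity
\[
\mu(r')-\mu(r)=\frac{A(r')-A(r)-\mu(r)\{V(r')-V(r)\}}{V(r')}
=\frac{1}{V(r')}\int_{B_z(r')\setminus B_z(r)}\{p_\phi(u)-\mu(r)\}p_Z(u)\,du .
\]
The integrand satisfies $\|p_\phi(u)-\mu(r)\|_2\le 2L_p r'$, because both $u$ and the support of $\mu(r)$ lie in $B_z(r')$. Hence
\[
\|\mu(r')-\mu(r)\|_2\le 2L_p r'\,\frac{P_Z(S_j)}{V(r')}.
\]
The density and volume bounds give $V(r')\ge C_{\min}c_{\mathrm{vol}}r'^{d_z}$. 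Two sub-cases follow.

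\emph{Case $j=0$ (innermost slice).} Here $V(r_1)\asymp a_n/n$, so $r_1\lesssim (a_n/n)^{1/d_z}$. The direct bound $\|\mu(r)-\mu(r')\|_2\le 2L_p r_1$ then suffices.

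\emph{Case $j\ge1$.} The identity gives a bound of order $r'\cdot(a_n/n)/r'^{d_z}=(a_n/n)\,r'^{1-d_z}$. Using $r'\ge r_1\gtrsim (a_n/n)^{1/d_z}$ (upper volume bound with $C_{\max}$), this is at most $(a_n/n)^{1/d_z}$ when $d_z\ge1$.

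\textbf{Variation bound, large radii.} For $r\ge\rho$, $V(r)\ge V(\rho)>0$ is a fixed positive constant. The identity then gives $\|\mu(r')-\mu(r)\|_2\le 2P_Z(S_j)/V(\rho)=O(a_n/n)$, which is at most $(a_n/n)^{1/d_z}$ for large $n$.

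\textbf{Main obstacle.} The step I expect to be delicate is this uniform treatment of the small-radius regime. There the denominator $V(r')$ shrinks, and the gain must come from balancing $r'^{1-d_z}$ against the lower bound on $r_1$. That balance uses both sides of the local volume regularity together with the density bounds. Combining the regimes yields the claim with $C_{\mathrm{slice}}$ depending only on $C_{\min},C_{\max},L_p,c_{\mathrm{vol}},C_{\mathrm{vol}}$ (and $\rho$).
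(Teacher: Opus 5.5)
Your proposal is correct and follows essentially the same route as the paper. Both use mass-quantile cut points; your equal-mass choice $1/N_n$ with $N_n=\lfloor n/a_n\rfloor$ even removes the paper's residual-slice merge. Both use the identity $\mu(r')-\mu(r)=V(r')^{-1}\int_{B_z(r')\setminus B_z(r)}\{p_\phi(u)-\mu(r)\}p_Z(u)\,du$, an $O(a_n/n)$ bound once $V(r')$ is bounded below, and a small-radius case where $r'^{1-d_z}$ is balanced against the scale $(a_n/n)^{1/d_z}$.

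The only real difference is where the small-radius case is split. The paper splits at $r'\le h_n:=(a_n/n)^{1/d_z}$ (using $\Delta V/V(r')\le 1$ there) rather than at $j=0$ versus $j\ge 1$, so it needs only the lower volume bound, whereas you also invoke the upper one to lower-bound $r_1$. Note also that the large/small split should be made on $r'$, the radius in the denominator, so that the mixed case $r<\rho\le r'$ is covered by your large-radius bound.
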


\begin{proof}
Let
\[
V(r)=P_Z(B_z(r))=\int_{B_z(r)}p_Z(u)\,du.
\]
Since $p_Z$ is a density bounded on the compact retrieval space, $V(r)$ is a
continuous nondecreasing function of $r$. We choose the cut points $r_j$ as
probability-mass quantiles, so that the probability increment
\[
V(r_{j+1})-V(r_j)=P_Z(S_j)
\]
is of order $a_n/n$. If the final residual slice has probability mass smaller
than the target order, we merge it with the preceding slice. After this
adjustment, all slices satisfy
\[
c_1\frac{a_n}{n}
\le
P_Z(S_j)
\le
c_2\frac{a_n}{n}
\]
for fixed constants $c_1,c_2>0$, and the number of slices is
$T_n=O(n/a_n)$.

It remains to control the variation of the conditional label-frequency vector.
Write
\[
t_n:=\frac{a_n}{n},
\qquad
h_n:=t_n^{1/d_z}.
\]
By Assumption \ref{ass:compact_supp}, there exist constants
$v_-,v_+>0$ and $r_*>0$ such that
\[
v_-r^{d_z}\le V(r)\le v_+r^{d_z},
\qquad 0<r\le r_*.
\]
Fix any $0\le r\le r'$ within the same slice. By construction,
\[
0\le\Delta V:=V(r')-V(r)\le c_2t_n.
\]
If $\Delta V=0$, then $\mu(r')=\mu(r)$. Otherwise, define
\[
\nu(r,r')
:=
\frac{A(r')-A(r)}{\Delta V}.
\]
Writing $\Delta A:=A(r')-A(r)$, we obtain
\[
\mu(r')-\mu(r)
=
\frac{\Delta V}{V(r')}
\bigl(\nu(r,r')-\mu(r)\bigr).
\]
This identity also holds for $r=0$ under the convention
$\mu(0)=p_\phi(z)$.
First suppose $r'\ge r_*$. Since $V(r')\ge V(r_*)>0$
and both $\nu(r,r')$ and $\mu(r)$ are probability vectors,
\[
\|\mu(r')-\mu(r)\|_2
\le
\frac{2\Delta V}{V(r')}
\le
\frac{2c_2t_n}{V(r_*)}
=O(h_n).
\]

Now suppose $r'<r_*$. By Assumption
\ref{ass:lipschitz_data}, both $\nu(r,r')$ and
$\mu(r)$ differ from $p_\phi(z)$ by at most
$L_pr'$. Therefore,
\[
\|\nu(r,r')-\mu(r)\|_2
\le 2L_pr'.
\]

If $r'\le h_n$, then
\[
\|\mu(r')-\mu(r)\|_2
\le 2L_pr'
\le 2L_ph_n.
\]

If $h_n<r'<r_*$, the lower volume bound gives
\[
V(r')\ge v_-(r')^{d_z}.
\]
Consequently,
\[
\|\mu(r')-\mu(r)\|_2
\le
\frac{\Delta V}{V(r')}
\,2L_pr'
\le
\frac{2L_pc_2}{v_-}
\frac{t_n}{(r')^{d_z-1}}
\le
\frac{2L_pc_2}{v_-}
\frac{t_n}{h_n^{d_z-1}}
=
\frac{2L_pc_2}{v_-}h_n.
\]
Combining the cases yields
\[
\sup_{r,r'\in[r_j,r_{j+1}]}
\|\mu(r)-\mu(r')\|_2
\le
C_{\mathrm{slice}}
\left(\frac{a_n}{n}\right)^{1/d_z}
\]
uniformly over all slices.
\end{proof}

The following lemma establishes that the local empirical frequency is anchored to the corresponding theoretical expectation at every admissible cut point.

\begin{lemma}[Stability of Empirical Frequency at Cut Points]
\label{lem:freq_stab}
For a cut point $r_\ell$, let
\[
\mathcal{I}_\ell
:=
\{i\in[n]: z_i\in B_z(r_\ell)\}
=
\{i\in[n]: z_i\in S_0\cup\cdots\cup S_{\ell-1}\},
\]
and let $n_\ell^{\mathrm{cut}}:=|\mathcal{I}_\ell|$. Define the empirical
label-frequency vector aggregated up to the cut point $r_\ell$ by
\[
\hat p_{\mathrm{slice}}(r_\ell)
:=
\frac{1}{n_\ell^{\mathrm{cut}}}
\sum_{i\in\mathcal{I}_\ell} e_{y_i},
\qquad
e_{y_i}:=(\mathbb{I}(y_i=1),\ldots,\mathbb{I}(y_i=K)).
\]
Let $\mu(r_\ell)$ be the theoretical local label-frequency vector defined in
Lemma \ref{lem:adapt_slice}. Under the geometric stability event from Lemma
\ref{lem:geom_consist}, for any $\delta>0$,
\[
\mathbb{P}\left(
\max_{\ell:\, n_\ell^{\mathrm{cut}}\ge m_{\min,n}}
\|\hat p_{\mathrm{slice}}(r_\ell)-\mu(r_\ell)\|_2>\delta
\right)\to0 .
\]
Equivalently, the aggregated empirical frequency at every admissible cut point
converges uniformly to its corresponding theoretical local expectation.
\end{lemma}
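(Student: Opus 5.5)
The plan is to condition on the embeddings $z_1,\ldots,z_n$, which fixes the index sets $\mathcal{I}_\ell$ and the counts $n_\ell^{\mathrm{cut}}$. Under the i.i.d.\ assumption and Assumption~\ref{ass:reg_prob}, the labels are then conditionally independent with $\mathbb{P}(y_i=k\mid z_1,\ldots,z_n)=p_{\phi,k}(z_i)$. For each admissible cut point, the error splits as
\[
\hat p_{\mathrm{slice}}(r_\ell)-\mu(r_\ell)
=
\Bigl(\hat p_{\mathrm{slice}}(r_\ell)-\bar\mu_\ell\Bigr)
+
\Bigl(\bar\mu_\ell-\mu(r_\ell)\Bigr),
\qquad
\bar\mu_\ell:=\frac{1}{n_\ell^{\mathrm{cut}}}\sum_{i\in\mathcal{I}_\ell}p_\phi(z_i).
\]
The first term is a conditional noise term. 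The second is a design term: the empirical average of $p_\phi$ over $B_z(r_\ell)$ compared with its population conditional mean.

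\textbf{Noise term.} Conditionally, each coordinate of $\hat p_{\mathrm{slice}}(r_\ell)-\bar\mu_\ell$ is an average of $n_\ell^{\mathrm{cut}}$ independent centered variables bounded in $[-1,1]$. Hoeffding's inequality therefore gives
\[
\mathbb{P}\bigl(\|\hat p_{\mathrm{slice}}(r_\ell)-\bar\mu_\ell\|_2>\delta/2 \,\big|\, z_{1:n}\bigr)\le 2K\exp\{-c\,n_\ell^{\mathrm{cut}}\delta^2/K\}.
\]
On admissible cut points $n_\ell^{\mathrm{cut}}\ge m_{\min,n}$. A union bound over at most $T_n=O(n/a_n)$ cut points gives a total of $O(nK)\exp\{-c\,m_{\min,n}\delta^2/K\}$. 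This tends to zero because $\log n=o(m_{\min,n})$ by Assumption~\ref{ass:candidate_grid}. Taking expectations over $z_{1:n}$ removes the conditioning.

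\textbf{Design term.} Conditionally on the count $N=n_\ell^{\mathrm{cut}}$, the points $z_i$ with $i\in\mathcal{I}_\ell$ are i.i.d.\ from $P_Z(\cdot\mid B_z(r_\ell))$. Under that law $p_\phi(z_i)$ is a bounded vector with mean exactly $A(r_\ell)/V(r_\ell)=\mu(r_\ell)$. The same Hoeffding-plus-union-bound argument then gives $\max_\ell\|\bar\mu_\ell-\mu(r_\ell)\|_2\le\delta/2$ with probability tending to one, again using $N\ge m_{\min,n}$ and $\log T_n=O(\log n)=o(m_{\min,n})$. Alternatively, for small radii one can use Lipschitz continuity: both $\bar\mu_\ell$ and $\mu(r_\ell)$ lie within $L_p r_\ell$ of $p_\phi(z)$. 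That bypass is not needed, though.

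\textbf{Main obstacle.} The delicate point is that admissibility, $n_\ell^{\mathrm{cut}}\ge m_{\min,n}$, is a random, data-dependent condition. Restricting to it must not bias the conditional argument. The clean way is to bound, for each fixed $\ell$, the probability
\[
\mathbb{P}\bigl(n_\ell^{\mathrm{cut}}\ge m_{\min,n},\ \|\cdot\|>\delta\bigr)\le \mathbb{E}\Bigl[\mathbb{I}(n_\ell^{\mathrm{cut}}\ge m_{\min,n})\,\mathbb{P}(\|\cdot\|>\delta\mid z_{1:n})\Bigr],
\]
and bound the inner conditional probability uniformly by the exponential estimate. The union over $\ell$ is then taken afterwards.

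The geometric stability event of Lemma~\ref{lem:geom_consist} enters here. It guarantees $n_\ell^{\mathrm{cut}}\asymp \ell a_n$, so admissible cut points are exactly those with $\ell\gtrsim m_{\min,n}/a_n$. That lemma's failure probability vanishes, so intersecting with this event costs only an $o(1)$ term. Combining the noise and design bounds with $\delta/2+\delta/2=\delta$ gives the claim.
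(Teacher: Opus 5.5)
Your proposal is correct and follows the paper's proof essentially step for step. It uses the same split through the conditional average $\bar p_\ell$ of the $p_\phi(z_i)$, and bounds the label noise by Hoeffding given the embeddings. It bounds the design term by Hoeffding given membership in $B_z(r_\ell)$, relying on the cut points being deterministic population quantiles. It then takes a union bound over the $T_n$ cut points and closes with the triangle inequality.

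Your explicit treatment of the random admissibility condition is slightly more careful than the paper's. Note, though, that for the design term the conditioning in your ``main obstacle'' display must be on $n_\ell^{\mathrm{cut}}$ (or the membership indicators) rather than on all of $z_{1:n}$. Conditioning on all of $z_{1:n}$ makes the design term deterministic, and you in fact condition correctly in the design paragraph.
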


\begin{proof}
Fix a cut point $r_\ell$ such that
$n_\ell^{\mathrm{cut}}\ge m_{\min,n}$. Let
$\mathcal{F}_n=\sigma(z_1,\ldots,z_n)$ be the sigma-algebra generated by the
feature embeddings. Conditional on $\mathcal{F}_n$, the labels are independent,
and
\[
\mathbb{E}[e_{y_i}\mid \mathcal{F}_n]=p_\phi(z_i).
\]
Define the conditional average probability vector inside the cut region by
\[
\bar p_\ell
:=
\frac{1}{n_\ell^{\mathrm{cut}}}
\sum_{i\in\mathcal{I}_\ell} p_\phi(z_i).
\]
Then
\[
\hat p_{\mathrm{slice}}(r_\ell)-\bar p_\ell
=
\frac{1}{n_\ell^{\mathrm{cut}}}
\sum_{i\in\mathcal{I}_\ell}
\left(e_{y_i}-p_\phi(z_i)\right).
\]
Each coordinate of $e_{y_i}-p_\phi(z_i)$ is centered and bounded in $[-1,1]$.
By applying Hoeffding's inequality coordinatewise and taking a union bound
over the $K$ coordinates, we obtain, for any $\delta>0$,
\[
\mathbb{P}\left(
\|\hat p_{\mathrm{slice}}(r_\ell)-\bar p_\ell\|_2>\delta
\mid \mathcal{F}_n
\right)
\le
2K\exp\left(
-\frac{2n_\ell^{\mathrm{cut}}\delta^2}{K}
\right).
\]
Since $n_\ell^{\mathrm{cut}}\ge m_{\min,n}$ and
$m_{\min,n}\to\infty$, the stochastic deviation vanishes. Taking a union bound
over the at most $T_n$ cut points, with $T_n$ growing at most polynomially,
gives
\[
\mathbb{P}\left(
\max_{\ell:\, n_\ell^{\mathrm{cut}}\ge m_{\min,n}}
\|\hat p_{\mathrm{slice}}(r_\ell)-\bar p_\ell\|_2>\delta
\mid \mathcal{F}_n
\right)
\le
2KT_n\exp\left(
-\frac{2m_{\min,n}\delta^2}{K}
\right)
\to0 .
\]

It remains to compare $\bar p_\ell$ with the theoretical local expectation
$\mu(r_\ell)$. The cut points $r_\ell$ are deterministic population
quantiles for fixed $x$ and $n$. Conditional on the membership indicators
$\{\mathbb{I}(z_i\in B_z(r_\ell))\}_{i=1}^n$, the embeddings inside
$B_z(r_\ell)$ are independent draws from
$P_Z(\cdot\mid Z\in B_z(r_\ell))$.
Therefore, we have
\[
\mathbb{E}[p_\phi(Z)\mid Z\in B_z(r_\ell)]
=
\mu(r_\ell).
\]
Since every coordinate of $p_\phi(Z)$ lies in $[0,1]$,
Hoeffding's inequality gives, for any $\delta>0$,
\[
\begin{aligned}
\mathbb{P}\left(
\|\bar p_\ell-\mu(r_\ell)\|_2>\delta,\,
n_\ell^{\mathrm{cut}}\ge m_{\min,n}
\right)\le
2K\exp\left(
-\frac{2m_{\min,n}\delta^2}{K}
\right).
\end{aligned}
\]
Taking a union bound over the $T_n$ cut points yields
\[
\begin{aligned}
\mathbb{P}\left(
\max_{\ell:\, n_\ell^{\mathrm{cut}}\ge m_{\min,n}}
\|\bar p_\ell-\mu(r_\ell)\|_2>\delta
\right)\le
2KT_n\exp\left(
-\frac{2m_{\min,n}\delta^2}{K}
\right)
\longrightarrow0.
\end{aligned}
\]
Combining the two concentration bounds with the triangle inequality yields
\[
\mathbb{P}\left(
\max_{\ell:\, n_\ell^{\mathrm{cut}}\ge m_{\min,n}}
\|\hat p_{\mathrm{slice}}(r_\ell)-\mu(r_\ell)\|_2>\delta
\right)\to0 .
\]
This proves the lemma.
\end{proof}

We establish Lemma \ref{lem:bandwidth}. This lemma describes the asymptotic behavior of the weighting bandwidth used in the local empirical loss. It shows that the bandwidth $\sigma_n$ shrinks to zero, while the number of calibration embeddings inside this radius diverges.

\begin{lemma}[Bandwidth and Sample Size Asymptotics]
\label{lem:bandwidth}
Let
\[
m_{\sigma,n}:=\left\lfloor\frac{\sqrt n}{2}\right\rfloor,
\qquad
\sigma_n=d_{(m_{\sigma,n})}(x),
\]
where $d_{(m_{\sigma,n})}(x)$ is the $m_{\sigma,n}$-th nearest-neighbor distance from $z=\phi(x)$ among the calibration embeddings $\{z_i\}_{i=1}^n$.
Under Assumption \ref{ass:compact_supp} and the local volume regularity used in Lemma \ref{lem:adapt_slice},
\[
\sigma_n=\Theta_p\left(n^{-1/(2d_z)}\right),
\]
and hence $\sigma_n\to0$ in probability. Moreover, the effective local sample size
\[
N_{\mathrm{eff}}(\sigma_n):=
\sum_{i=1}^n \mathbb{I}\{\|z_i-z\|_2\le \sigma_n\}
\]
satisfies $N_{\mathrm{eff}}(\sigma_n)\ge m_{\sigma,n}=\Theta(\sqrt n)$, and therefore $N_{\mathrm{eff}}(\sigma_n)\to\infty$.
\end{lemma}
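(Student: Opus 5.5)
The lemma concerns the $m_{\sigma,n}$-th nearest-neighbor distance, so I would reduce it to binomial counts of calibration embeddings in balls around $z$. For $r>0$ let $V(r)=P_Z(B_z(r))$, as in Lemma \ref{lem:adapt_slice}, and let $N(r)=\sum_{i=1}^n\mathbb{I}\{\|z_i-z\|_2\le r\}\sim\operatorname{Bin}(n,V(r))$, up to the null boundary set. The key duality is
\[
\{\sigma_n\le r\}=\{N(r)\ge m_{\sigma,n}\}.
\]
By Assumption \ref{ass:compact_supp}, for $0<r\le r_0$ the density and volume bounds give
\[
v_-r^{d_z}\le V(r)\le v_+r^{d_z},
\qquad v_-=C_{\min}c_{\mathrm{vol}},\quad v_+=C_{\max}C_{\mathrm{vol}}.
\]
These are the same constants $v_\pm$ used in the proof of Lemma \ref{lem:adapt_slice}.

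\textbf{Upper bound.} Fix a large constant $A$ and set $r_n^+=A n^{-1/(2d_z)}$, which is at most $r_0$ for large $n$. Then
\[
\mathbb{E}N(r_n^+)=nV(r_n^+)\ge v_-A^{d_z}\sqrt n,
\]
which exceeds $2m_{\sigma,n}$ once $v_-A^{d_z}>1$. The multiplicative Chernoff bound from Lemma \ref{lem:geom_consist} then gives
\[
\mathbb{P}(\sigma_n>r_n^+)=\mathbb{P}\bigl(N(r_n^+)<m_{\sigma,n}\bigr)\le\exp(-c\sqrt n)\to0.
\]

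\textbf{Lower bound.} Take $r_n^-=a n^{-1/(2d_z)}$ with $a$ small enough that $v_+a^{d_z}<1/4$. Then $\mathbb{E}N(r_n^-)\le\sqrt n/4\approx m_{\sigma,n}/2$, and the upper-tail Chernoff bound gives
\[
\mathbb{P}(\sigma_n\le r_n^-)=\mathbb{P}\bigl(N(r_n^-)\ge m_{\sigma,n}\bigr)\le\exp(-c'\sqrt n)\to0.
\]
Together the two bounds give $\sigma_n=\Theta_p(n^{-1/(2d_z)})$, and since $n^{-1/(2d_z)}\to0$, also $\sigma_n\to0$ in probability.

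\textbf{Effective sample size.} This part is deterministic. The $m_{\sigma,n}$ nearest embeddings all satisfy $\|z_i-z\|_2\le d_{(m_{\sigma,n})}=\sigma_n$, so
\[
N_{\mathrm{eff}}(\sigma_n)\ge m_{\sigma,n}=\lfloor\sqrt n/2\rfloor=\Theta(\sqrt n)\to\infty.
\]

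\textbf{Main obstacle.} The only delicate point is checking that the radii $r_n^\pm$ fall in the range $(0,r_0]$ where the volume regularity applies, and that $z=\phi(x)$ lies in $\mathcal{Z}$. The first holds for all large $n$ because $r_n^\pm\to0$. The second holds almost surely because $x$ is drawn from the same distribution as the calibration data. Ties among the distances have probability zero because $P_Z$ has a density.
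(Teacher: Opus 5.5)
Your proposal is correct and follows essentially the same route as the paper. Both arguments reduce $\{\sigma_n\le r\}$ to the binomial event $\{N(r)\ge m_{\sigma,n}\}$, sandwich $V(r)$ between constant multiples of $r^{d_z}$, and apply multiplicative Chernoff bounds at two radii of order $n^{-1/(2d_z)}$; the paper writes these radii as $a t_n$ and $b t_n$ with $t_n=(m_{\sigma,n}/n)^{1/d_z}$, the effective-sample-size claim follows deterministically from the order statistic, and your remarks on ties, $z\in\mathcal{Z}$, and the radii lying below $r_0$ make explicit details the paper leaves implicit.
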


\begin{proof}
We first prove the Convergence of $\sigma_n$.
For a fixed test embedding $z$, define
\[
V(r):=P_Z(B_z(r)).
\]
By Assumption \ref{ass:compact_supp}, the density bounds and local
volume regularity imply that there exist constants
$c_-,c_+>0$ and $r_0>0$ such that
\[
c_-r^{d_z}\le V(r)\le c_+r^{d_z},
\qquad 0<r\le r_0.
\]
Let $t_n:=\left(m_{\sigma,n}/n\right)^{1/d_z}$.
Choose fixed constants $a,b>0$ satisfying
\[
c_+a^{d_z}\le\frac12,
\qquad
c_-b^{d_z}\ge2.
\]
Since $m_{\sigma,n}/n\to0$, both $at_n$ and $bt_n$
lie within the local regularity radius for sufficiently
large $n$.
Define the number of calibration embeddings within
radius $r$ by
\[
N_n(r):=
\sum_{i=1}^n
\mathbb{I}\{\|z_i-z\|_2\le r\}.
\]
For each fixed $r$, we have $N_n(r)\sim\operatorname{Bin}(n,V(r))$.
The volume bounds yield
\[
\mathbb{E}N_n(at_n)
\le \frac{m_{\sigma,n}}{2},
\qquad
\mathbb{E}N_n(bt_n)
\ge 2m_{\sigma,n}.
\]
By the multiplicative Chernoff bounds, there exists
a constant $c>0$ such that
\[
\mathbb{P}\bigl(N_n(at_n)\ge m_{\sigma,n}\bigr)
\le \exp(-cm_{\sigma,n}),
\quad
\mathbb{P}\bigl(N_n(bt_n)<m_{\sigma,n}\bigr)
\le \exp(-cm_{\sigma,n}).
\]
Consequently,
\[
\mathbb{P}\left(
at_n\le\sigma_n\le bt_n
\right)
\ge
1-2\exp(-cm_{\sigma,n})
\longrightarrow1.
\]
Since $m_{\sigma,n}=\Theta(n^{1/2})$, it follows that $t_n=\Theta\left(n^{-1/(2d_z)}\right)$.
Therefore, we have $\sigma_n=\Theta_p\left(n^{-1/(2d_z)}\right)$,
which implies $\sigma_n\to0$ in probability.
We then prove the divergence of the effective local sample size.
By definition of the order statistic $d_{(m_{\sigma,n})}(x)$, at least $m_{\sigma,n}$ calibration embeddings lie inside the closed ball of radius $\sigma_n$ around $z$. Hence
\[
N_{\mathrm{eff}}(\sigma_n)
=
\sum_{i=1}^n \mathbb{I}\{\|z_i-z\|_2\le \sigma_n\}
\ge
m_{\sigma,n}
=
\Theta(\sqrt n).
\]
Thus $N_{\mathrm{eff}}(\sigma_n)\to\infty$ as $n\to\infty$.
\end{proof}

We now establish Lemma \ref{lem:cond_unif_conv}. This lemma shows that, on the
candidate grids used by LAPE, the local weighted leave-one-out negative
log-likelihood concentrates around its conditional mean. The key point is that
all candidate neighborhoods contain at least $m_{\min,n}$ points, while the
Gaussian weights are uniformly of order $1/m$ on each candidate neighborhood.

\begin{lemma}[Conditional Uniform Convergence of Empirical Risk]
\label{lem:cond_unif_conv}
For $m\in\mathcal{M}_n$ and $\alpha\in\mathcal{A}_n$, write the local weighted
leave-one-out objective as
\[
\mathcal{J}_n(x;m,\alpha)
=
\sum_{i\in\mathcal{N}_m(x)}
w_i^{(m)}(x)\,
\mathcal{L}\bigl(y_i,q_{i,n}^{(-i)}(m,\alpha)\bigr),
\]
where $q_{i,n}^{(-i)}(m,\alpha)$ denotes the leave-one-out probability vector used to evaluate the calibration point $i$. Here $q_{i,n}^{(-i)}(m,\alpha)$ is constructed from the same query-local pool $\mathcal{N}_m(x)\setminus\{i\}$, so for fixed $(m,\alpha)$ the objective depends only on the labels inside $\mathcal{N}_m(x)$, with the anchor values treated as fixed.
Assume that the candidate grid satisfies $m\le m_{\sigma,n}$ for all $m\in\mathcal{M}_n$, as in the main algorithm.
Let
\[
\mathcal{G}_n
:=
\sigma\bigl(
z_1,\ldots,z_n,\,
\pi^{\mathrm{anchor}}(x),\,
\pi^{\mathrm{anchor}}(x_1),\ldots,\pi^{\mathrm{anchor}}(x_n)
\bigr)
\]
be the sigma-algebra generated by the embeddings and the fixed anchor values.
Define the corresponding conditional risk by
\[
\mathcal{R}_n(x;m,\alpha)
:=
\mathbb{E}[\mathcal{J}_n(x;m,\alpha)\mid \mathcal{G}_n].
\]
Under Assumptions \ref{ass:compact_supp}--\ref{ass:bounded_loss} and the
leave-one-out construction, for every fixed $\varepsilon>0$,
\[
\mathbb{P}\left(
\sup_{m\in\mathcal{M}_n,\,\alpha\in\mathcal{A}_n}
\left|\mathcal{J}_n(x;m,\alpha)-\mathcal{R}_n(x;m,\alpha)\right|
>
\varepsilon
\right)
\to0,
\]
provided that $|\mathcal{M}_n||\mathcal{A}_n|$ grows subexponentially in $m_{\min,n}$.
In particular, this holds for the finite or polynomial-size grids used in the algorithm.
\end{lemma}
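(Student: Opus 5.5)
Conditional on $\mathcal{G}_n$, the embeddings, the neighbor sets $\mathcal{N}_m(x)$, the weights $w_i^{(m)}(x)$ and all anchor values are fixed. The only randomness left in $\mathcal{J}_n(x;m,\alpha)$ is in the labels $\{y_i\}_{i\in\mathcal{N}_m(x)}$, which are conditionally independent with $\mathbb{P}(y_i=k\mid\mathcal{G}_n)=p_{\phi,k}(z_i)$ by Assumption~\ref{ass:reg_prob}. This holds because the anchors are treated as fixed functions independent of the labels (Assumption~\ref{ass:lipschitz_base}). The plan is to prove a conditional concentration bound for each fixed $(m,\alpha)$ with McDiarmid's bounded-differences inequality, take a union bound over the grid, and then integrate out $\mathcal{G}_n$.

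\textbf{Step 1 (the loss is bounded).} For every $i$ and every label configuration, the leave-one-out probability $q_{i,n}^{(-i)}(m,\alpha)_{y_i}=(1-\alpha)\pi^{\mathrm{anchor}}(y_i;x_i)+\alpha\tilde p_{-i}(y_i;x_i,m)$ is at least $\min\{2\delta,1/(Km)\}$. I would work with the truncated loss $\mathcal{L}_\delta$ of Assumption~\ref{ass:bounded_loss}, which takes values in $[0,C_\delta]$. The weak spot is when $\alpha$ is near $1$ and the local count is zero. There I would either use $\mathcal{L}_\delta$ directly in the objective, or use the crude bound $\log(Km)\le\log(Km_{\sigma,n})$. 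The crude bound costs only a $\log n$ factor, which is absorbed because $\log n=o(m_{\min,n})$.

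\textbf{Step 2 (bounded differences).} Change a single label $y_j$ with $j\in\mathcal{N}_m(x)$. This affects the $j$-th term of the objective, which moves by at most $w_j^{(m)}C$. Because every $\tilde p_{-i}$ is computed from the shared pool $\mathcal{N}_m(x)\setminus\{i\}$, the change also shifts $\tilde p_{-i}$ by $1/m$ for every other $i$. On the truncated region the loss is Lipschitz in $q$ with constant $1/\delta$, so each such term moves by at most $w_i^{(m)}\alpha/(m\delta)$. Summing over $i$ and using $\sum_i w_i^{(m)}=1$, the total change is
\[
c_j\le C\,w_j^{(m)}+\frac{1}{m\delta}.
\]
Next I bound the weights. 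All points of $\mathcal{N}_m(x)$ lie within distance $d_{(m)}\le\sigma_n$ of $z$ because $m\le m_{\sigma,n}$. Hence each Gaussian factor lies in $[e^{-1},1]$, which gives $w_j^{(m)}\le e/m$. Therefore $c_j\le C'/m$ and $\sum_j c_j^2\le C'^2/m\le C'^2/m_{\min,n}$.

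\textbf{Step 3 (union bound).} McDiarmid's inequality gives, for each fixed $(m,\alpha)$,
\[
\mathbb{P}\bigl(|\mathcal{J}_n-\mathcal{R}_n|>\varepsilon\mid\mathcal{G}_n\bigr)\le 2\exp\bigl(-2\varepsilon^2 m_{\min,n}/C'^2\bigr).
\]
A union bound over the grid multiplies this by $|\mathcal{M}_n||\mathcal{A}_n|$. The result tends to zero because $\log(|\mathcal{M}_n||\mathcal{A}_n|)=o(m_{\min,n})$ by Assumption~\ref{ass:candidate_grid}. The bound is deterministic in $\mathcal{G}_n$, so taking expectations gives the unconditional statement.

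\textbf{Main obstacle.} The hardest step is making Step 2 rigorous for the untruncated log-loss. When $\alpha$ is close to $1$ and $y_i$ is locally rare, $q_{y_i}$ can fall below $\delta$, and the Lipschitz constant then scales like $m$ instead of $1/\delta$. Two fixes are available. The first is to interpret $\mathcal{L}$ as $\mathcal{L}_\delta$ throughout, as Assumption~\ref{ass:bounded_loss} suggests. The second is to check directly that $\log$ evaluated on the grid $\{(c+1/K)/m\}$ changes by at most $\log(1+K)$ per unit change in the count $c$. The second fix still gives $c_j=O(\log K/m)$, because only a single count changes for each $i$.
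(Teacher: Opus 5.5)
Your Steps 2--3 are the core of the paper's proof: McDiarmid with differences of order $(eC_\delta+2/\delta)/m$, obtained from $w_i^{(m)}\le e/m$ and the $1/\delta$-Lipschitz truncated loss, followed by a union bound over $\mathcal{M}_n\times\mathcal{A}_n$. However, you run them on the truncated objective $\mathcal{J}_n^\delta$ (with conditional mean $\mathcal{R}_n^\delta$), while the lemma concerns $\mathcal{J}_n$ built from $\mathcal{L}(y,q)=-\log q_y$. Neither of your two fixes bridges this.

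Fix (1) changes the statement. Fix (2) does not give $c_j=O(\log K/m)$. Bounding each affected term $i\neq j$ by $\log(1+K)$ and using $\sum_i w_i^{(m)}=1$ only yields $c_j=O(\log(1+K))$, which makes McDiarmid vacuous. Rescuing it needs an extra counting argument: a class with count $c$ contributes about $c$ affected terms, each moving by about $1/c$, so the unweighted total is $O(\log(1+K))$.

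Even then, the $j$-th term itself can move by about $w_j^{(m)}\log(Km)$ at $\alpha=1\in\mathcal{A}_n$, when its label switches from a majority class to an absent one. So $c_j$ is genuinely of order $\log m/m$, and the exponent degrades to $m_{\min,n}/\log^2 m_{\min,n}$. This suffices for the practical grid, but it is not controlled by the lemma's proviso $\log(|\mathcal{M}_n||\mathcal{A}_n|)=o(m_{\min,n})$. For the same reason, your claim that the crude $\log n$ factor is absorbed because $\log n=o(m_{\min,n})$ is inaccurate: the factor enters McDiarmid's exponent squared.

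The missing idea is the one the paper uses. It rests on the lower bound $p_{\phi,k}\ge 2\delta$ in Assumption~\ref{ass:bounded_loss}, which your proposal never invokes.

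1. Given $\mathcal{G}_n$, $\mathbb{E}[\tilde p_{-i}(k;x_i,m)\mid\mathcal{G}_n]\ge 2\delta(m-1)/m$. Hoeffding then gives $\mathbb{P}(\tilde p_{-i}(k;x_i,m)<\delta\mid\mathcal{G}_n)\le e^{-c_\delta m}$.
2. A union bound over $k\in[K]$, $i\in\mathcal{N}_m(x)$ and $m\in\mathcal{M}_n$ produces an event $\mathcal{E}_n$ with $\mathbb{P}(\mathcal{E}_n^c\mid\mathcal{G}_n)\le Kn|\mathcal{M}_n|e^{-c_\delta m_{\min,n}}\to0$.
3. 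On $\mathcal{E}_n$, every fused leave-one-out coordinate is at least $\delta$, since the anchor is at least $2\delta$. Hence $\mathcal{J}_n=\mathcal{J}_n^\delta$ simultaneously for all grid points.
4. Your McDiarmid bound then controls $\mathcal{J}_n^\delta-\mathcal{R}_n^\delta$.
5. The conditional means are reconciled by $\sup_{m,\alpha}|\mathcal{R}_n-\mathcal{R}_n^\delta|\le(\log(Kn)+C_\delta)\,\mathbb{P}(\mathcal{E}_n^c\mid\mathcal{G}_n)=o(1)$. This uses the Laplace-smoothing bound $-\log q_{y_i}\le\log(Km)$ and $\log n=o(m_{\min,n})$.

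With these additions your argument becomes the paper's proof, and it holds under the stated grid condition.
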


\begin{proof}
Fix $m\in\mathcal{M}_n$ and $\alpha\in\mathcal{A}_n$.
Condition on $\mathcal{G}_n$. Under Assumptions B.2
and B.5, the calibration labels are conditionally
independent, with
\[
\mathbb{P}(Y_i=k\mid\mathcal{G}_n)
=p_{\phi,k}(z_i).
\]
For any $i\in\mathcal{N}_m(x)$ and $k\in[K]$,
the leave-one-out local frequency is
\[
\widetilde p_{-i}(k;x_i,m)
=
\frac{
\sum_{j\in\mathcal{N}_m(x)\setminus\{i\}}
\mathbf{1}\{Y_j=k\}+1/K
}{m}.
\]
By Assumption B.6,
\[
\mathbb{E}\left[
\widetilde p_{-i}(k;x_i,m)
\mid\mathcal{G}_n
\right]
\geq
\frac{2\delta(m-1)}{m}.
\]
Since $m\geq m_{\min,n}\to\infty$,
Hoeffding's inequality yields
\[
\mathbb{P}\left(
\widetilde p_{-i}(k;x_i,m)<\delta
\mid\mathcal{G}_n
\right)
\leq \exp(-c_\delta m)
\]
for some constant $c_\delta>0$ and all
sufficiently large $n$.
Let
\[
\mathcal{E}_n=
\left\{
\min_{\substack{
m\in\mathcal{M}_n,
i\in\mathcal{N}_m(x),
k\in[K]}}
\widetilde p_{-i}(k;x_i,m)
\geq\delta
\right\}.
\]
A union bound gives
\[
\mathbb{P}(\mathcal{E}_n^c\mid\mathcal{G}_n)
\leq
Kn|\mathcal{M}_n|
\exp(-c_\delta m_{\min,n})
=o(1),
\]
where the convergence follows from
Assumption B.7.
Since every anchor coordinate is at least
$2\delta$, all fused leave-one-out candidates
have coordinates at least $\delta$ on
$\mathcal{E}_n$.
Define
\[
J_n^\delta(x;m,\alpha)
=
\sum_{i\in\mathcal{N}_m(x)}
w_i^{(m)}(x)
\mathcal{L}_{\delta}
\bigl(Y_i,q_{i,n}^{(-i)}(m,\alpha)\bigr),
\]
\[
R_n^\delta(x;m,\alpha)
=
\mathbb{E}\left[
J_n^\delta(x;m,\alpha)
\mid\mathcal{G}_n
\right].
\]
Since $m\leq m_{\sigma,n}$, we have $w_i^{(m)}(x)\leq e/m$.
Moreover, $\mathcal{L}_{\delta}$ is bounded
by $C_\delta$ and is $1/\delta$-Lipschitz
in the corresponding probability coordinate.

Changing a single label changes the direct
loss contribution by at most $eC_\delta/m$.
The total effect on the remaining
leave-one-out losses is bounded by
$2/(\delta m)$. Therefore, the
bounded-difference constant satisfies $b_m\leq (eC_\delta+2/\delta)/m$.
By McDiarmid's inequality,
\[
\mathbb{P}\left(
|J_n^\delta(x;m,\alpha)
-R_n^\delta(x;m,\alpha)|
>\varepsilon
\mid\mathcal{G}_n
\right)
\leq
2\exp\left(
-\frac{2\varepsilon^2m}
{(eC_\delta+2/\delta)^2}
\right).
\]
Taking a union bound over
$\mathcal{M}_n\times\mathcal{A}_n$ yields
\[
\mathbb{P}\left(
\sup_{\substack{
m\in\mathcal{M}_n
\alpha\in\mathcal{A}_n}}
|J_n^\delta-R_n^\delta|
>\varepsilon
\mid\mathcal{G}_n
\right)\leq
2|\mathcal{M}_n||\mathcal{A}_n|
\exp\left(
-\frac{2\varepsilon^2m_{\min,n}}
{(eC_\delta+2/\delta)^2}
\right)
\longrightarrow0.
\]
By Laplace smoothing, every leave-one-out local probability is at least $1/(Km)$.
Consequently, the ordinary log-loss is bounded by
\[
B_n=\log(Kn)=O(\log n)
\]
uniformly over the candidate grid.
On $\mathcal{E}_n$, we have $J_n(x;m,\alpha) =J_n^\delta(x;m,\alpha)$ for every candidate.
Furthermore, we have
\[
\sup_{\substack{
m\in\mathcal{M}_n
\alpha\in\mathcal{A}_n}}
|R_n-R_n^\delta|
\leq
(B_n+C_\delta)
\mathbb{P}(\mathcal{E}_n^c
\mid\mathcal{G}_n)
=o(1).
\]
Combining these bounds gives
\[
\sup_{\substack{
m\in\mathcal{M}_n
\alpha\in\mathcal{A}_n}}
|J_n(x;m,\alpha)-R_n(x;m,\alpha)|
\xrightarrow{P}0.
\]
\end{proof}

We establish Lemma \ref{lem:risk_inf}. This lemma gives the risk gap used in later contradiction arguments: if the conditional mean of a candidate probability vector stays away from the true conditional probability, then its expected negative log-likelihood is strictly above the entropy floor.

\begin{lemma}[Risk Inflation]
\label{lem:risk_inf}

Fix a test point $x$ and let $Q(x)$ be any possibly random
probability-vector estimator with strictly positive coordinates
and integrable negative log-loss.
Let $\mathcal F$ be a sigma-algebra such that, conditional on
$\mathcal F$, the label $Y$ has distribution $p(x)$ and is
independent of $Q(x)$. Define $\bar q(x):=\mathbb{E}[Q(x)\mid\mathcal F]$.
If $\|\bar q(x)-p(x)\|_2\ge c$ for some constant $c>0$,
then the conditional expected negative log-likelihood satisfies
\[
\mathbb{E}[-\log Q_Y(x)\mid\mathcal{F}]
\ge
H(p(x))+\frac{c^2}{2},
\]
where $Y\mid X=x\sim p(x)$ and $H(p(x)):=-\sum_{k=1}^K p(k;x)\log p(k;x)$.
\end{lemma}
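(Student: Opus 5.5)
The plan is to reduce the claim to a Kullback--Leibler lower bound. We apply Jensen's inequality to the random estimator $Q(x)$, which replaces it by its conditional mean $\bar q(x)$, and then use Pinsker's inequality to convert the KL gap into the squared Euclidean distance $\|\bar q(x)-p(x)\|_2^2$.

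First, we use the conditional independence of $Y$ and $Q(x)$ given $\mathcal F$, together with $Y\mid\mathcal F\sim p(x)$, to decompose the conditional risk class by class. By the tower property (conditioning additionally on $Q(x)$),
\[
\mathbb{E}[-\log Q_Y(x)\mid\mathcal F]
=\sum_{k=1}^K p(k;x)\,\mathbb{E}[-\log Q_k(x)\mid\mathcal F].
\]
If some conditional expectation with $p(k;x)>0$ is infinite, the bound is trivial. Otherwise integrability of the log-loss is assumed. Next, since $-\log$ is convex on $(0,\infty)$, conditional Jensen gives $\mathbb{E}[-\log Q_k(x)\mid\mathcal F]\ge -\log\bar q_k(x)$ for each $k$. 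Hence
\[
\mathbb{E}[-\log Q_Y(x)\mid\mathcal F]
\ge -\sum_k p(k;x)\log\bar q_k(x)
= H(p(x))+\mathrm{KL}\bigl(p(x)\,\|\,\bar q(x)\bigr).
\]
Here we note that $\bar q(x)\in\Delta^{K-1}$, because a conditional expectation of simplex-valued vectors lies in the simplex, and that $\bar q_k(x)>0$. Terms with $p(k;x)=0$ contribute zero under the convention $0\log 0=0$.

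Finally, we apply Pinsker's inequality in the form $\mathrm{KL}(p\|q)\ge \tfrac12\|p-q\|_1^2$. Together with $\|\cdot\|_1\ge\|\cdot\|_2$ this yields
\[
\mathrm{KL}\bigl(p(x)\,\|\,\bar q(x)\bigr)\ge \tfrac12\|\bar q(x)-p(x)\|_2^2\ge \tfrac{c^2}{2}.
\]
Combining the two displays gives the claim.

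The only delicate step is the first one. It requires a careful justification of the interchange via conditional independence, namely computing $\mathbb{E}[-\log Q_Y\mid\mathcal F,Q]=\sum_k p(k;x)(-\log Q_k)$ and then taking $\mathbb{E}[\cdot\mid\mathcal F]$. It also requires handling possibly infinite conditional expectations so that Jensen is applied legitimately. The remaining steps are standard inequalities.
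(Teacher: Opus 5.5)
Your proposal is correct and matches the paper's proof step for step. You both decompose the conditional risk class by class via conditional independence, apply conditional Jensen to $-\log$ to reach $H(p(x))+\mathrm{KL}\bigl(p(x)\,\|\,\bar q(x)\bigr)$, and then combine Pinsker with $\|\cdot\|_1\ge\|\cdot\|_2$ to obtain $c^2/2$. Your extra care about infinite conditional expectations, zero-probability classes, and $\bar q(x)\in\Delta^{K-1}$ with positive coordinates is a harmless tightening of points the paper leaves implicit.
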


\begin{proof}
Let $p_k:=p(k;x)$ and write $Q_k(x)$ for the $k$-th coordinate of $Q(x)$.
Conditioning on $\mathcal{F}$, the expected negative log-likelihood is
\[
\mathbb{E}[-\log Q_Y(x)\mid\mathcal{F}]
=
\sum_{k=1}^K p_k\,\mathbb{E}[-\log Q_k(x)\mid\mathcal{F}].
\]
Since $-\log t$ is convex on $(0,\infty)$, Jensen's inequality gives
\[
\mathbb{E}[-\log Q_k(x)\mid\mathcal{F}]
\ge
-\log \mathbb{E}[Q_k(x)\mid\mathcal{F}]
=
-\log \bar q_k(x).
\]
Therefore,
\[
\mathbb{E}[-\log Q_Y(x)\mid\mathcal{F}]
\ge
-\sum_{k=1}^K p_k\log \bar q_k(x)
=
H(p(x))+D_{\mathrm{KL}}\{p(x)\|\bar q(x)\}.
\]
By Pinsker's inequality,
\[
D_{\mathrm{KL}}\{p(x)\|\bar q(x)\}
\ge
2\|p(x)-\bar q(x)\|_{\mathrm{TV}}^2
=
\frac{1}{2}\|p(x)-\bar q(x)\|_1^2.
\]
Since $\|v\|_1\ge\|v\|_2$ for any finite-dimensional vector $v$, the bias
condition implies
\[
D_{\mathrm{KL}}\{p(x)\|\bar q(x)\}
\ge
\frac{1}{2}\|p(x)-\bar q(x)\|_2^2
\ge
\frac{c^2}{2}.
\]
Combining the preceding displays yields
\[
\mathbb{E}[-\log Q_Y(x)\mid\mathcal{F}]
\ge
H(p(x))+\frac{c^2}{2}.
\]
This completes the proof.
\end{proof}

We establish Lemma \ref{lem:local_weight}. This lemma shows that the local weights used by LAPE concentrate inside any fixed feature-space neighborhood of the query point. Since the largest candidate neighborhood radius is bounded by the bandwidth radius $\sigma_n$, Lemma \ref{lem:bandwidth} implies that all weighted calibration samples become local asymptotically.

\begin{lemma}[Localization of Local Gaussian Weights]
\label{lem:local_weight}
Fix a test point $x$ and write $z=\phi(x)$. For any candidate neighborhood
size $m\in\mathcal{M}_n$ and any fixed radius $r>0$, define the external
weight contribution by
\[
W_{\mathrm{out}}^{(m)}(r)
:=
\sum_{i\in\mathcal{N}_m(x):\,\|z_i-z\|_2>r} w_i^{(m)}(x).
\]
Assume that $m\le m_{\sigma,n}$ for all $m\in\mathcal{M}_n$, as in the main
algorithm. Then, under the conditions of Lemma \ref{lem:bandwidth},
\[
\sup_{m\in\mathcal{M}_n} W_{\mathrm{out}}^{(m)}(r)
\xrightarrow{P} 0.
\]
\end{lemma}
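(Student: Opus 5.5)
The argument is deterministic once we localize the candidate neighborhoods inside the bandwidth ball. The key observation is that every candidate neighborhood $\mathcal{N}_m(x)$ with $m\in\mathcal{M}_n$ is nested inside $\mathcal{N}_{m_{\sigma,n}}(x)$, because $m\le m_{\max,n}\le m_{\sigma,n}$ by Assumption~\ref{ass:candidate_grid}. Hence every index $i\in\mathcal{N}_m(x)$ satisfies
\[
\|z_i-z\|_2\le d_{(m)}(x)\le d_{(m_{\sigma,n})}(x)=\sigma_n .
\]
Up to ties in the order statistics, which occur with probability zero because $P_Z$ has a density by Assumption~\ref{ass:compact_supp}, the support of the weights $w_i^{(m)}(x)$ is therefore contained in the closed ball $\{u:\|u-z\|_2\le\sigma_n\}$. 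This holds simultaneously for all $m\in\mathcal{M}_n$.

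We then consider the event $E_n(r):=\{\sigma_n\le r\}$ for the fixed radius $r>0$. On this event, no index $i\in\mathcal{N}_m(x)$ has $\|z_i-z\|_2>r$. Consequently $W^{(m)}_{\mathrm{out}}(r)=0$ for every $m\in\mathcal{M}_n$, and so $\sup_{m}W^{(m)}_{\mathrm{out}}(r)=0$ on $E_n(r)$. Since the weights are normalized, $0\le W^{(m)}_{\mathrm{out}}(r)\le 1$ always holds. For any $\varepsilon\in(0,1)$ this gives
\[
\mathbb{P}\Bigl(\sup_{m\in\mathcal{M}_n}W^{(m)}_{\mathrm{out}}(r)>\varepsilon\Bigr)\le \mathbb{P}(\sigma_n>r).
\]

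Lemma~\ref{lem:bandwidth} gives $\mathbb{P}(at_n\le\sigma_n\le bt_n)\ge 1-2\exp(-cm_{\sigma,n})$ with $t_n=\Theta(n^{-1/(2d_z)})\to0$. For $n$ large enough that $bt_n<r$, we therefore have $\mathbb{P}(\sigma_n>r)\le 2\exp(-cm_{\sigma,n})\to0$. This proves the claim.

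The only delicate point is the uniformity in $m$, and it is handled by the nesting argument rather than by a union bound, so the size of $\mathcal{M}_n$ plays no role. A secondary point is tie-breaking in the definition of $\mathcal{N}_m(x)$. The definition uses $\le d_{(m)}$, so the set could in principle exceed $m$ points; but its radius is still at most $d_{(m)}\le\sigma_n$, so the containment argument goes through regardless of ties.
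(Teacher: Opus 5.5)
Your proof is correct and follows the same route as the paper. Every candidate neighborhood sits inside the radius-$\sigma_n$ ball because $d_{(m)}(x)\le d_{(m_{\sigma,n})}(x)=\sigma_n$, which gives $\sup_{m\in\mathcal{M}_n}W_{\mathrm{out}}^{(m)}(r)\le\mathbb{I}\{\sigma_n>r\}$, and Lemma~\ref{lem:bandwidth} then yields $\mathbb{P}(\sigma_n>r)\to0$. The only cosmetic difference is that you quote the explicit exponential tail bound from the bandwidth lemma's proof, whereas the paper uses only $\sigma_n\xrightarrow{P}0$.
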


\begin{proof}
By definition,
\[
m_{\sigma,n}=\left\lfloor\frac{\sqrt n}{2}\right\rfloor,
\qquad
\sigma_n=d_{(m_{\sigma,n})}(x),
\]
where $d_{(m_{\sigma,n})}(x)$ is the $m_{\sigma,n}$-th nearest-neighbor
distance from $z$ among the calibration embeddings. If $m\le m_{\sigma,n}$,
then every point in $\mathcal{N}_m(x)$ has distance at most $\sigma_n$ from
$z$. Hence, for any fixed $r>0$, on the event $\{\sigma_n\le r\}$,
\[
\mathcal{N}_m(x)\subseteq B_z(r)
\qquad \text{for all } m\in\mathcal{M}_n.
\]
Consequently,
\[
W_{\mathrm{out}}^{(m)}(r)=0
\qquad \text{for all } m\in\mathcal{M}_n
\]
on this event. Therefore,
\[
\sup_{m\in\mathcal{M}_n} W_{\mathrm{out}}^{(m)}(r)
\le
\mathbb{I}\{\sigma_n>r\}.
\]
Lemma \ref{lem:bandwidth} gives $\sigma_n\xrightarrow{P}0$, and thus
$\mathbb{P}(\sigma_n>r)\to0$ for every fixed $r>0$. This proves $\sup_{m\in\mathcal{M}_n} W_{\mathrm{out}}^{(m)}(r)
\xrightarrow{P}0$.
\end{proof}

\subsection{Proof of Theorem 5.1}

We prove the consistency statement for a fixed test point $x$. The proof has
three parts. First, we show that the local empirical frequency is uniformly
stable over the candidate neighborhood grid. Second, we prove that the
empirical risk minimization step cannot select a candidate whose conditional
mean remains separated from the true conditional probability. Finally, we
combine these two facts to obtain convergence of the final LAPE estimator.

\textbf{Step 1: Variance control.}

\begin{proposition}[The good set $\Omega_{\mathrm{good}}(x,n)$]
\label{prop:good_set}
For $m\in\mathcal{M}_n$, let $\bar p_0(x,m)$ be the conditional mean defined in
Definition~\ref{def:fused_bias}.
For any fixed $\varepsilon>0$, define
\[
\Omega_{\mathrm{good}}(x,n;\varepsilon)
:=
\left\{
\sup_{m\in\mathcal{M}_n}
\|\tilde p_0(x,m)-\bar p_0(x,m)\|_2\le \varepsilon
\right\}.
\]
Under the assumptions above,
\[
\mathbb{P}(\Omega_{\mathrm{good}}(x,n;\varepsilon))\to1.
\]
Equivalently,
\[
\sup_{m\in\mathcal{M}_n}
\|\tilde p_0(x,m)-\bar p_0(x,m)\|_2=o_p(1).
\]
\end{proposition}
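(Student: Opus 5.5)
We will condition on the embeddings $\mathcal F_n=\sigma(z_1,\ldots,z_n)$ and treat the neighbor sets $\mathcal N_m(x)$, $m\in\mathcal M_n$, as fixed. Given $\mathcal F_n$, the labels $y_i$ are independent with $\mathbb P(y_i=k\mid\mathcal F_n)=p_{\phi,k}(z_i)$; this uses Assumption~\ref{ass:reg_prob} and the i.i.d.\ sampling. For a fixed $m$, the deviation is
\[
\tilde p_0(x,m)-\bar p_0(x,m)=\frac{1}{m+1}\sum_{i\in\mathcal N_m(x)}\bigl(e_{y_i}-p_\phi(z_i)\bigr).
\]
The smoothing term $1/K$ is deterministic given $\mathcal F_n$, so it cancels. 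If there are ties at $d_{(m)}$, the set can contain more than $m$ points. Ties occur with probability zero because $P_Z$ has a density (Assumption~\ref{ass:compact_supp}), so we take $|\mathcal N_m(x)|=m$ almost surely.

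Each coordinate of this deviation is an average of $m$ independent, centered terms taking values in $[-1,1]$, scaled by $m/(m+1)\le 1$. We apply Hoeffding's inequality to each coordinate at level $\varepsilon/\sqrt K$ and take a union bound over the $K$ coordinates, exactly as in the proof of Lemma~\ref{lem:freq_stab}. This gives
\[
\mathbb P\bigl(\|\tilde p_0(x,m)-\bar p_0(x,m)\|_2>\varepsilon\mid\mathcal F_n\bigr)\le 2K\exp\!\left(-\frac{m\varepsilon^2}{2K}\right)\le 2K\exp\!\left(-\frac{m_{\min,n}\varepsilon^2}{2K}\right),
\]
where the last step uses $m\ge m_{\min,n}$ for every $m\in\mathcal M_n$.

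Next we take a union bound over $m\in\mathcal M_n$:
\[
\mathbb P\bigl(\Omega_{\mathrm{good}}(x,n;\varepsilon)^c\mid\mathcal F_n\bigr)\le 2K|\mathcal M_n|\exp\!\left(-\frac{m_{\min,n}\varepsilon^2}{2K}\right).
\]
By Assumption~\ref{ass:candidate_grid}, $\log|\mathcal M_n|\le\log(|\mathcal M_n||\mathcal A_n|)=o(m_{\min,n})$ and $m_{\min,n}\to\infty$. Hence the right-hand side is $\exp\{-m_{\min,n}(\varepsilon^2/(2K)-o(1))\}\to0$. The bound does not depend on $\mathcal F_n$, so taking expectations gives the unconditional statement $\mathbb P(\Omega_{\mathrm{good}})\to1$.

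The only delicate step is the conditioning. Given the embeddings, the membership of $\mathcal N_m(x)$ is determined, so the labels inside it remain conditionally independent with the stated means. The test point's own label $y$ plays no role, since $\tilde p_0$ uses only calibration labels. Everything else is routine, and no geometric lemma is needed at this step; Lemmas~\ref{lem:adapt_slice}--\ref{lem:bandwidth} enter only later, in the bias analysis.
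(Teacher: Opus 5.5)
Your proposal is correct and follows essentially the same argument as the paper: condition on $\sigma(z_1,\ldots,z_n)$, use the density of $P_Z$ to rule out ties so that $|\mathcal N_m(x)|=m$, apply Hoeffding's inequality coordinatewise to $\frac{1}{m+1}\sum_{i\in\mathcal N_m(x)}(e_{y_i}-p_\phi(z_i))$, and take a union bound over the $K$ coordinates and over $\mathcal M_n$ using $\log|\mathcal M_n|=o(m_{\min,n})$. The only difference is that you take the centered indicators to have range length $2$ rather than $1$, which weakens the exponent constant from $2/K$ to $1/(2K)$ but does not affect the conclusion.
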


\begin{proof}[Proof of Proposition \ref{prop:good_set}]
Fix a test point $x$ and condition on $\mathcal{F}_n:=\sigma(z_1,\ldots,z_n)$. Under i.i.d. assumption and Assumption \ref{ass:reg_prob}, the labels $Y_1,\ldots,Y_n$ are conditionally independent given $\mathcal{F}_n$, with
\[
\mathbb{E}[e_{Y_i}\mid\mathcal{F}_n]=p_{\phi}(z_i),
\qquad i\in[n].
\]
Since $P_Z$ has a density, the distances from the fixed query embedding $z=\phi(x)$ are almost surely distinct. Thus $|\mathcal{N}_m(x)|=m$ for every $m\in\mathcal{M}_n$, and the neighbor sets are determined by $\mathcal{F}_n$.

By the definition of the Laplace-smoothed local frequency and its
conditional mean,
\[
\tilde p_0(x,m)
=\frac{\sum_{i\in\mathcal{N}_m(x)}e_{Y_i}+u_K}{m+1},
\qquad
\bar p_0(x,m)
=\frac{\sum_{i\in\mathcal{N}_m(x)}p_{\phi}(z_i)+u_K}{m+1},
\]
where $u_K=(1/K,\ldots,1/K)$. Consequently,
\[
\tilde p_0(x,m)-\bar p_0(x,m)
=\frac{1}{m+1}\sum_{i\in\mathcal{N}_m(x)}
\bigl(e_{Y_i}-p_{\phi}(z_i)\bigr).
\]
For each class $k\in[K]$, the summands
$\mathbb{I}(Y_i=k)-p_{\phi,k}(z_i)$ are conditionally independent,
centered, and have range length one. Hoeffding's inequality therefore gives, for every $\varepsilon>0$,
\begin{equation*}
    \mathbb{P}\left(
\left|\tilde p_0(k;x,m)-\bar p_0(k;x,m)\right|
>\frac{\varepsilon}{\sqrt K}
\,\middle|\,\mathcal{F}_n\right)\le
2\exp\left(-\frac{2(m+1)^2\varepsilon^2}{mK}\right)
\le 2\exp\left(-\frac{2m\varepsilon^2}{K}\right).
\end{equation*}
Applying a union bound over the $K$ coordinates and all
$m\in\mathcal{M}_n$ yields
\[
\mathbb{P}\left(
\sup_{m\in\mathcal{M}_n}
\|\tilde p_0(x,m)-\bar p_0(x,m)\|_2>\varepsilon
\,\middle|\,\mathcal{F}_n\right)\le
2K|\mathcal{M}_n|\exp\left(
-\frac{2m_{\min,n}\varepsilon^2}{K}\right).
\]
By Assumption \ref{ass:candidate_grid},
$\log|\mathcal{M}_n|=o(m_{\min,n})$ and
$m_{\min,n}\to\infty$. Hence the right-hand side converges to zero.
Taking expectations over $\mathcal{F}_n$ gives
\[
\mathbb{P}\left(
\sup_{m\in\mathcal{M}_n}
\|\tilde p_0(x,m)-\bar p_0(x,m)\|_2>\varepsilon
\right)\longrightarrow0.
\]
This proves the proposition.
\end{proof}

\textbf{Step 2: Bias convergence.}
\begin{proposition}[Bias convergence of the selected candidate] 
\label{thm:bias_conv} 
Let $(m_n^*,\alpha_n^*)$ be any minimizer of 
$\mathcal{J}_n(x;m,\alpha)$ over 
$\mathcal{M}_n\times\mathcal{A}_n$. Define the conditional mean of the fused 
candidate by 
\[ 
\bar q_n(x;m,\alpha) 
:= 
(1-\alpha)\pi^{\mathrm{anchor}}(x)+\alpha\bar p_0(x,m). 
\] 
Then $\|\bar q_n(x;m_n^*,\alpha_n^*)-p(x)\|_2=o_P(1)$.
Consequently, $\operatorname{Bias}_{\mathrm{LAPE}}(x;m_n^*)=o_P(1)$.
\end{proposition}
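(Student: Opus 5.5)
The plan is a contradiction argument through the conditional risk. We compare the risk of the selected candidate with that of the pure-local reference candidate $(m_n^\circ,1)$, which Assumption~\ref{ass:candidate_grid} guarantees is in the grid. Fix $c>0$ and suppose the event
\[
E_n(c):=\bigl\{\|\bar q_n(x;m_n^*,\alpha_n^*)-p(x)\|_2\ge c\bigr\}
\]
has probability bounded away from zero along a subsequence. We will show that on $E_n(c)$ the conditional risk of the minimizer exceeds the entropy floor $H(p(x))$ by roughly $c^2/2$. The reference candidate, by contrast, attains risk $H(p(x))+o_P(1)$. Uniform concentration (Lemma~\ref{lem:cond_unif_conv}) transfers this gap to the empirical objective $\mathcal{J}_n$, which contradicts the minimality of $(m_n^*,\alpha_n^*)$. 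The final claim then follows from the definition, since $\operatorname{Bias}_{\mathrm{LAPE}}(x;m_n^*)\le\|\bar q_n(x;m_n^*,\alpha_n^*)-p(x)\|_2$ by choosing $\alpha=\alpha_n^*$.

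\textbf{Step 1: localize the conditional risk.} By Lemma~\ref{lem:bandwidth} and Lemma~\ref{lem:local_weight}, every candidate neighborhood lies in $B_z(\sigma_n)$ with $\sigma_n\to0$ in probability. Assumptions~\ref{ass:lipschitz_data} and~\ref{ass:lipschitz_base} then give two uniform approximations over $i\in\mathcal{N}_m(x)$:
\[
p_\phi(z_i)=p(x)+O(L_p\sigma_n),\qquad \pi^{\mathrm{anchor}}(x_i)=\pi^{\mathrm{anchor}}(x)+O(L_{\mathrm{anchor}}\sigma_n).
\]
For the leave-one-out local frequency, its conditional mean differs from $\bar p_0(x,m)$ by $O(1/m)$, since it removes one label and changes the normalization from $m+1$ to $m$. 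Hence, uniformly over the grid,
\[
\mathbb{E}\bigl[q^{(-i)}_{i,n}(m,\alpha)\mid\mathcal{G}_n\bigr]=\bar q_n(x;m,\alpha)+o_P(1).
\]
Conditional on $\mathcal{G}_n$, $q^{(-i)}_{i,n}$ does not involve $Y_i$, so $Y_i$ is independent of it with law $p_\phi(z_i)=p(x)+o_P(1)$. I therefore apply Lemma~\ref{lem:risk_inf} to each summand, with $\mathcal{F}=\mathcal{G}_n$ and the label law $p_\phi(z_i)$ in place of $p(x)$. The coordinates are bounded below by Assumption~\ref{ass:bounded_loss} together with Laplace smoothing, so the entropy and KL terms are continuous and the resulting perturbations are $o_P(1)$. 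Because the weights $w_i^{(m)}$ sum to one, this yields
\[
\mathcal{R}_n(x;m_n^*,\alpha_n^*)\ge H(p(x))+\tfrac{c^2}{2}-o_P(1)\quad\text{on }E_n(c).
\]
The minimizer is random, but the bound is uniform over the finite grid, so evaluating it at the random index causes no difficulty.

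\textbf{Step 2: bound the risk of the reference candidate.} For $(m_n^\circ,1)$, Proposition~\ref{prop:good_set} shows the leave-one-out frequencies concentrate around their conditional means. Those means equal $p(x)+O(\sigma_n)+O(1/m_n^\circ)$, so the vectors converge to $p(x)$ uniformly over $i$. On the event $\mathcal{E}_n$ from the proof of Lemma~\ref{lem:cond_unif_conv} the coordinates are bounded below by $\delta$, so $-\log$ is Lipschitz there. This gives
\[
\mathcal{R}_n(x;m_n^\circ,1)=H(p(x))+o_P(1).
\]
Lemma~\ref{lem:cond_unif_conv} then gives, uniformly over the grid, $\mathcal{J}_n=\mathcal{R}_n+o_P(1)$. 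Consequently, on $E_n(c)$,
\[
\mathcal{J}_n(x;m_n^*,\alpha_n^*)\ge\mathcal{J}_n(x;m_n^\circ,1)+\tfrac{c^2}{2}-o_P(1).
\]
This contradicts minimality with probability tending to one, so $\mathbb{P}(E_n(c))\to0$ for every $c>0$.

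The main obstacle is Step 1, specifically justifying the Jensen/Pinsker lower bound of Lemma~\ref{lem:risk_inf} when the summands are indexed by neighbors $i\ne$ the test point. Each $q^{(-i)}$ is built from a pool that shares labels with the other summands, and $Y_i$ follows $p_\phi(z_i)$ rather than exactly $p(x)$. I would handle this by conditioning on $\mathcal{G}_n$ and working coordinatewise: for each fixed $i$, independence of $Y_i$ from $q^{(-i)}$ holds by the leave-one-out construction. The shared labels affect only variance, which Lemma~\ref{lem:cond_unif_conv} already controls, and the mismatch between $p_\phi(z_i)$ and $p(x)$ is absorbed into the $o_P(1)$ terms using the $\delta$-lower bounds.
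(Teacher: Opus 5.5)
Your proposal is correct and follows essentially the same route as the paper's proof. Both argue by contradiction: localization via $\sigma_n\to0$, the Lipschitz assumptions, and the $O(1/m)$ leave-one-out correction, together with Lemma~\ref{lem:risk_inf}, push the conditional risk of a persistently biased selected candidate above $H(p(x))$ by a fixed margin, while the pure-local competitor $(m_n^\circ,1)$ attains $H(p(x))+o_P(1)$, and Lemma~\ref{lem:cond_unif_conv} transfers the gap to $\mathcal{J}_n$, contradicting minimality. The differences are cosmetic: your margin is $c^2/2-o_P(1)$ versus the paper's $\kappa_c=c^2/16$, obtained after reducing the distance to $c/2$. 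In Step 2 you should also state explicitly, as the paper does, that the complement of $\mathcal{E}_n$ contributes at most $O(\log n)\,\mathbb{P}(\mathcal{E}_n^c\mid\mathcal{G}_n)=o(1)$ to the competitor's conditional risk, and that the leave-one-out frequencies inherit the concentration of Proposition~\ref{prop:good_set} because they differ from $\tilde p_0(x,m)$ by at most $2/m$ coordinatewise.
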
 
 
\begin{proof}
We argue by contradiction. Suppose that there exist constants $c>0$ and
$\rho>0$ and a subsequence, still indexed by $n$, such that
\[
\mathbb{P}\left(
\|\bar q_n(x;m_n^*,\alpha_n^*)-p(x)\|_2\ge c
\right)\ge \rho.
\]
Consider this event together with the good event from Proposition
\ref{prop:good_set} and the uniform risk-concentration event from Lemma
\ref{lem:cond_unif_conv}, both of which have probability tending to one.

We first lower-bound the risk of such a persistently biased selected
candidate. Because $p_\phi$ and $\pi^{\mathrm{anchor}}$ are Lipschitz in the
retrieval space, and because Lemma \ref{lem:local_weight} localizes the
weights inside arbitrarily small balls around $z$, define the conditional mean
of the leave-one-out candidate from Lemma \ref{lem:cond_unif_conv} by
\[
\bar q_{i,n}^{(-i)}(m,\alpha)
:=
\mathbb{E}\!\left[
q_{i,n}^{(-i)}(m,\alpha)\mid\mathcal{G}_n
\right].
\]

For any $i\in\mathcal N_{m_n^*}(x)$, Lipschitz continuity and
the leave-one-out construction give
\[
\begin{aligned}
&\left\|
\bar q_{i,n}^{(-i)}(m_n^*,\alpha_n^*)-p_\phi(z_i)
\right\|_2\\
&\quad\ge
\left\|
\bar q_n(x;m_n^*,\alpha_n^*)-p_\phi(z)
\right\|_2
-(L_{\rm anchor}+L_p)\|z_i-z\|_2
-\frac{C}{m_n^*},
\end{aligned}
\]
for some constant $C>0$. By Lemma \ref{lem:bandwidth},
$\|z_i-z\|_2\le\sigma_n=o_p(1)$ uniformly over the
neighborhood, while $m_n^*\ge m_{\min,n}\to\infty$.
Thus, on the event of a bias of at least $c$ at $x$, the
left-hand side is at least $c/2$ with probability tending
to one. Lemma \ref{lem:risk_inf} then yields a positive
excess conditional risk. Since
$H(p_\phi(z_i))=H(p(x))+o_p(1)$ uniformly over the
neighborhood, we obtain
\[
\mathcal{R}_n(x;m_n^*,\alpha_n^*)
\ge H(p(x))+\kappa_c-o_p(1)
\]
for some $\kappa_c>0$.

For example, after choosing the local radius so that the Lipschitz error is
small enough, one may take $\kappa_c=c^2/16$. By Lemma
\ref{lem:cond_unif_conv}, the empirical objective is uniformly close to
$\mathcal{R}_n$, and hence
\[
\mathcal{J}_n(x;m_n^*,\alpha_n^*)
\ge
H(p(x))+\kappa_c-o_p(1).
\]

We now construct a consistent competitor. By Assumption
\ref{ass:candidate_grid}, there exists $m_n^\circ\in\mathcal{M}_n$ such that
$m_n^\circ\to\infty$ and $m_n^\circ/n\to0$, and the grid contains
$\alpha=1$. Consider the pure-local candidate $(m_n^\circ,1)$. Since
$m_n^\circ\in\mathcal{M}_n$ and
$\max\mathcal{M}_n\le m_{\sigma,n}$, its neighborhood radius satisfies
\[
d_{(m_n^\circ)}(x)\le \sigma_n.
\]

Lemma \ref{lem:bandwidth} gives $\sigma_n\to0$ in probability, while
$m_n^\circ\to\infty$ by Assumption \ref{ass:candidate_grid}. Hence the
pure-local neighborhood radius shrinks to zero and its sample size
diverges.

By the definition of the conditional local mean,
\[
\bar p_0(x,m_n^\circ)
=
\frac{
\sum_{i\in\mathcal N_{m_n^\circ}(x)}p_\phi(z_i)+u_K
}{m_n^\circ+1},
\]
where $u_K=(1/K,\ldots,1/K)$.
For a fixed test point satisfying $p(x)=p_\phi(z)$, Assumption
\ref{ass:lipschitz_data} and
$d_{(m_n^\circ)}(x)\le\sigma_n$ yield
\[
\|\bar p_0(x,m_n^\circ)-p(x)\|_2
\le
\frac{m_n^\circ}{m_n^\circ+1}L_p\sigma_n
+
\frac{\|u_K-p(x)\|_2}{m_n^\circ+1}
\xrightarrow{P}0.
\]

Furthermore, Proposition \ref{prop:good_set} gives
\[
\|\tilde p_0(x,m_n^\circ)-\bar p_0(x,m_n^\circ)\|_2
\xrightarrow{P}0.
\]
Therefore, by the triangle inequality,
\[
\|\tilde p_0(x,m_n^\circ)-p(x)\|_2
\xrightarrow{P}0.
\]
Thus both the conditional mean and the empirical local
frequency of the pure-local competitor converge to $p(x)$
in probability.

For each $i\in\mathcal N_{m_n^\circ}(x)$, the leave-one-out
predictor is conditionally independent of $Y_i$ given
$\mathcal G_n$, under the fixed-anchor assumption.
By the Lipschitz continuity of $p_\phi$ and the localization
of the neighborhood, its conditional mean satisfies
\[
\left\|
\mathbb{E}\left[
q_{i,n}^{(-i)}(m_n^\circ,1)\mid\mathcal G_n
\right]-p_\phi(z_i)
\right\|_2
\leq 2L_p\sigma_n+O(1/m_n^\circ)
=o_p(1).
\]

By conditional Hoeffding concentration and
$\log n=o(m_{\min,n})$, the leave-one-out predictions
concentrate uniformly around their conditional means.
Under Assumption \ref{ass:bounded_loss}, all their
coordinates exceed $\delta$ with probability tending
to one. The log-loss is then bounded and continuous.
The complementary event contributes $o_p(1)$ to
the conditional risk, since its probability is
exponentially small in $m_{\min,n}$ and Laplace
smoothing bounds the loss by $O(\log n)$. Therefore,
\[
\mathcal R_n(x;m_n^\circ,1)
=
\sum_{i\in\mathcal N_{m_n^\circ}(x)}
w_i^{(m_n^\circ)}(x)
H\bigl(p_\phi(z_i)\bigr)
+o_p(1).
\]

Since the neighborhood radius converges to zero and
$p_\phi$ is Lipschitz continuous, the entropy terms
converge uniformly to $H(p(x))$. Therefore,
\[
\mathcal R_n(x;m_n^\circ,1)
=
H(p(x))+o_p(1).
\]

Using Lemma \ref{lem:cond_unif_conv} again gives
\[
\mathcal{J}_n(x;m_n^\circ,1)
\le
H(p(x))+o_p(1).
\]

For sufficiently large $n$, the competitor has strictly smaller empirical
objective than any candidate with the non-vanishing bias described above:
\[
\mathcal{J}_n(x;m_n^\circ,1)
<
\mathcal{J}_n(x;m_n^*,\alpha_n^*)
\]
with probability tending to one on that event. This contradicts the definition
of $(m_n^*,\alpha_n^*)$ as a minimizer of $\mathcal{J}_n$. Hence the selected
conditional mean bias must vanish in probability:
\[
\|\bar q_n(x;m_n^*,\alpha_n^*)-p(x)\|_2\xrightarrow{P}0.
\]
Finally, since
\[
\operatorname{Bias}_{\mathrm{LAPE}}(x;m_n^*)
=
\min_{\alpha\in[0,1]}
\left\|
(1-\alpha)\pi^{\mathrm{anchor}}(x)
+
\alpha\bar p_0(x,m_n^*)
-
p(x)
\right\|_2
\]
is no larger than the same expression evaluated at $\alpha=\alpha_n^*$, the
last display also implies
\[
\operatorname{Bias}_{\mathrm{LAPE}}(x;m_n^*)\xrightarrow{P}0.
\]
\end{proof}

\textbf{Step 3: Completing the proof.}

Let $E_n(\varepsilon):=\{\|\hat p_n(x)-p(x)\|_2>\varepsilon\}.$
Using the definition of the final estimator and the conditional mean
$\bar q_n$, we decompose
\begin{align*}
\|\hat p_n(x)-p(x)\|_2
&\le
\|\hat p_n(x)-\bar q_n(x;m_n^*,\alpha_n^*)\|_2
+
\|\bar q_n(x;m_n^*,\alpha_n^*)-p(x)\|_2 \\
&\le
\sup_{m\in\mathcal{M}_n}
\|\tilde p_0(x,m)-\bar p_0(x,m)\|_2
+
\|\bar q_n(x;m_n^*,\alpha_n^*)-p(x)\|_2 .
\end{align*}
The first term converges to zero in probability by Proposition
\ref{prop:good_set}; the second term converges to zero in probability by
Proposition \ref{thm:bias_conv}. Therefore, for every fixed
$\varepsilon>0$,
\[
\mathbb{P}(\|\hat p_n(x)-p(x)\|_2>\varepsilon)\to0.
\]
This proves Theorem 5.1.

\section{Experimental Details}
\label{app:experimental_details}

\subsection{Datasets}
\label{app:datasets}

The experiments cover controlled synthetic diagnostics, tabular classification, medical-image transfer learning, natural-image classification, and multi-view or multimodal settings. The tabular suite includes Obesity Risk~\citep{palechor2019dataset}, Financial Fraud~\citep{dal2015calibrating}, Shuttle and other Statlog-style datasets~\citep{king1995statlog}, ACS Income and Adult~\citep{ding2021retiring,kohavi1996scaling}, Letter Recognition and its noisy variant~\citep{frey1991letter}, Forest Cover~\citep{blackard1999comparative}, MiniBooNE~\citep{roe2005boosted}, EEG Eye State~\citep{roesler2013eeg}, Image Segmentation~\citep{uci1990imageseg}, and Vehicle Silhouettes~\citep{mowforth_vehicle_uci}. The visual suite includes MedMNIST tasks such as PneumoniaMNIST, DermaMNIST, and BreastMNIST~\citep{medmnistv2}, MHIST~\citep{wei2021petri}, CIFAR-10/100~\citep{Krizhevsky2009LearningML}, CIFAR-10-C~\citep{hendrycks2019benchmarking}, and DTD~\citep{cimpoi2014describing}. The multi-view and multimodal evaluations include UCI Multiple Features~\citep{multiple_features_72}, the Optical Recognition of Handwritten Digits dataset~\citep{alpaydin1998optical}, and HAM10000 image--metadata tasks~\citep{tschandl2018ham10000}. Synthetic diagnostics are generated specifically to isolate local heterogeneity, calibration/reference-ratio effects, covariate shift, high-dimensional retrieval, structural noise, architecture depth, adaptive weighting, anchor-capacity and cross-fitting effects, and multi-view interaction.

\subsection{Additional Baselines}
\label{app:additional_baselines}
In addition to classical post-hoc calibrators such as TS/VS~\citep{guo2017calibration}, Dirichlet calibration~\citep{kull2019beyond}, LTS~\citep{Ding_2021_ICCV}, ETS~\citep{zhang2020mix}, and isotonic-regression variants such as IROvA/MIR~\citep{zadrozny2001obtaining,zadrozny2002transforming}, we include recent feature-aware and grouping-based baselines, including FC~\citep{tao2025fc} and SAG~\citep{yang2023sag}. When a baseline can serve as a calibrated anchor, we also evaluate the corresponding LAPE-side refinement on top of that anchor using the same reference split. Detailed full-result tables are collected in Appendix~\ref{app:supplementary_results}.

\subsection{Additional Metrics}
\label{app:additional_metrics}
The main text focuses on NLL, ECE, and accuracy for readability. Complete appendix tables also report ACE when applicable. For a test set of size $N$, let $\hat y_i=\arg\max_y\hat p_y(x_i)$ and $\hat c_i=\max_y\hat p_y(x_i)$. Accuracy is
\[
\mathrm{Acc}=\frac{1}{N}\sum_{i=1}^{N}\mathbf{1}\{\hat y_i=y_i\}.
\]
NLL and ECE are defined in Section~\ref{sec:experiments}. ACE follows the adaptive calibration-error protocol of \citet{nixon2019measuring}: for each class $k$, examples are sorted by the predicted probability $\hat p_k(x_i)$, divided into equal-frequency adaptive bins $\mathcal{B}_{k,m}$, and compared by
\[
\mathrm{ACE}=\frac{1}{K}\sum_{k=1}^{K}\sum_{m=1}^{M}\frac{|\mathcal{B}_{k,m}|}{N}
\left|
\frac{1}{|\mathcal{B}_{k,m}|}\sum_{i\in\mathcal{B}_{k,m}}\mathbf{1}\{y_i=k\}
-
\frac{1}{|\mathcal{B}_{k,m}|}\sum_{i\in\mathcal{B}_{k,m}}\hat p_k(x_i)
\right|.
\]
These metrics are diagnostic complements rather than separate headline objectives: lower NLL, ECE, and ACE are better, while higher accuracy is better. Appendix~\ref{app:supplementary_results} reports the corresponding values for the full method suite.

For appendix tables that compare a calibrated anchor with its LAPE-refined version, 
we report the relative percentage improvement rather than the raw metric value. 
For an error-type metric \(M\), such as NLL, ECE, or ACE, the reported value is
\[
\Delta M(\%) =
100 \times \frac{M(\text{anchor}) - M(\text{anchor}+\text{LAPE})}
{M(\text{anchor})}.
\]
For accuracy, the reported value is
\[
\Delta \mathrm{Acc}(\%) =
100 \times \frac{\mathrm{Acc}(\text{anchor}+\text{LAPE}) - \mathrm{Acc}(\text{anchor})}
{\mathrm{Acc}(\text{anchor})}.
\]
Thus, a positive value always indicates an improvement after applying LAPE, 
whereas a negative value indicates that the LAPE refinement slightly worsens the 
corresponding metric. We use this relative form because LAPE is designed as a 
post-hoc refinement layer on top of an existing anchor rather than as a standalone 
replacement. Reporting relative changes therefore directly measures the additional 
benefit contributed by LAPE and makes improvements comparable across datasets, 
anchors, and metrics with different numerical scales.

\subsection{Backbones and Experimental Frameworks}
\label{app:backbone_frameworks}

Table references below summarize the backbone or experimental framework used by each supplementary result table. The choice of backbone follows the role of each experiment. For image data, we use standard transfer-learning backbones such as ResNet and DenseNet. For tabular data, we use MLP predictors because the datasets are moderate in size and overly complex deep architectures can easily overfit. The same principle is used for synthetic simulations, where compact MLPs are sufficient to isolate the intended diagnostic factor. Unless otherwise stated, synthetic single-view experiments use an MLP with hidden layers $(64,32)$ and ReLU activations. The architecture-depth diagnostic separately uses shallow, medium, and deep MLPs with hidden layers $(32)$, $(64,32)$, and $(128,64,32)$. When an experiment is specially constructed to study a stress scenario rather than to benchmark a standard dataset, the detailed construction is described in the corresponding subsection. The post-hoc calibration and LAPE refinement experiments were run on a single local workstation with an Intel Core 5 120U CPU with 10 cores and 12 logical processors and 32 GB RAM; all paired comparisons within each table were conducted under the same hardware and software environment.

Table~\ref{tab:experimental_framework_summary} summarizes the experimental frameworks used in the supplementary experiments, including the dataset or diagnostic setting and the corresponding backbone or calibration framework for each result table.
\begin{table}[t]
\centering
\caption{\textbf{Summary of experimental frameworks for supplementary experiments.}
Each row reports the dataset or diagnostic setting, together with the backbone or calibration framework used for the corresponding result table.}
\label{tab:experimental_framework_summary}
\scriptsize
\setlength{\tabcolsep}{4pt}
\renewcommand{\arraystretch}{1.10}
\begin{tabularx}{\textwidth}{p{0.15\textwidth}p{0.30\textwidth}X}
\toprule
\textbf{Table} & \textbf{Dataset / setting} & \textbf{Backbone or framework} \\
\midrule

Table~\ref{tab:sim1_full}
& GMM local heterogeneity
& Special oracle-based probability-distortion design; details are given in the corresponding subsection. \\

Table~\ref{tab:sim2_full}
& Sample-ratio sensitivity
& Fixed MLP $(64,32)$ and fixed test set; only the calibration/reference ratio is varied. \\

Table~\ref{tab:sim3_full}
& Covariate shift
& Source-trained MLP $(64,32)$ with target-domain calibration/reference splits. \\

Table~\ref{tab:cifar10c_gaussian}
& CIFAR-10-C Gaussian-noise shift
& Frozen ImageNet-pretrained ResNet18 with a multinomial linear probe trained on clean CIFAR-10; target-domain labeled reference samples are used for calibration. \\

Table~\ref{tab:sim7_full}
& High-dimensional stress test
& MLP $(64,32)$ on synthetic high-dimensional features. \\

Table~\ref{tab:sim5_full_b}
& Structural-noise two moons
& MLP $(64,32)$ on two-moons data with controlled structural noise. \\

Table~\ref{tab:architecture_depth_full}
& Obesity architecture depth
& MLP $(32)$, MLP $(64,32)$, and MLP $(128,64,32)$. \\

Table~\ref{tab:sim8_full}
& Ice-and-Fire ablation
& Special zone-based diagnostic for adaptive weights; details are given in the corresponding subsection. \\

Table~\ref{tab:coop_dac_lape}
& DTD with prompt-tuned CLIP
& OpenAI CLIP ViT-B/32 with CoOp prompt tuning; DAC is additionally evaluated as a CLIP-specific calibration anchor. \\

Table~\ref{tab:train_time_lape}
& Obesity training-time calibration
& MLP $(128,64)$ trained with CE, MDCA, or MbLS objectives, followed by post-hoc LAPE refinement. \\

Table~\ref{tab:cross_fitting}
& Cross-fitting ablation
& Directly generated features with a flexible decision-tree anchor for an anchor-capacity ablation; direct reuse, 50/50 splitting, and 5-fold cross-fitting are compared. \\

Table~\ref{tab:cfaf_synthetic_full}
& Multi-view and noisy-view CFAF stress tests
& View-specific MLP predictors with CFAF retrieval in the augmented representation; the noisy-view block uses five views, with two weak informative views and three pure random views. \\

Table~\ref{tab:cfaf_digits_full}
& UCI Multiple Features
& Two descriptor views are used in this table: Fourier descriptors and Zernike descriptors. Each view is fitted by an MLP $(64,32)$ and calibrated by VS; late fusion combines the two calibrated view-wise probabilities, while CFAF-LAPE performs local refinement in the augmented two-view representation. \\

Table~\ref{tab:cfaf_ham_full}
& HAM10000 image--metadata
& Two binary tasks are reported: malignant-vs-benign and melanoma-vs-nevus. The image view uses frozen ImageNet-pretrained ResNet18 features with a linear head and VS calibration. The metadata view uses age, sex, and anatomical-location variables processed by imputation/scaling/one-hot encoding, followed by an MLP $(64,32)$ and VS calibration. CFAF-LAPE uses the concatenated image--metadata representation for multimodal local refinement. \\

Table~\ref{tab:dynamic_fusion_lape}
& IID multi-view digits
& View-specific MLP $(48,24)$ predictors with QMF or PDF dynamic fusion; CFAF-LAPE performs post-hoc local refinement in the joint representation. \\

Table~\ref{tab:appendix_tabular_full}
& Full tabular suite
& Standard tabular MLP predictors; all methods share the same backbone within each dataset. \\

Table~\ref{tab:image_transfer_full}
& Visual transfer learning
& ResNet18/DenseNet121 for medical tasks; ResNet50 plus linear probe for CIFAR. \\

\bottomrule
\end{tabularx}
\end{table}

\section{Supplementary Experimental Results}\label{app:supplementary_results}

\subsection{Computational Cost}
\label{app:complexity}

We distinguish the one-time index construction cost from per-query inference cost. Before inference, LAPE builds a spatial index such as a KD-tree~\citep{bentley1975multidimensional} or Ball-tree~\citep{omohundro1989five} on $\mathcal{D}_{\text{ref}}$. This costs $\mathcal{O}(N \log N)$ or $\mathcal{O}(N d \log N)$ depending on the index and dimension, and is amortized across future queries.

The online inference process begins with the neighbor retrieval phase, denoted as $T_{\text{retrieve}}$. This component represents the computational time required to query the spatial index and identify the $K_{\text{max}}$ nearest neighbors for a test sample. By leveraging the pre-built index, we avoid the prohibitive linear scan $\mathcal{O}(N \cdot d)$ associated with naive methods. Instead, the search complexity is reduced to $\mathcal{O}(d \log N)$ in the average case. In high-dimensional regimes where distance metrics may degrade, applying PCA as a preconditioner ensures the effective dimension $d'$ remains small, strictly maintaining the efficient logarithmic bound $T_{\text{retrieve}} \approx \mathcal{O}(d' \log N)$.

Following retrieval, the algorithm enters the parameter optimization phase, denoted as $T_{\text{optim}}$. This component measures the time needed to determine the optimal neighborhood size $k^* \in [k_{\text{min}}, K_{\text{max}}]$ and interpolation weight $\alpha^* \in \mathcal{A}$. We employ a LOO protocol that iterates through candidate sizes $k$. For a specific neighborhood size $k$, computing the LOO likelihood involves aggregating evidence from $k$ neighbors. Summing over the search space results in an arithmetic progression of operations $\sum_{k=1}^{K_{\text{max}}} k \approx \mathcal{O}(K_{\text{max}}^2)$. Consequently, considering the grid search over $\alpha$, the optimization complexity is derived as $T_{\text{optim}} = \mathcal{O}(|\mathcal{A}| \cdot K_{\text{max}}^2)$.

Combining these two components, the total inference complexity per query is given by
\begin{equation}
    T_{\text{total}} = T_{\text{retrieve}} + T_{\text{optim}} \approx \mathcal{O}(d' \log N + |\mathcal{A}| \cdot K_{\text{max}}^2),
\end{equation}
where $|\mathcal{A}|$ is the number of interpolation candidates. The quadratic term is controlled by using a moderate $K_{\max}$, motivated by the consistency analysis, and by evaluating $k$ on a grid in practice. The timing study below verifies that this implementation is practical for the dataset sizes used in the paper.

\paragraph{Empirical runtime with acceleration.}

We compare the naive implementation with the accelerated version used in our reruns. The accelerated version implements the same LAPE search in vectorized query batches, so predicted probabilities are identical to the naive loop while inference overhead is lower. We repeat the timing experiment over five random seeds with query batch size $256$.

\begin{table}[H]
\centering
\caption{\textbf{Multi-seed inference-time comparison between the naive and accelerated implementations.} Latency is reported in milliseconds per sample as mean $\pm$ standard deviation over five seeds. Lower latency is better, and higher speedup is better.}
\label{tab:appendix_runtime_speed}
\scriptsize
\setlength{\tabcolsep}{6pt}
\renewcommand{\arraystretch}{1.08}
\begin{tabular}{ccccc}
\toprule
\textbf{$N_{\mathrm{ref}}$} & \textbf{Seeds} & \textbf{Naive ms/sample} & \textbf{Accelerated ms/sample} & \textbf{Speedup} \\
\midrule
250 & 5 & $1.437 \pm 0.264$ & $0.146 \pm 0.019$ & $9.81\times \pm 1.23$ \\
500 & 5 & $2.415 \pm 0.104$ & $0.293 \pm 0.047$ & $8.38\times \pm 1.21$ \\
1000 & 5 & $2.115 \pm 0.019$ & $0.326 \pm 0.029$ & $6.53\times \pm 0.63$ \\
2000 & 5 & $3.033 \pm 0.707$ & $0.632 \pm 0.150$ & $5.00\times \pm 1.56$ \\
\bottomrule
\end{tabular}
\end{table}



\subsection{Oracle Neighborhood-Variation Diagnostic}\label{appendix:Oracle_Neighborhood}

\begin{table}[H]
\centering
\caption{\textbf{Oracle neighborhood-variation diagnostic in a fixed feature space.} Test points are ranked by oracle neighborhood variation $V_k(x)$ with fixed diagnostic size $k=80$. The table reports the bottom $30\%$ and top $30\%$ groups, averaged over five seeds. Lower local bias, MAE, and KL are better; $V_k(x)$ is a diagnostic variation score. 
The oracle neighborhood-variation score is defined below; local bias,
MAE, and KL are assessed relative to the known oracle conditional
probabilities.
}
\label{tab:oracle_smoothness_diagnostic}
\scriptsize
\setlength{\tabcolsep}{5pt}
\renewcommand{\arraystretch}{1.08}
\begin{tabular}{lcccccc}
\toprule
\textbf{Group} & $\mathbf{V_k(x)}$ & \textbf{Local bias} & \textbf{Base MAE} & \textbf{LAPE MAE} & \textbf{Base KL} & \textbf{LAPE KL} \\
\midrule
Low oracle variation & 0.0073 & 0.0024 & 0.0151 & 0.0145 & 0.0124 & 0.0110 \\
High oracle variation & 0.2056 & 0.1542 & 0.1118 & 0.0943 & 0.0727 & 0.0472 \\
\bottomrule
\end{tabular}
\end{table}

In synthetic experiments, the true conditional probability function $\eta$ is known. We therefore use it only for analysis to measure how much the true probability changes inside a fixed diagnostic neighborhood of each test point:
$V_k(x)=k^{-1}\sum_{j\in\mathcal{N}_k(x)}|\eta(x_j)-\eta(x)|$.
Here $k$ is fixed for the diagnostic split ($k=80$ in our implementation) and is separate from the adaptive $k^*$ selected by LAPE. This statistic can be viewed as a finite-neighborhood proxy for the local variation rate of the oracle probability around $x$: small $V_k(x)$ means that the retrieved neighbors have similar oracle probabilities to the query, while large $V_k(x)$ means that the retrieved neighbors already differ from the query in their oracle probabilities. We compute this score for every test point in the same retrieval feature space, sort test points by $V_k(x)$, and report the bottom $30\%$ as the low-variation group and the top $30\%$ as the high-variation group. Table~\ref{tab:oracle_smoothness_diagnostic} shows that LAPE is closer to the oracle probability than the base predictor in both groups, with a larger reduction in the high-variation group, although that group remains intrinsically harder.

\subsection{Local-Heterogeneity Stress Test}

To isolate local heterogeneity from feature-space mismatch, we keep a single GMM feature space and a fixed oracle probability function $\eta(x)$. We then simulate region-specific calibration errors by transforming the oracle probabilities into base probabilities with different temperatures and class-bias shifts across mixture components. Thus, some local neighborhoods are over-confident, others are under-confident, and some have class-specific bias, while the retrieval geometry itself remains unchanged.

Table~\ref{tab:sim1_full} reports raw metric values rather than relative LAPE-side changes. This experiment is designed to isolate local heterogeneity from feature-space mismatch and to illustrate the role of local evidence, so we compare all available methods directly instead of reporting pairwise improvement ratios. Specifically, we generate a fixed GMM feature space and a fixed oracle
conditional probability function, and then construct the base probabilities by applying region-dependent distortions to the oracle probabilities. These distortions use different temperature factors and class-bias shifts across mixture regions, so different neighborhoods require different calibration corrections while the retrieval geometry itself remains unchanged. The results show that global calibrators are limited in this heterogeneous setting because one global remapping cannot correct neighborhoods with different residual errors. LTS is relatively competitive because its local temperature map can partially respond to region-dependent temperature variation. However, the distortion also includes class-bias shifts and more general local residual patterns that are not fully captured by a temperature-only correction. LAPE-side refinements, including LTS+LAPE, further improve the calibration metrics by combining a global or local anchor with empirical label evidence from nearby calibration samples. This supports the intended mechanism: neighborhood evidence helps LAPE handle local heterogeneity that is difficult for global post-hoc calibration and not fully resolved by parametric local-temperature scaling.

\begingroup
\tiny
\setlength{\tabcolsep}{3pt}
\renewcommand{\arraystretch}{1.03}
\begin{longtable}{p{0.27\textwidth}p{0.19\textwidth}rrrr}
\caption{\textbf{Full local-heterogeneity stress-test results on the GMM manifold.} All methods are evaluated under the controlled probability-distortion setting.}
\label{tab:sim1_full}\\
\toprule
\textbf{Setting} & \textbf{Method} & \textbf{NLL} & \textbf{Acc. (\%)} & \textbf{ECE} & \textbf{ACE} \\
\midrule
\endfirsthead
\toprule
\textbf{Setting} & \textbf{Method} & \textbf{NLL} & \textbf{Acc. (\%)} & \textbf{ECE} & \textbf{ACE} \\
\midrule
\endhead
GMM local heterogeneity & Base & 1.0010 & 60.13 & 0.1084 & 0.1103 \\
   & TS & 0.8480 & 60.13 & 0.0394 & 0.0822 \\
   & ETS & 0.8429 & 60.13 & 0.0301 & 0.0789 \\
   & VS & 0.8493 & 60.40 & 0.0444 & 0.0852 \\
   & Dir & 0.8488 & 59.87 & 0.0390 & 0.0863 \\
   & LTS & 0.8171 & 60.13 & 0.0422 & 0.0605 \\
   & IROvA & 0.8356 & 64.93 & 0.0515 & 0.0491 \\
   & MIR & 0.7873 & 60.13 & 0.0228 & 0.0452 \\
   & Pure LAPE & 0.7963 & 65.07 & 0.0407 & 0.0380 \\
   & TS+LAPE & 0.7812 & 64.20 & 0.0267 & 0.0349 \\
   & ETS+LAPE & 0.7830 & 64.33 & 0.0238 & 0.0366 \\
   & VS+LAPE & 0.7816 & 64.13 & 0.0263 & 0.0344 \\
   & Dir+LAPE & 0.7818 & 64.13 & 0.0268 & 0.0351 \\
   & LTS+LAPE & 0.7832 & 64.93 & 0.0335 & 0.0324 \\
   & IROvA+LAPE & 0.8123 & 64.33 & 0.0291 & 0.0339 \\
   & MIR+LAPE & 0.7839 & 64.40 & 0.0234 & 0.0375 \\
\bottomrule
\end{longtable}
\endgroup

\subsection{Sample-Ratio Sensitivity}
\begin{table}[H]
\centering
\caption{\textbf{Sample-ratio sensitivity in the local-residual diagnostic.} Each method cell reports NLL / ECE / accuracy (\%) averaged over five calibration split seeds.}
\label{tab:sim2_full}
\scriptsize
\setlength{\tabcolsep}{4pt}
\renewcommand{\arraystretch}{1.08}
\resizebox{\textwidth}{!}{%
\begin{tabular}{lccccc}
\toprule
\textbf{Ratio} & \textbf{Base} & \textbf{VS} & \textbf{Pure LAPE} & \textbf{VS+LAPE} & \textbf{Gain over VS} \\
\midrule
10\% & 0.1205 / 0.0128 / 97.75 & 0.1116 / 0.0227 / 97.37 & 0.1284 / 0.0642 / 96.89 & 0.0869 / 0.0220 / 97.39 & 22.18\% \\
20\% & 0.1205 / 0.0128 / 97.75 & 0.0974 / 0.0165 / 97.55 & 0.1003 / 0.0429 / 97.33 & 0.0817 / 0.0193 / 97.65 & 16.14\% \\
40\% & 0.1205 / 0.0128 / 97.75 & 0.0959 / 0.0172 / 97.47 & 0.0886 / 0.0326 / 97.48 & 0.0801 / 0.0212 / 97.61 & 16.42\% \\
60\% & 0.1205 / 0.0128 / 97.75 & 0.0951 / 0.0179 / 97.59 & 0.0796 / 0.0267 / 97.61 & 0.0756 / 0.0216 / 97.74 & 20.45\% \\
80\% & 0.1205 / 0.0128 / 97.75 & 0.0954 / 0.0189 / 97.68 & 0.0759 / 0.0240 / 97.69 & 0.0739 / 0.0215 / 97.78 & 22.53\% \\
100\% & 0.1205 / 0.0128 / 97.75 & 0.0951 / 0.0185 / 97.68 & 0.0744 / 0.0229 / 97.77 & 0.0727 / 0.0202 / 97.77 & 23.52\% \\
\bottomrule
\end{tabular}
}
\end{table}

This table is the numeric counterpart of Figure~\ref{fig:sample_ratio_sensitivity}. 
It fixes the base MLP and test set within each seed, and varies only the calibration ratio used by post-hoc methods. 
Since the goal is to isolate how the amount of calibration data affects the LAPE refinement, rather than to run another exhaustive baseline comparison, we report a compact set of representative methods: the uncalibrated base model, VS, Pure LAPE, and VS+LAPE.

The gain over VS is not strictly monotone. 
With very small calibration sets, VS still leaves substantial local residual error, so even sparse neighborhood evidence can yield a visible gain.  At intermediate ratios, VS itself benefits from more calibration samples and narrows the relative gap. 
As the calibration set becomes larger, however, the fixed global form of VS reaches a ceiling, whereas the local empirical component in LAPE continues to improve, so the gain rises again at larger ratios.

\subsection{Covariate-Shift Robustness}

Table~\ref{tab:sim3_full} reports the target-calibrated covariate-shift diagnostic under a controlled GMM simulation.  
The base MLP is trained on easy/core samples from the unshifted source distribution, while calibration samples and test samples are drawn from the full target distribution at each shift level.  
This design creates a controlled mismatch between the source-trained anchor and the target population, so the experiment evaluates whether target-domain calibration samples can repair local residual errors left by the anchor. 
To focus on robustness to distributional shift rather than on a difficult base classification problem, we deliberately use a relatively friendly GMM setting.  
As a result, under the mild shift level, global score-based calibrators such as TS and VS can already remove much of the dominant logit-scale miscalibration, leaving only modest room for additional local refinement.  
As the shift becomes stronger, however, the remaining errors become more structured in the target feature space, and the value of target-domain neighborhood evidence becomes clearer.  
The LAPE-side variants therefore show larger NLL and accuracy improvements at higher shift levels, supporting the role of LAPE as a post-hoc local refinement layer under covariate shift.

\begingroup
\tiny
\setlength{\tabcolsep}{3pt}
\renewcommand{\arraystretch}{1.03}
\begin{longtable}{p{0.27\textwidth}p{0.19\textwidth}rrrr}
\caption{\textbf{Target-calibrated covariate-shift LAPE-side results.} Entries are relative percentage changes versus the corresponding anchor, with positive values indicating improvement.}
\label{tab:sim3_full}\\
\toprule
\textbf{Shift} & \textbf{Method} & \textbf{$\Delta$NLL (\%)} & \textbf{$\Delta$Acc. (\%)} & \textbf{$\Delta$ECE (\%)} & \textbf{$\Delta$ACE (\%)} \\
\midrule
\endfirsthead
\toprule
\textbf{Shift} & \textbf{Method} & \textbf{$\Delta$NLL (\%)} & \textbf{$\Delta$Acc. (\%)} & \textbf{$\Delta$ECE (\%)} & \textbf{$\Delta$ACE (\%)} \\
\midrule
\endhead

Shift 0.4 & Pure LAPE & +3.83 & +0.12 & +15.60 & +18.67 \\
  & TS+LAPE & +1.83 & +0.18 & +9.40 & -10.42 \\
  & VS+LAPE & +1.32 & +0.18 & -3.05 & -11.29 \\
  & Dir+LAPE & +1.40 & +0.18 & -8.96 & +1.10 \\
  & LTS+LAPE & +2.00 & +0.18 & +14.51 & +1.65 \\
  & ETS+LAPE & +1.85 & +0.18 & +8.10 & -8.81 \\
  & IROvA+LAPE & +7.10 & +0.24 & +6.30 & -1.02 \\
  & MIR+LAPE & +2.52 & +0.35 & -7.64 & -0.60 \\

\midrule

Shift 1.2 & Pure LAPE & +4.50 & +0.24 & +31.22 & +9.47 \\
  & TS+LAPE & +1.49 & +0.54 & +4.57 & -6.42 \\
  & VS+LAPE & +1.12 & +0.42 & +14.56 & -0.97 \\
  & Dir+LAPE & +1.69 & +0.24 & +5.82 & -2.11 \\
  & LTS+LAPE & +1.38 & +0.60 & -3.36 & -7.40 \\
  & ETS+LAPE & +1.51 & +0.54 & +4.71 & -6.40 \\
  & IROvA+LAPE & +14.12 & +0.42 & +4.79 & +0.66 \\
  & MIR+LAPE & +6.18 & +0.48 & +7.26 & -6.35 \\

\bottomrule
\end{longtable}
\endgroup

\begingroup 
\tiny 
\setlength{\tabcolsep}{2.7pt} 
\renewcommand{\arraystretch}{1.02} 
\begin{longtable}{p{0.16\textwidth}p{0.19\textwidth}rrrr} 
\caption{\textbf{CIFAR-10-C Gaussian-noise LAPE-side results.} 
Entries are relative percentage changes versus the corresponding anchor, with positive values indicating improvement. 
Severity levels 1, 3, and 5 represent increasingly strong Gaussian-noise corruption.} 
\label{tab:cifar10c_gaussian}\\ 
\toprule 
\textbf{Severity} & \textbf{Method} 
& \textbf{$\Delta$NLL (\%)} 
& \textbf{$\Delta$Acc. (\%)} 
& \textbf{$\Delta$ECE (\%)} 
& \textbf{$\Delta$ACE (\%)} 
\\ 
\midrule 
\endfirsthead 
 
\toprule 
\textbf{Severity} & \textbf{Method} 
& \textbf{$\Delta$NLL (\%)} 
& \textbf{$\Delta$Acc. (\%)} 
& \textbf{$\Delta$ECE (\%)} 
& \textbf{$\Delta$ACE (\%)} 
\\ 
\midrule 
\endhead 
 
1 & Pure LAPE & +39.18 & +16.80 & +87.12 & +49.68 \\ 
  & TS+LAPE & +17.44 & +20.10 & -109.93 & +27.95 \\ 
  & VS+LAPE & +2.61 & +2.39 & -82.98 & -40.95 \\ 
  & Dir+LAPE & +1.50 & +1.16 & -45.51 & -12.64 \\ 
  & LTS+LAPE & +16.50 & +20.70 & -97.52 & +27.26 \\ 
  & ETS+LAPE & +17.47 & +20.01 & -104.91 & +28.14 \\ 
  & IROvA+LAPE & +11.76 & +5.52 & -30.33 & +5.48 \\ 
  & MIR+LAPE & +17.83 & +20.44 & -180.20 & +30.25 \\ 
  & FC+LAPE & +23.23 & +16.71 & +78.90 & +39.37 \\ 
  & FC+TS+LAPE & +16.87 & +18.51 & -182.03 & +26.48 \\ 
  & FC+ETS+LAPE & +16.90 & +18.43 & -182.64 & +26.81 \\ 
  & SAG+LAPE & +21.86 & +20.01 & +0.68 & +32.06 \\ 
 
\midrule 
 
3 & Pure LAPE & +64.35 & +109.43 & +87.99 & +75.32 \\ 
  & TS+LAPE & +28.78 & +111.20 & -192.58 & +56.93 \\ 
  & VS+LAPE & +13.29 & +22.17 & -366.12 & -91.80 \\ 
  & Dir+LAPE & +7.66 & +11.51 & -134.34 & -48.87 \\ 
  & LTS+LAPE & +28.94 & +112.49 & -51.23 & +56.86 \\ 
  & ETS+LAPE & +29.04 & +111.31 & -137.70 & +56.49 \\ 
  & IROvA+LAPE & +20.93 & +34.50 & -78.84 & -0.48 \\ 
  & MIR+LAPE & +28.45 & +118.10 & -242.36 & +52.47 \\ 
  & FC+LAPE & +42.99 & +110.80 & +73.18 & +67.33 \\ 
  & FC+TS+LAPE & +29.19 & +112.23 & -130.01 & +54.54 \\ 
  & FC+ETS+LAPE & +29.19 & +112.23 & -130.01 & +54.54 \\ 
  & SAG+LAPE & +28.46 & +112.59 & -44.55 & +54.46 \\ 
 
\midrule 
 
5 & Pure LAPE & +67.27 & +150.13 & +89.12 & +77.53 \\ 
  & TS+LAPE & +26.93 & +152.24 & -388.26 & +55.20 \\ 
  & VS+LAPE & +15.80 & +37.61 & -339.08 & -105.90 \\ 
  & Dir+LAPE & +9.17 & +17.75 & -126.83 & -46.43 \\ 
  & LTS+LAPE & +29.01 & +152.28 & +9.74 & +59.08 \\ 
  & ETS+LAPE & +27.37 & +152.42 & -281.51 & +54.73 \\ 
  & IROvA+LAPE & +21.03 & +46.90 & -83.35 & -19.50 \\ 
  & MIR+LAPE & +26.62 & +157.83 & -355.54 & +49.38 \\ 
  & FC+LAPE & +44.76 & +142.46 & +76.86 & +70.55 \\ 
  & FC+TS+LAPE & +27.06 & +143.72 & -454.94 & +51.75 \\ 
  & FC+ETS+LAPE & +27.06 & +143.72 & -454.94 & +51.75 \\ 
  & SAG+LAPE & +26.14 & +151.59 & -5.39 & +54.92 \\ 
 
\bottomrule 
\end{longtable} 
\endgroup

We further evaluate LAPE under real-image distribution shift on CIFAR-10-C.  
We consider Gaussian noise and extend the evaluation from severity 1 to severity levels 1, 3, and 5, corresponding to increasingly strong target shifts.  
A ResNet-18 linear probe is trained on clean CIFAR-10, while target-domain samples from each corrupted condition are used for calibration and evaluation.

Table~\ref{tab:cifar10c_gaussian} reports the effect of adding LAPE across a broad range of calibration anchors.  
LAPE improves the NLL across nearly all anchor--severity combinations, while accuracy also shows broad improvements.  
Moreover, the NLL gains generally increase as the corruption severity becomes stronger.  
These results demonstrate that LAPE remains effective under real-image distribution shift and provides robust post-hoc refinement across different calibration baselines.

\subsection{High-Dimensional Stress Test}

\begingroup
\tiny
\setlength{\tabcolsep}{3pt}
\renewcommand{\arraystretch}{1.03}
\begin{longtable}{p{0.27\textwidth}p{0.19\textwidth}rrrr}
\caption{\textbf{Full high-dimensional stress-test results.} Only LAPE-side refinements are reported for each feature dimension. Entries are relative percentage changes versus the corresponding anchor, with positive values indicating improvement.}
\label{tab:sim7_full}\\
\toprule
\textbf{Dimension} & \textbf{Method} & \textbf{$\Delta$NLL (\%)} & \textbf{$\Delta$Acc. (\%)} & \textbf{$\Delta$ECE (\%)} & \textbf{$\Delta$ACE (\%)} \\
\midrule
\endfirsthead
\toprule
\textbf{Dimension} & \textbf{Method} & \textbf{$\Delta$NLL (\%)} & \textbf{$\Delta$Acc. (\%)} & \textbf{$\Delta$ECE (\%)} & \textbf{$\Delta$ACE (\%)} \\
\midrule
\endhead
D=10 & ETS+LAPE & +0.74 & +0.37 & -60.00 & -9.39 \\
  & LTS+LAPE & +2.78 & +0.37 & -3.54 & -17.19 \\
  & Dir+LAPE & +4.67 & +0.76 & -25.49 & -1.92 \\
  & TS+LAPE & +3.59 & +0.49 & -21.49 & -12.20 \\
  & VS+LAPE & +3.79 & +0.62 & -22.80 & -10.65 \\
  & IROvA+LAPE & +7.74 & +0.26 & +8.33 & -5.41 \\
  & Pure LAPE & +12.94 & +0.49 & +39.34 & +13.36 \\
  & MIR+LAPE & +0.91 & +0.37 & -8.70 & -10.14 \\
\midrule
D=50 & ETS+LAPE & +2.65 & +0.00 & -2.46 & -6.18 \\
  & TS+LAPE & +2.69 & +0.00 & -0.90 & -6.42 \\
  & Dir+LAPE & +3.44 & +0.58 & +12.74 & -2.43 \\
  & VS+LAPE & +2.89 & +0.60 & -12.18 & +3.16 \\
  & LTS+LAPE & +2.54 & +0.00 & +25.54 & -6.44 \\
  & MIR+LAPE & +3.72 & +0.58 & +12.29 & +6.25 \\
  & Pure LAPE & +34.46 & +1.18 & +68.69 & +57.55 \\
  & IROvA+LAPE & +10.68 & +1.48 & -9.48 & -15.93 \\
\midrule
D=100 & Dir+LAPE & +2.58 & +2.67 & -51.31 & +10.51 \\
  & VS+LAPE & +2.95 & +2.99 & -47.14 & +1.87 \\
  & ETS+LAPE & +2.62 & +1.99 & -45.55 & +6.11 \\
  & TS+LAPE & +2.62 & +1.99 & -45.55 & +6.11 \\
  & LTS+LAPE & +2.43 & +1.81 & -43.05 & -2.43 \\
  & MIR+LAPE & +11.44 & +1.65 & -25.86 & -5.62 \\
  & IROvA+LAPE & +11.64 & +2.18 & -41.86 & -5.73 \\
  & Pure LAPE & +34.92 & +2.16 & +66.80 & +71.75 \\
\midrule
D=200 & LTS+LAPE & +1.08 & +1.11 & -5.26 & -10.22 \\
  & Dir+LAPE & +1.33 & +1.09 & -5.17 & +13.11 \\
  & MIR+LAPE & +2.35 & +1.29 & +24.47 & -2.20 \\
  & ETS+LAPE & +2.04 & +2.04 & +18.80 & +2.81 \\
  & TS+LAPE & +1.70 & +1.65 & -10.53 & +1.04 \\
  & VS+LAPE & +1.53 & +1.09 & +15.43 & +3.36 \\
  & Pure LAPE & +38.27 & +3.51 & +73.24 & +61.77 \\
  & IROvA+LAPE & +1.85 & +1.67 & +27.90 & +8.65 \\
\bottomrule
\end{longtable}
\endgroup

\begingroup
\tiny
\setlength{\tabcolsep}{3pt}
\renewcommand{\arraystretch}{1.03}
\begin{longtable}{p{0.27\textwidth}p{0.19\textwidth}rrrr}
\caption{\textbf{Full structural-noise results on the two-moons setting.} Only LAPE-side refinements are reported at each noise level. Entries are relative percentage changes versus the corresponding anchor, with positive values indicating improvement.}
\label{tab:sim5_full_b}\\
\toprule
\textbf{Noise} & \textbf{Method} & \textbf{$\Delta$NLL (\%)} & \textbf{$\Delta$Acc. (\%)} & \textbf{$\Delta$ECE (\%)} & \textbf{$\Delta$ACE (\%)} \\
\midrule
\endfirsthead
\toprule
\textbf{Noise} & \textbf{Method} & \textbf{$\Delta$NLL (\%)} & \textbf{$\Delta$Acc. (\%)} & \textbf{$\Delta$ECE (\%)} & \textbf{$\Delta$ACE (\%)} \\
\midrule
\endhead
Noise 0.1 & TS+LAPE & +6.06 & +0.00 & +17.02 & -93.82 \\
  & ETS+LAPE & +6.06 & +0.00 & +17.02 & -93.82 \\
  & Dir+LAPE & +1.71 & +0.00 & -5.96 & +0.50 \\
  & VS+LAPE & +1.66 & +0.00 & -5.63 & +0.41 \\
  & Pure LAPE & +57.43 & +0.00 & +59.86 & +48.74 \\
  & LTS+LAPE & +3.66 & +0.00 & -11.47 & -8.32 \\
  & IROvA+LAPE & -0.61 & +0.00 & +32.86 & +7.54 \\
  & MIR+LAPE & -0.31 & +0.00 & +21.42 & -29.16 \\
\midrule
Noise 0.2 & LTS+LAPE & +13.91 & +0.31 & +26.27 & -37.63 \\
  & ETS+LAPE & +15.69 & +0.31 & +25.17 & -117.61 \\
  & TS+LAPE & +15.69 & +0.31 & +25.17 & -117.61 \\
  & VS+LAPE & +16.09 & +0.41 & +19.26 & -168.82 \\
  & Dir+LAPE & +16.09 & +0.41 & +19.25 & -168.70 \\
  & Pure LAPE & +28.41 & +0.31 & +64.40 & +67.56 \\
  & MIR+LAPE & +14.94 & +0.31 & +61.09 & +13.47 \\
  & IROvA+LAPE & +8.79 & +1.15 & +19.55 & -57.26 \\
\midrule
Noise 0.3 & LTS+LAPE & +4.41 & +1.45 & +21.80 & +27.88 \\
  & Dir+LAPE & +8.46 & +1.67 & +7.05 & +15.92 \\
  & VS+LAPE & +8.46 & +1.67 & +7.05 & +15.93 \\
  & ETS+LAPE & +7.73 & +1.56 & -6.26 & +24.01 \\
  & TS+LAPE & +7.73 & +1.56 & -6.26 & +24.01 \\
  & Pure LAPE & +9.51 & +1.45 & +47.62 & +32.75 \\
  & MIR+LAPE & +7.54 & +1.78 & -28.77 & -26.20 \\
  & IROvA+LAPE & +19.19 & +0.78 & -64.78 & -54.37 \\
\bottomrule
\end{longtable}
\endgroup

Table~\ref{tab:sim7_full} provides a deliberately challenging high-dimensional stress test for LAPE. When the feature dimension increases to 200, the relative gains become smaller than those in lower-dimensional settings, with most improvements around the one-percent level. This behavior is consistent with the difficulty of local neighborhood estimation in high-dimensional spaces. Nevertheless, LAPE still provides positive calibration gains for most anchors. This indicates that the proposed local refinement remains stable rather than degrading under high-dimensional feature representations.

Table~\ref{tab:sim5_full_b} stresses LAPE by progressively corrupting the local neighborhood information in the calibration feature space. Larger noise makes nearby calibration samples less informative for correcting the calibration residual of a test point, which naturally reduces the relative gains. Even under this stress, LAPE still yields positive calibration improvements for most anchors, indicating that the proposed local refinement can tolerate moderate degradation of local information.

\subsection{Architecture-Depth Diagnostic}
\begingroup
\tiny
\setlength{\tabcolsep}{3pt}
\renewcommand{\arraystretch}{1.03}
\begin{longtable}{p{0.27\textwidth}p{0.19\textwidth}rrrr}
\caption{\textbf{Architecture-depth results on Obesity.} Each block reports only LAPE-side refinements for the corresponding backbone. Entries are relative percentage changes versus the corresponding anchor, with positive values indicating improvement.}
\label{tab:architecture_depth_full}\\
\toprule
\textbf{Dataset / backbone} & \textbf{Method} & \textbf{$\Delta$NLL (\%)} & \textbf{$\Delta$Acc. (\%)} & \textbf{$\Delta$ECE (\%)} & \textbf{$\Delta$ACE (\%)} \\
\midrule
\endfirsthead
\toprule
\textbf{Dataset / backbone} & \textbf{Method} & \textbf{$\Delta$NLL (\%)} & \textbf{$\Delta$Acc. (\%)} & \textbf{$\Delta$ECE (\%)} & \textbf{$\Delta$ACE (\%)} \\
\midrule
\endhead
Obesity / MLP-deep & ETS+LAPE & +10.91 & +1.24 & -14.09 & -6.42 \\
  & TS+LAPE & +10.88 & +1.24 & -14.29 & -5.82 \\
  & Pure LAPE & +8.55 & +1.24 & -29.34 & -12.48 \\
  & Dir+LAPE & +10.81 & -0.41 & +7.78 & -13.06 \\
  & LTS+LAPE & +5.57 & +1.24 & +3.82 & +3.21 \\
  & MIR+LAPE & +8.81 & +1.24 & +12.10 & +10.52 \\
  & VS+LAPE & +9.25 & +1.45 & +36.90 & +20.16 \\
  & IROvA+LAPE & +30.15 & +0.62 & +23.96 & +8.97 \\
\midrule
Obesity / MLP-medium & Dir+LAPE & +2.12 & +0.20 & +6.79 & -9.93 \\
  & TS+LAPE & +0.35 & +0.20 & +18.99 & -6.54 \\
  & ETS+LAPE & +0.35 & +0.20 & +18.99 & -6.54 \\
  & Pure LAPE & +0.70 & +0.20 & +35.15 & +7.50 \\
  & VS+LAPE & +1.04 & +0.40 & -26.25 & -0.88 \\
  & LTS+LAPE & +0.57 & +0.20 & +40.36 & +11.41 \\
  & MIR+LAPE & +0.61 & +0.20 & +7.53 & +8.90 \\
  & IROvA+LAPE & +9.08 & +0.41 & +18.64 & +9.87 \\
\midrule
Obesity / MLP-shallow & TS+LAPE & +2.08 & +0.22 & +25.98 & +17.72 \\
  & ETS+LAPE & +1.48 & +0.43 & +19.38 & +12.84 \\
  & LTS+LAPE & +3.49 & +0.00 & -11.85 & +0.35 \\
  & Pure LAPE & +3.44 & +0.00 & +25.03 & +19.18 \\
  & Dir+LAPE & +3.15 & +0.67 & +5.70 & +6.25 \\
  & MIR+LAPE & +9.14 & +0.22 & -19.92 & -13.91 \\
  & VS+LAPE & +2.87 & +0.00 & +15.87 & +13.65 \\
  & IROvA+LAPE & +21.20 & +0.00 & -23.10 & +2.40 \\
\bottomrule
\end{longtable}
\endgroup

The architecture-depth diagnostic shows different regimes of the Obesity backbone. The Base accuracies are 87.50\% for MLP-shallow, 93.94\% for MLP-medium, and 91.29\% for MLP-deep. The shallow model is relatively underfitted, leaving useful but less structured local information for refinement. The medium model is the best-matched backbone, with the highest accuracy and lowest Base NLL, so the remaining room for improvement is naturally limited. The deep model shows signs of mild overfitting: its Base accuracy and NLL are worse than those of the medium model. Nevertheless, LAPE still extracts useful local information from the deep representation and improves calibration for most anchors. Overall, the results suggest that LAPE can refine probabilities across underfitted, well-fitted, and mildly overfitted backbones, although the magnitude of improvement depends on how much reliable local information remains in the learned feature space.

\subsection{Adaptive-Strategy Ablation}
\begin{table}[H]
\centering
\caption{\textbf{Ice-and-Fire ablation study.} Zone columns report region-wise NLL.}
\label{tab:sim8_full}
\scriptsize
\setlength{\tabcolsep}{5pt}
\renewcommand{\arraystretch}{1.08}
\begin{tabular}{lrrrrrr}
\toprule
\textbf{Method} & \textbf{NLL} & \textbf{Acc. (\%)} & \textbf{ECE} & \textbf{ACE} & \textbf{Zone A} & \textbf{Zone B} \\
\midrule
Adaptive LAPE & 0.0384 & 99.00 & 0.0122 & 0.0070 & 0.0303 & 0.0467 \\
Fixed (A=0.2, K=30) & 0.1660 & 94.15 & 0.0589 & 0.0593 & 0.2262 & 0.1044 \\
Base Model (Alpha=0) & 0.1756 & 90.90 & 0.0339 & 0.0306 & 0.3016 & 0.0467 \\
Fixed (A=0.8, K=30) & 0.1949 & 99.25 & 0.1540 & 0.1321 & 0.0766 & 0.3161 \\
Fixed (A=0.5, K=100) & 0.2343 & 94.20 & 0.1101 & 0.1053 & 0.2494 & 0.2188 \\
\bottomrule
\end{tabular}
\end{table}

Table~\ref{tab:sim8_full} reports an ablation stress test for the adaptive weighting strategy in LAPE. The fixed-weight variants apply the same correction coefficient to all test points, so they cannot account for spatially heterogeneous fitting quality in the feature space. We therefore divide the test samples into Zone A and Zone B, corresponding to regions where the anchor predictor is fitted to different degrees: one region has more reliable anchor probabilities, while the other contains stronger local calibration residuals and requires a different correction strength. Under this deliberately heterogeneous setting, the adaptive strategy improves over fixed coefficients not only overall but also within each zone. This indicates that the benefit of LAPE is not simply an averaging effect across regions; even inside a given region, test-point-specific adaptation provides a more effective local correction than a globally fixed weight.

\subsection{Evaluations on Modern Vision-Language Models} 
\label{app:modern_vlm} 

LAPE relies on the feature space to provide locally informative label structure.   
For CLIP, whose representations are learned through broad image--text alignment, it is therefore important to verify whether the frozen image neighborhoods remain useful after downstream prompt tuning changes the class prototypes, logits, and decision boundaries. 
 
To examine this issue, we conduct a targeted experiment on DTD using OpenAI CLIP ViT-B/32 and the standard CoOp prompt-tuning objective~\citep{Zhou2022CoOp}.   
The image and text encoders are frozen, and only shared continuous context tokens are trained.   
We use the official base-to-new split with 16-shot base-class training, a prompt length of 16, and 100 training epochs.   
LAPE is evaluated on labeled novel-class reference/test splits from one fixed CoOp checkpoint, with results averaged over five stratified splits. 
Under CoOp, prompt tuning changes the text prototypes and prediction boundaries while leaving the frozen CLIP image embeddings, and hence LAPE's retrieval neighborhoods, unchanged.   
We additionally include Distance-Aware Calibration (DAC)~\citep{Wang2024DAC} to examine whether LAPE remains complementary to a CLIP-specific calibration method. 
 
\begingroup 
\tiny 
\setlength{\tabcolsep}{3pt} 
\renewcommand{\arraystretch}{1.03} 
\begin{longtable}{p{0.24\textwidth}rrrr} 
 
\caption{\textbf{CoOp/DAC LAPE-side results on DTD.} 
Entries are relative percentage changes versus the corresponding anchor, with positive values indicating improvement.} 
\label{tab:coop_dac_lape}\\ 
 
\toprule 
\textbf{Method} 
& \textbf{$\Delta$NLL (\%)} 
& \textbf{$\Delta$Acc. (\%)} 
& \textbf{$\Delta$ECE (\%)} 
& \textbf{$\Delta$ACE (\%)} 
\\ 
\midrule 
\endfirsthead 
 
\toprule 
\textbf{Method} 
& \textbf{$\Delta$NLL (\%)} 
& \textbf{$\Delta$Acc. (\%)} 
& \textbf{$\Delta$ECE (\%)} 
& \textbf{$\Delta$ACE (\%)} 
\\ 
\midrule 
\endhead 
 
CoOp+LAPE 
& +25.87 
& +20.45 
& +67.73 
& +35.74 
\\ 
 
CoOp+DAC+LAPE 
& +13.45 
& +21.15 
& +31.34 
& +24.91 
\\ 
 
\bottomrule 
\end{longtable} 
\endgroup 
 
As shown in Table~\ref{tab:coop_dac_lape}, adding LAPE consistently improves NLL and accuracy over both CoOp and CoOp+DAC.   
LAPE reduces NLL by $25.87\%$ over CoOp and by $13.45\%$ over CoOp+DAC, while improving accuracy by more than $20\%$ in both settings.   
Overall, these results show that the frozen CLIP image neighborhoods remain useful for refining prompt-adapted predictions and that LAPE provides complementary gains beyond DAC.

\subsection{Compatibility with Training-Time Calibration Objectives}
\label{app}

We further examine whether LAPE remains effective when calibration is incorporated directly into the model-training objective.
In addition to standard cross-entropy (CE) training, we consider MDCA~\citep{Hebbalaguppe2022MDCA} and MbLS~\citep{Liu2022MbLS} as representative training-time calibration objectives.
On Obesity, the full MLP is trained under each objective over five independent training seeds, after which LAPE is applied post hoc using the same calibration protocol.

\begingroup
\tiny
\setlength{\tabcolsep}{3pt}
\renewcommand{\arraystretch}{1.03}
\begin{longtable}{p{0.24\textwidth}rrrr}

\caption{\textbf{Training-time calibration LAPE-side results on Obesity.}
Entries are relative percentage changes versus the corresponding anchor, with positive values indicating improvement.}
\label{tab:train_time_lape}\\

\toprule
\textbf{Method}
& \textbf{$\Delta$NLL (\%)}
& \textbf{$\Delta$Acc. (\%)}
& \textbf{$\Delta$ECE (\%)}
& \textbf{$\Delta$ACE (\%)}
\\
\midrule
\endfirsthead

\toprule
\textbf{Method}
& \textbf{$\Delta$NLL (\%)}
& \textbf{$\Delta$Acc. (\%)}
& \textbf{$\Delta$ECE (\%)}
& \textbf{$\Delta$ACE (\%)}
\\
\midrule
\endhead

CE+LAPE
& +3.27
& +0.70
& -8.95
& -3.66
\\

MDCA+LAPE
& +3.74
& +0.80
& -2.78
& -6.83
\\

MbLS+LAPE
& +22.54
& +2.40
& +7.50
& +10.40
\\

\bottomrule
\end{longtable}
\endgroup

As shown in Table~\ref{tab:train_time_lape}, adding LAPE further improves the primary NLL metric over CE-, MDCA-, and MbLS-trained predictors, while accuracy also improves.
The NLL improvement is particularly pronounced for MbLS, reaching $22.54\%$. These results show that LAPE can provide additional post-hoc improvement even when calibration has already been incorporated into the training objective.

\subsection{Cross-Fitting versus Data Splitting}
\label{app:cross_fitting}

We further examine the role of cross-fitting through a ablation study with a flexible tree-based anchor.
We compare direct data reuse, an independent 50/50 data split, and 5-fold cross-fitting using $200$ labeled calibration samples, with results averaged over 20 independent seeds.

\begingroup
\tiny
\setlength{\tabcolsep}{3pt}
\renewcommand{\arraystretch}{1.03}
\begin{longtable}{p{0.24\textwidth}rrrr}

\caption{\textbf{Cross-fitting versus data-splitting results.}
Results are reported as mean$\pm$standard deviation over 20 independent seeds.}
\label{tab:cross_fitting}\\

\toprule
\textbf{Strategy}
& \textbf{NLL}
& \textbf{Acc.}
& \textbf{ECE}
& \textbf{ACE}
\\
\midrule
\endfirsthead

\toprule
\textbf{Strategy}
& \textbf{NLL}
& \textbf{Acc.}
& \textbf{ECE}
& \textbf{ACE}
\\
\midrule
\endhead

Direct reuse
& 1.3884$\pm$0.1132
& 0.6485$\pm$0.0291
& 0.3316$\pm$0.0291
& 0.3227$\pm$0.0300
\\

50/50 split
& 0.6198$\pm$0.0304
& 0.6816$\pm$0.0193
& 0.0957$\pm$0.0302
& 0.0935$\pm$0.0303
\\

5-fold cross-fitting
& \textbf{0.5951$\pm$0.0284}
& \textbf{0.7021$\pm$0.0228}
& \textbf{0.0757$\pm$0.0206}
& \textbf{0.0728$\pm$0.0193}
\\

\bottomrule
\end{longtable}
\endgroup

As shown in Table~\ref{tab:cross_fitting}, 5-fold cross-fitting achieves the best performance across all four metrics.
It avoids direct in-sample anchor evaluation while retaining all labeled calibration samples, whereas the 50/50 split sacrifices half of the available data at each stage.
These results support cross-fitting as a data-efficient choice when flexible anchors are used under limited calibration data.
\subsection{CFAF Experiments}
\label{app:cfaf_experiments}

This subsection reports CFAF-related experiments separately from the tabular leaderboard.
The tables include synthetic multi-view and noisy-view stress tests, UCI multi-view digits, and HAM10000 image-plus-metadata experiments.

\begin{table}[t]
\centering
\tiny
\setlength{\tabcolsep}{3pt}
\renewcommand{\arraystretch}{1.03}
\caption{\textbf{Full CFAF synthetic stress-test results.}}
\label{tab:cfaf_synthetic_full}

\begin{tabular}{p{0.25\textwidth}p{0.25\textwidth}rrrr}
\toprule
\textbf{Scenario}
& \textbf{Method}
& \textbf{NLL}
& \textbf{Acc. (\%)}
& \textbf{ECE}
& \textbf{ACE}
\\
\midrule

Interaction (XOR)
& CFAF-LAPE
& 0.4134
& 84.80
& 0.0567
& 0.0639
\\

& Fused (VS+Weight)
& 0.5067
& 78.96
& 0.0985
& 0.1037
\\

& VS View A
& 0.5439
& 68.72
& 0.0708
& 0.0753
\\

\midrule

Local Heterogeneity (Blobs)
& CFAF-LAPE
& 0.3940
& 79.36
& 0.0696
& 0.0696
\\

& Fused (VS+Weight)
& 0.4643
& 78.88
& 0.0473
& 0.0610
\\

& VS View B
& 0.4651
& 78.96
& 0.0424
& 0.0600
\\

\midrule

Noisy XOR views
& CFAF (ZCA + LAPE)
& 0.6102
& 69.47
& 0.0665
& 0.0661
\\

& Optimized Fusion
& 0.6813
& 64.60
& 0.1209
& 0.1536
\\

& Best Single (View 0)
& 0.6847
& 54.67
& 0.0123
& 0.0461
\\

\bottomrule
\end{tabular}
\end{table}
\textbf{Cross-view interaction.}
The interaction setting reflects the main motivation of CFAF: the predictive signal may not be fully visible from either view alone, but emerges from their combination. In this case, view-specific calibration is insufficient because each single-view predictor only observes a partial and potentially ambiguous signal. CFAF addresses this limitation by constructing the local correction in an augmented multi-view representation, allowing the calibration neighborhood to capture reliability patterns that depend jointly on the two views.

\textbf{Nonlinear fusion beyond late fusion.}
The nonlinear stress test examines a different limitation of simple late fusion. A fixed linear combination of calibrated view-wise probabilities can improve over single-view predictors, but it cannot represent nonlinear cross-view structures when the reliability pattern varies across the feature space. CFAF provides a post-hoc alternative: instead of learning a new end-to-end fusion network, it uses the augmented feature representation to perform local residual correction, thereby recovering part of the nonlinear fusion effect through calibration-based neighborhood information.

\textbf{Noisy multi-view signals.}
The noisy-view stress test evaluates whether CFAF remains useful when the joint representation contains both informative and irrelevant views. In this setting, we construct five views: the first two views contain weak but useful XOR-related information, while the remaining three views are purely random. This setting should not be interpreted as assuming that arbitrary noise views are helpful. Rather, it tests whether CFAF can still extract the informative component of the joint representation when the multi-view space is contaminated by irrelevant views and single-view predictors become unreliable. The results suggest that CFAF can tolerate such noisy multi-view representations and still recover useful cross-view calibration information.

\paragraph{Real-data results.}
This table summarizes the real-data CFAF cases. On the multi-view UCI Multiple Features benchmark, CFAF-LAPE improves over both single-view predictors and late fusion. On the multimodal HAM10000 image-plus-metadata tasks, CFAF-LAPE also achieves favorable NLL and calibration results compared with single-modality and late-fusion baselines. These results suggest that the augmented local-refinement strategy is useful not only for controlled multi-view stress tests, but also for real multi-view and multimodal data.

\begingroup
\tiny
\setlength{\tabcolsep}{3pt}
\renewcommand{\arraystretch}{1.03}
\begin{longtable}{p{0.25\textwidth}p{0.24\textwidth}rrrr}
\caption{\textbf{Real multi-view and multimodal CFAF results.} Each block reports two single-view baselines, a late-fusion baseline, and CFAF-LAPE.}
\label{tab:cfaf_digits_full}\label{tab:cfaf_ham_full}\\
\toprule
\textbf{Dataset} & \textbf{Method} & \textbf{NLL} & \textbf{Acc. (\%)} & \textbf{ECE} & \textbf{ACE} \\
\midrule
\endfirsthead
\toprule
\textbf{Dataset} & \textbf{Method} & \textbf{NLL} & \textbf{Acc. (\%)} & \textbf{ECE} & \textbf{ACE} \\
\midrule
\endhead
UCI Multiple Features & View A only & 0.1654 & 94.00 & 0.0259 & 0.0373 \\
  & View B only & 0.1497 & 95.20 & 0.0235 & 0.0194 \\
  & Late fusion & 0.1117 & 96.00 & 0.0294 & 0.0147 \\
  & CFAF-LAPE & 0.0919 & 96.40 & 0.0252 & 0.0156 \\
\midrule
HAM10000 malignant vs benign & Image only & 0.3461 & 84.98 & 0.0146 & 0.0250 \\
  & Metadata only & 0.4167 & 81.95 & 0.0210 & 0.0259 \\
  & Late fusion & 0.3409 & 84.85 & 0.0223 & 0.0215 \\
  & CFAF-LAPE & 0.3194 & 85.71 & 0.0099 & 0.0130 \\
\midrule
HAM10000 melanoma vs nevus & Image only & 0.2551 & 89.28 & 0.0199 & 0.0155 \\
  & Metadata only & 0.3748 & 86.03 & 0.0358 & 0.0704 \\
  & Late fusion & 0.2553 & 89.56 & 0.0234 & 0.0242 \\
  & CFAF-LAPE & 0.2243 & 90.68 & 0.0140 & 0.0209 \\
\bottomrule
\end{longtable}
\endgroup

\subsection{Compatibility with Stronger Dynamic-Fusion Anchors}
\label{app:dynamic_fusion}
We further evaluate whether CFAF-LAPE remains useful for stronger sample-adaptive multi-view fusion strategies.
We include QMF~\citep{Zhang2023QMF} and PDF~\citep{Cao2024PDF} as representative learned dynamic-fusion methods.
QMF performs sample-adaptive quality-aware fusion, while PDF further incorporates confidence and cross-view collaborative information into the fusion process.

Experiments are conducted on a two-view handwritten-digit benchmark, with results averaged over ten independent seeds.
For each comparison, CFAF-LAPE is applied only as a post-hoc local residual refinement on top of the trained fusion anchor, without modifying the underlying fusion model.

\begingroup
\tiny
\setlength{\tabcolsep}{3pt}
\renewcommand{\arraystretch}{1.03}
\begin{longtable}{p{0.24\textwidth}rrrr}

\caption{\textbf{Dynamic-fusion LAPE-side results on the multi-view digits benchmark.}
Entries are relative percentage changes versus the corresponding anchor, with positive values indicating improvement.}
\label{tab:dynamic_fusion_lape}\\

\toprule
\textbf{Method}
& \textbf{$\Delta$NLL (\%)}
& \textbf{$\Delta$Acc. (\%)}
& \textbf{$\Delta$ECE (\%)}
& \textbf{$\Delta$ACE (\%)}
\\
\midrule
\endfirsthead

\toprule
\textbf{Method}
& \textbf{$\Delta$NLL (\%)}
& \textbf{$\Delta$Acc. (\%)}
& \textbf{$\Delta$ECE (\%)}
& \textbf{$\Delta$ACE (\%)}
\\
\midrule
\endhead

QMF+CFAF-LAPE
& +44.67
& +2.81
& +55.93
& +56.95
\\

PDF+CFAF-LAPE
& +25.49
& +2.13
& +22.54
& +32.22
\\

\bottomrule
\end{longtable}
\endgroup

As shown in Table~\ref{tab:dynamic_fusion_lape}, adding CFAF-LAPE improves the primary NLL metric and accuracy over both QMF and PDF.
The relative NLL reductions are $44.67\%$ over QMF and $25.49\%$ over PDF, while accuracy improves by $2.81\%$ and $2.13\%$, respectively.
The mean ECE and ACE are also improved for both anchors.
These results show that CFAF-LAPE remains complementary to learned sample-adaptive multi-view fusion by providing additional post-hoc refinement through local evidence in the joint representation.}
\subsection{Additional Tabular and Visual Benchmarks}
\label{app:additional_tabular_visual}
\paragraph{Supplementary results on diverse tabular datasets.}
Table~\ref{tab:appendix_tabular_full} reports the full tabular results across
multiple application domains. Since these entries are relative percentage
changes against the corresponding anchors, they should be interpreted as the
additional refinement contributed by LAPE rather than as a standalone
leaderboard among different anchors. Except for the financial-fraud dataset,
where the gains are limited, LAPE improves NLL for most anchor--dataset pairs.
These results support the practical utility of LAPE as a post-hoc refinement
layer across heterogeneous tabular problems.

\paragraph{Visual transfer-learning results across medical and natural-image benchmarks.}
Table~\ref{tab:image_transfer_full} reports the combined visual transfer-learning
results. The medical experiments evaluate LAPE on feature spaces induced by
different ImageNet-pretrained backbones, including ResNet18 and DenseNet121,
while the CIFAR experiments use frozen ImageNet-pretrained ResNet50 features
with a multinomial linear probe. These results suggest that LAPE is not specific
to a single visual backbone: it can serve as a plug-in refinement layer for
different visual classifiers and calibrators without retraining the underlying
visual model.

Taken together, the tabular and visual results show a common pattern: LAPE most
consistently improves NLL, the primary probabilistic objective in our evaluation,
while largely preserving accuracy across datasets and backbones. Diagnostic
calibration metrics such as ECE also improve in many cases, and remain
comparable when the anchor is already strong or when the local retrieval geometry
is less informative. This behavior is consistent with the role of LAPE as a
residual probability-refinement module: it improves probabilistic quality without
substantially changing the decision rule of the underlying classifier.

\begin{table}[p]
\centering
\caption{\textbf{Full tabular LAPE-side refinement results.} Each dataset block reports relative percentage changes versus the corresponding anchor, with positive values indicating improvement; thus, $+$ is better and $-$ is worse for all $\Delta$ columns.}
\label{tab:appendix_tabular_full}
\tiny
\setlength{\tabcolsep}{2.0pt}
\renewcommand{\arraystretch}{0.78}
\begin{adjustbox}{max totalsize={\textwidth}{0.88\textheight},center}
\begin{tabular}{p{0.25\textwidth}p{0.18\textwidth}rrrr}
\toprule
\textbf{Dataset} & \textbf{Method} & \textbf{$\Delta$NLL (\%)} & \textbf{$\Delta$Acc. (\%)} & \textbf{$\Delta$ECE (\%)} & \textbf{$\Delta$ACE (\%)} \\
\midrule
Acs income & TS+LAPE & +5.15 & +1.66 & +62.71 & +48.18 \\
  & VS+LAPE & +4.65 & +1.44 & +62.05 & +29.11 \\
  & Dir+LAPE & +4.65 & +1.44 & +62.22 & +29.11 \\
  & MIR+LAPE & +4.80 & +1.61 & -21.50 & +24.35 \\
  & ETS+LAPE & +4.84 & +1.53 & -19.55 & +26.83 \\
  & IROvA+LAPE & +4.75 & +1.33 & -245.12 & +17.82 \\
  & LTS+LAPE & +2.94 & +1.45 & +18.50 & +27.58 \\
  & Pure LAPE & +11.52 & +1.45 & +43.38 & +42.98 \\
\midrule
Adult income & ETS+LAPE & +7.50 & +1.95 & -69.39 & +46.57 \\
  & VS+LAPE & +6.97 & +2.46 & +29.49 & +55.00 \\
  & Dir+LAPE & +6.97 & +2.46 & +29.48 & +55.00 \\
  & TS+LAPE & +9.25 & +2.10 & +48.28 & +62.08 \\
  & MIR+LAPE & +7.55 & +1.98 & -16.50 & +48.87 \\
  & IROvA+LAPE & +5.56 & +1.66 & -119.58 & -52.74 \\
  & LTS+LAPE & +8.19 & +1.72 & +2.24 & +43.29 \\
  & Pure LAPE & +32.77 & +1.61 & +54.58 & +54.80 \\
\midrule
Financial fraud & ETS+LAPE & -1.01 & -0.01 & -80.77 & -37.22 \\
  & TS+LAPE & -1.01 & -0.01 & -70.56 & -37.73 \\
  & VS+LAPE & -1.00 & -0.02 & -84.75 & -23.11 \\
  & Dir+LAPE & -1.00 & -0.02 & -85.90 & -23.11 \\
  & Pure LAPE & -1.03 & -0.01 & -56.84 & -46.25 \\
  & MIR+LAPE & -0.89 & -0.03 & -55.08 & -9.97 \\
  & IROvA+LAPE & -0.45 & -0.01 & -36.77 & -17.65 \\
  & LTS+LAPE & +8.24 & -0.03 & -10.66 & -5.41 \\
\midrule
Forest cover & MIR+LAPE & +3.68 & +0.58 & +8.22 & +28.92 \\
  & LTS+LAPE & +3.58 & +0.62 & -7.09 & +23.24 \\
  & ETS+LAPE & +3.63 & +0.59 & -2.74 & +28.18 \\
  & VS+LAPE & +2.66 & +0.37 & -1.53 & -4.89 \\
  & Pure LAPE & +18.94 & +0.51 & +66.93 & +60.07 \\
  & TS+LAPE & +4.10 & +0.62 & -7.49 & +17.78 \\
  & Dir+LAPE & +2.31 & +0.20 & -0.37 & -4.68 \\
  & IROvA+LAPE & +3.28 & +0.41 & +22.77 & -0.05 \\
\midrule
Letter recognition & TS+LAPE & +2.51 & +0.11 & -19.98 & -0.90 \\
  & ETS+LAPE & +2.47 & +0.11 & -28.77 & -1.59 \\
  & VS+LAPE & +1.83 & +0.16 & -36.80 & +0.93 \\
  & MIR+LAPE & +4.57 & +0.13 & -13.50 & -5.80 \\
  & Dir+LAPE & +11.21 & +0.11 & +42.91 & -1.63 \\
  & LTS+LAPE & +1.64 & +0.11 & -21.88 & -0.84 \\
  & Pure LAPE & +22.47 & +0.08 & +34.75 & +6.48 \\
  & IROvA+LAPE & +6.16 & +0.11 & -9.74 & -2.06 \\
\midrule
Miniboone & Dir+LAPE & +1.24 & +0.05 & +10.84 & +16.61 \\
  & VS+LAPE & +1.24 & +0.05 & +11.44 & +16.62 \\
  & ETS+LAPE & +1.80 & +0.10 & -10.20 & +18.55 \\
  & TS+LAPE & +1.90 & +0.11 & +6.35 & +18.88 \\
  & MIR+LAPE & +2.15 & +0.09 & +21.91 & +20.17 \\
  & IROvA+LAPE & +1.41 & +0.10 & -11.03 & -36.37 \\
  & LTS+LAPE & +2.92 & +0.09 & +24.55 & +13.28 \\
  & Pure LAPE & +6.48 & +0.09 & +29.86 & +26.89 \\
\midrule
Noisy letter & Dir+LAPE & +4.61 & +0.38 & -15.93 & -8.88 \\
  & VS+LAPE & +10.88 & +1.12 & -42.87 & -15.77 \\
  & TS+LAPE & +12.25 & +1.39 & -39.57 & -1.95 \\
  & ETS+LAPE & +12.27 & +1.42 & -31.52 & -0.92 \\
  & LTS+LAPE & +8.21 & +1.19 & -8.95 & -8.78 \\
  & MIR+LAPE & +13.96 & +1.48 & -10.97 & -2.41 \\
  & IROvA+LAPE & +22.81 & +1.57 & -25.69 & -4.75 \\
  & Pure LAPE & +29.77 & +2.02 & +45.78 & +47.99 \\
\midrule
Obesity risk & Dir+LAPE & +7.03 & +0.00 & +19.77 & -13.53 \\
  & MIR+LAPE & +21.59 & +0.00 & -17.03 & +18.50 \\
  & ETS+LAPE & +3.02 & +0.00 & -20.53 & -9.68 \\
  & TS+LAPE & +3.14 & +0.00 & -2.24 & +1.02 \\
  & Pure LAPE & +7.32 & +0.00 & +9.32 & +11.34 \\
  & IROvA+LAPE & +40.57 & +0.00 & +4.84 & +1.89 \\
  & LTS+LAPE & +0.90 & +0.00 & -4.89 & +9.31 \\
  & VS+LAPE & +9.11 & +0.00 & +9.43 & +5.98 \\
\midrule
Shuttle & Dir+LAPE & +52.91 & +0.00 & -35.62 & -79.43 \\
  & IROvA+LAPE & +53.89 & +0.06 & +34.46 & -15.00 \\
  & ETS+LAPE & +31.09 & +0.06 & +32.52 & +3.02 \\
  & MIR+LAPE & +27.75 & +0.03 & -19.75 & +16.27 \\
  & TS+LAPE & +45.93 & +0.06 & +45.71 & +5.27 \\
  & Pure LAPE & +45.99 & +0.06 & +35.12 & +9.73 \\
  & VS+LAPE & +9.13 & +0.03 & +4.43 & -21.74 \\
  & LTS+LAPE & +10.21 & +0.03 & -0.35 & +4.89 \\
\bottomrule
\end{tabular}
\end{adjustbox}
\end{table}
\clearpage

\begin{table}[p]
\centering
\caption{\textbf{Combined visual transfer-learning LAPE-side results.} Medical rows use ImageNet-pretrained ResNet18 or DenseNet121 feature spaces, and CIFAR rows use frozen ImageNet-pretrained ResNet50 features with a multinomial linear probe. Entries are relative percentage changes versus the corresponding anchor, with positive values indicating improvement; thus, $+$ is better and $-$ is worse for all $\Delta$ columns.}
\label{tab:image_transfer_full}
\tiny
\setlength{\tabcolsep}{1.8pt}
\renewcommand{\arraystretch}{0.72}
\begin{adjustbox}{max totalsize={\textwidth}{0.88\textheight},center}
\begin{tabular}{p{0.30\textwidth}p{0.16\textwidth}rrrr}
\toprule
\textbf{Dataset / backbone / retrieval} & \textbf{Method} & \textbf{$\Delta$NLL (\%)} & \textbf{$\Delta$Acc. (\%)} & \textbf{$\Delta$ECE (\%)} & \textbf{$\Delta$ACE (\%)} \\
\midrule
PneumoniaMNIST / ResNet18 / raw-euclidean & Pure LAPE & +0.91 & +0.00 & +0.49 & -0.26 \\
  & TS+LAPE & -0.06 & +0.00 & -1.91 & -0.89 \\
  & VS+LAPE & +0.08 & +0.00 & +1.56 & -0.79 \\
  & Dir+LAPE & +0.01 & +0.00 & +3.35 & -0.82 \\
  & LTS+LAPE & -0.22 & +0.00 & +0.75 & +0.21 \\
  & ETS+LAPE & -0.06 & +0.00 & -1.91 & -0.89 \\
  & IROvA+LAPE & -0.08 & +0.00 & +3.55 & +0.02 \\
  & MIR+LAPE & -0.13 & -0.18 & -1.24 & +0.10 \\
  & FC+LAPE & +0.92 & +0.00 & +0.44 & -0.17 \\
  & FC+TS+LAPE & -0.07 & +0.00 & -1.72 & -0.86 \\
  & FC+ETS+LAPE & -0.07 & +0.00 & -1.72 & -0.86 \\
  & SAG+LAPE & +0.20 & +0.00 & +3.66 & +0.31 \\
\midrule
DermaMNIST / ResNet18 / std-cosine & Pure LAPE & +1.48 & +0.13 & -53.95 & +4.36 \\
  & TS+LAPE & +1.00 & +0.00 & -0.31 & +3.56 \\
  & VS+LAPE & +0.75 & -0.13 & +8.14 & -9.13 \\
  & Dir+LAPE & +1.20 & +0.13 & -40.68 & +2.89 \\
  & LTS+LAPE & +1.20 & +0.13 & +1.72 & +2.76 \\
  & ETS+LAPE & +0.96 & +0.06 & +17.13 & +4.41 \\
  & IROvA+LAPE & +4.33 & +0.32 & -12.84 & -4.04 \\
  & MIR+LAPE & +1.23 & -0.13 & +0.48 & -17.61 \\
  & FC+LAPE & +0.98 & -0.32 & -13.44 & +1.05 \\
  & FC+TS+LAPE & +0.38 & -0.45 & +11.38 & +2.68 \\
  & FC+ETS+LAPE & +0.38 & -0.38 & -5.33 & +5.41 \\
  & SAG+LAPE & +2.00 & +0.51 & -6.34 & +15.64 \\
\midrule
BreastMNIST / ResNet18 / raw-cosine & Pure LAPE & +30.11 & +0.00 & +27.62 & +5.78 \\
  & TS+LAPE & +12.56 & +0.00 & +15.49 & +12.50 \\
  & VS+LAPE & +17.35 & +0.00 & +39.56 & +6.10 \\
  & Dir+LAPE & +12.83 & +0.00 & +4.01 & +13.36 \\
  & LTS+LAPE & +12.76 & +0.00 & +19.59 & +16.81 \\
  & ETS+LAPE & +6.61 & +0.00 & +11.29 & +11.80 \\
  & IROvA+LAPE & +59.20 & +0.00 & +24.88 & -6.47 \\
  & MIR+LAPE & +50.20 & +0.00 & +30.97 & +0.11 \\
  & FC+LAPE & +18.53 & +0.73 & +23.50 & +9.07 \\
  & FC+TS+LAPE & +12.81 & +0.73 & +32.73 & +12.98 \\
  & FC+ETS+LAPE & +9.13 & +0.73 & +30.07 & +9.25 \\
  & SAG+LAPE & +67.24 & +0.00 & +33.98 & +20.16 \\
\midrule
MHIST / DenseNet121 / std-PCA64-cosine & Pure LAPE & +2.86 & +0.19 & +5.75 & +11.61 \\
  & TS+LAPE & +2.41 & +0.19 & -1.63 & -4.44 \\
  & VS+LAPE & +2.53 & -0.19 & +9.06 & -0.71 \\
  & Dir+LAPE & +1.53 & +0.76 & +8.59 & -6.54 \\
  & LTS+LAPE & +1.36 & +0.19 & -6.70 & -10.01 \\
  & ETS+LAPE & +2.41 & +0.19 & -1.63 & -4.44 \\
  & IROvA+LAPE & +8.84 & +0.19 & +13.03 & +0.86 \\
  & MIR+LAPE & +9.04 & +0.00 & +5.14 & +0.12 \\
  & FC+LAPE & +2.86 & -0.56 & +32.57 & +29.83 \\
  & FC+TS+LAPE & +2.28 & -0.19 & +18.38 & +29.97 \\
  & FC+ETS+LAPE & +2.28 & -0.19 & +18.38 & +29.97 \\
  & SAG+LAPE & +5.16 & -0.19 & +27.94 & +18.33 \\
\midrule
CIFAR-10 / ResNet50 / raw & Pure LAPE & +36.37 & +0.05 & +30.41 & +9.48 \\
  & TS+LAPE & +0.24 & +0.02 & -1.45 & +0.57 \\
  & VS+LAPE & +0.19 & +0.03 & +4.14 & +2.94 \\
  & Dir+LAPE & +0.19 & +0.03 & +0.04 & +1.06 \\
  & LTS+LAPE & +0.55 & +0.00 & +6.02 & +3.65 \\
  & ETS+LAPE & +0.22 & +0.02 & +1.92 & +0.52 \\
  & IROvA+LAPE & +0.81 & +0.03 & +0.89 & +1.79 \\
  & MIR+LAPE & +0.17 & +0.01 & -2.68 & +0.27 \\
  & FC+LAPE & +18.22 & +0.16 & +29.53 & +21.39 \\
  & FC+TS+LAPE & +0.98 & +0.10 & -8.21 & +11.92 \\
  & FC+ETS+LAPE & +0.95 & +0.10 & -12.89 & +11.34 \\
  & SAG+LAPE & +2.75 & +0.05 & +20.79 & +4.65 \\
\midrule
CIFAR-100 / ResNet50 / PCA128 & Pure LAPE & +19.08 & +0.01 & +27.28 & +0.72 \\
  & TS+LAPE & +0.03 & +0.01 & -0.60 & -0.10 \\
  & VS+LAPE & +0.00 & +0.00 & +2.51 & +0.41 \\
  & Dir+LAPE & +19.16 & -0.01 & +27.25 & -3.14 \\
  & LTS+LAPE & +0.13 & +0.01 & +3.81 & +0.28 \\
  & ETS+LAPE & +0.02 & +0.01 & +3.69 & -0.11 \\
  & IROvA+LAPE & +5.13 & +0.03 & +29.93 & +1.11 \\
  & MIR+LAPE & +0.19 & +0.01 & +4.39 & +0.49 \\
  & FC+LAPE & +9.84 & +0.10 & +23.63 & +1.81 \\
  & FC+TS+LAPE & +0.14 & +0.03 & -1.59 & +0.49 \\
  & FC+ETS+LAPE & +0.14 & +0.03 & -1.59 & +0.49 \\
  & SAG+LAPE & +0.47 & +0.01 & +11.35 & +2.96 \\
\bottomrule
\end{tabular}
\end{adjustbox}
\end{table}
\clearpage

\newpage
\section*{NeurIPS Paper Checklist}

\begin{enumerate}

\item {\bf Claims}
    \item[] Question: Do the main claims made in the abstract and introduction accurately reflect the paper's contributions and scope?
    \item[] Answer: \answerYes{} 
    \item[] Justification: The claims made in the abstract and introduction accurately match theoretical and experimental results.
    \item[] Guidelines:
    \begin{itemize}
        \item The answer \answerNA{} means that the abstract and introduction do not include the claims made in the paper.
        \item The abstract and/or introduction should clearly state the claims made, including the contributions made in the paper and important assumptions and limitations. A \answerNo{} or \answerNA{} answer to this question will not be perceived well by the reviewers. 
        \item The claims made should match theoretical and experimental results, and reflect how much the results can be expected to generalize to other settings. 
        \item It is fine to include aspirational goals as motivation as long as it is clear that these goals are not attained by the paper. 
    \end{itemize}

\item {\bf Limitations}
    \item[] Question: Does the paper discuss the limitations of the work performed by the authors?
    \item[] Answer: \answerYes{} 
    \item[] Justification: Discussion on limitations are clearly stated in the main paper.
    \item[] Guidelines:
    \begin{itemize}
        \item The answer \answerNA{} means that the paper has no limitation while the answer \answerNo{} means that the paper has limitations, but those are not discussed in the paper. 
        \item The authors are encouraged to create a separate ``Limitations'' section in their paper.
        \item The paper should point out any strong assumptions and how robust the results are to violations of these assumptions (e.g., independence assumptions, noiseless settings, model well-specification, asymptotic approximations only holding locally). The authors should reflect on how these assumptions might be violated in practice and what the implications would be.
        \item The authors should reflect on the scope of the claims made, e.g., if the approach was only tested on a few datasets or with a few runs. In general, empirical results often depend on implicit assumptions, which should be articulated.
        \item The authors should reflect on the factors that influence the performance of the approach. For example, a facial recognition algorithm may perform poorly when image resolution is low or images are taken in low lighting. Or a speech-to-text system might not be used reliably to provide closed captions for online lectures because it fails to handle technical jargon.
        \item The authors should discuss the computational efficiency of the proposed algorithms and how they scale with dataset size.
        \item If applicable, the authors should discuss possible limitations of their approach to address problems of privacy and fairness.
        \item While the authors might fear that complete honesty about limitations might be used by reviewers as grounds for rejection, a worse outcome might be that reviewers discover limitations that aren't acknowledged in the paper. The authors should use their best judgment and recognize that individual actions in favor of transparency play an important role in developing norms that preserve the integrity of the community. Reviewers will be specifically instructed to not penalize honesty concerning limitations.
    \end{itemize}

\item {\bf Theory assumptions and proofs}
    \item[] Question: For each theoretical result, does the paper provide the full set of assumptions and a complete (and correct) proof?
    \item[] Answer: \answerYes{} 
    \item[] Justification: The assumptions and proofs of Theorem~\ref{thm:consistency_lape} are clearly presented in Appendix \ref{proofs}. 
    \item[] Guidelines: 
    \begin{itemize}
        \item The answer \answerNA{} means that the paper does not include theoretical results. 
        \item All the theorems, formulas, and proofs in the paper should be numbered and cross-referenced.
        \item All assumptions should be clearly stated or referenced in the statement of any theorems.
        \item The proofs can either appear in the main paper or the supplemental material, but if they appear in the supplemental material, the authors are encouraged to provide a short proof sketch to provide intuition. 
        \item Inversely, any informal proof provided in the core of the paper should be complemented by formal proofs provided in appendix or supplemental material.
        \item Theorems and Lemmas that the proof relies upon should be properly referenced. 
    \end{itemize}

    \item {\bf Experimental result reproducibility}
    \item[] Question: Does the paper fully disclose all the information needed to reproduce the main experimental results of the paper to the extent that it affects the main claims and/or conclusions of the paper (regardless of whether the code and data are provided or not)?
    \item[] Answer:  \answerYes{} 
    \item[] Justification: Please see Section \ref{sec:experiments}, Appendix \ref{app:experimental_details} and \ref{app:supplementary_results} for details of reproducibility. Reproducible codes are also submitted as \texttt{Supplementary Material} and will be made public on GitHub after the paper is received.
    \item[] Guidelines:
    \begin{itemize}
        \item The answer \answerNA{} means that the paper does not include experiments.
        \item If the paper includes experiments, a \answerNo{} answer to this question will not be perceived well by the reviewers: Making the paper reproducible is important, regardless of whether the code and data are provided or not.
        \item If the contribution is a dataset and\slash or model, the authors should describe the steps taken to make their results reproducible or verifiable. 
        \item Depending on the contribution, reproducibility can be accomplished in various ways. For example, if the contribution is a novel architecture, describing the architecture fully might suffice, or if the contribution is a specific model and empirical evaluation, it may be necessary to either make it possible for others to replicate the model with the same dataset, or provide access to the model. In general. releasing code and data is often one good way to accomplish this, but reproducibility can also be provided via detailed instructions for how to replicate the results, access to a hosted model (e.g., in the case of a large language model), releasing of a model checkpoint, or other means that are appropriate to the research performed.
        \item While NeurIPS does not require releasing code, the conference does require all submissions to provide some reasonable avenue for reproducibility, which may depend on the nature of the contribution. For example
        \begin{enumerate}
            \item If the contribution is primarily a new algorithm, the paper should make it clear how to reproduce that algorithm.
            \item If the contribution is primarily a new model architecture, the paper should describe the architecture clearly and fully.
            \item If the contribution is a new model (e.g., a large language model), then there should either be a way to access this model for reproducing the results or a way to reproduce the model (e.g., with an open-source dataset or instructions for how to construct the dataset).
            \item We recognize that reproducibility may be tricky in some cases, in which case authors are welcome to describe the particular way they provide for reproducibility. In the case of closed-source models, it may be that access to the model is limited in some way (e.g., to registered users), but it should be possible for other researchers to have some path to reproducing or verifying the results.
        \end{enumerate}
    \end{itemize}

\item {\bf Open access to data and code}
    \item[] Question: Does the paper provide open access to the data and code, with sufficient instructions to faithfully reproduce the main experimental results, as described in supplemental material?
    \item[] Answer: \answerYes{} 
    \item[] Justification: Data access is stated in Appendix~\ref{app:experimental_details}. Reproducible codes are also submitted as \texttt{Supplementary Material} and will be made public on GitHub after the paper is received.
    \item[] Guidelines:
    \begin{itemize}
        \item The answer \answerNA{} means that paper does not include experiments requiring code.
        \item Please see the NeurIPS code and data submission guidelines (\url{https://neurips.cc/public/guides/CodeSubmissionPolicy}) for more details.
        \item While we encourage the release of code and data, we understand that this might not be possible, so \answerNo{} is an acceptable answer. Papers cannot be rejected simply for not including code, unless this is central to the contribution (e.g., for a new open-source benchmark).
        \item The instructions should contain the exact command and environment needed to run to reproduce the results. See the NeurIPS code and data submission guidelines (\url{https://neurips.cc/public/guides/CodeSubmissionPolicy}) for more details.
        \item The authors should provide instructions on data access and preparation, including how to access the raw data, preprocessed data, intermediate data, and generated data, etc.
        \item The authors should provide scripts to reproduce all experimental results for the new proposed method and baselines. If only a subset of experiments are reproducible, they should state which ones are omitted from the script and why.
        \item At submission time, to preserve anonymity, the authors should release anonymized versions (if applicable).
        \item Providing as much information as possible in supplemental material (appended to the paper) is recommended, but including URLs to data and code is permitted.
    \end{itemize}

\item {\bf Experimental setting/details}
    \item[] Question: Does the paper specify all the training and test details (e.g., data splits, hyperparameters, how they were chosen, type of optimizer) necessary to understand the results?
    \item[] Answer: \answerYes{} 
    \item[] Justification:  See Section \ref{sec:experiments} and Appendix \ref{app:experimental_details} for experimental settings.
    \item[] Guidelines:
    \begin{itemize}
        \item The answer \answerNA{} means that the paper does not include experiments.
        \item The experimental setting should be presented in the core of the paper to a level of detail that is necessary to appreciate the results and make sense of them.
        \item The full details can be provided either with the code, in appendix, or as supplemental material.
    \end{itemize}

\item {\bf Experiment statistical significance}
    \item[] Question: Does the paper report error bars suitably and correctly defined or other appropriate information about the statistical significance of the experiments?
    \item[] Answer: \answerYes{} 
    \item[] Justification: All non deterministic statistics are reported as the average performance under different random seeds.
    \item[] Guidelines:
    \begin{itemize}
        \item The answer \answerNA{} means that the paper does not include experiments.
        \item The authors should answer \answerYes{} if the results are accompanied by error bars, confidence intervals, or statistical significance tests, at least for the experiments that support the main claims of the paper.
        \item The factors of variability that the error bars are capturing should be clearly stated (for example, train/test split, initialization, random drawing of some parameter, or overall run with given experimental conditions).
        \item The method for calculating the error bars should be explained (closed form formula, call to a library function, bootstrap, etc.)
        \item The assumptions made should be given (e.g., Normally distributed errors).
        \item It should be clear whether the error bar is the standard deviation or the standard error of the mean.
        \item It is OK to report 1-sigma error bars, but one should state it. The authors should preferably report a 2-sigma error bar than state that they have a 96\% CI, if the hypothesis of Normality of errors is not verified.
        \item For asymmetric distributions, the authors should be careful not to show in tables or figures symmetric error bars that would yield results that are out of range (e.g., negative error rates).
        \item If error bars are reported in tables or plots, the authors should explain in the text how they were calculated and reference the corresponding figures or tables in the text.
    \end{itemize}

\item {\bf Experiments compute resources}
    \item[] Question: For each experiment, does the paper provide sufficient information on the computer resources (type of compute workers, memory, time of execution) needed to reproduce the experiments?
    \item[] Answer: \answerYes{} 
    \item[] Justification: Please see details in Appendix \ref{app:experimental_details}.
    \item[] Guidelines:
    \begin{itemize}
        \item The answer \answerNA{} means that the paper does not include experiments.
        \item The paper should indicate the type of compute workers CPU or GPU, internal cluster, or cloud provider, including relevant memory and storage.
        \item The paper should provide the amount of compute required for each of the individual experimental runs as well as estimate the total compute. 
        \item The paper should disclose whether the full research project required more compute than the experiments reported in the paper (e.g., preliminary or failed experiments that didn't make it into the paper). 
    \end{itemize}
    
\item {\bf Code of ethics}
    \item[] Question: Does the research conducted in the paper conform, in every respect, with the NeurIPS Code of Ethics \url{https://neurips.cc/public/EthicsGuidelines}?
    \item[] Answer: \answerYes{} 
    \item[] Justification: The research conducted in the paper conforms with the NeurIPS Code of Ethics.
    \item[] Guidelines:
    \begin{itemize}
        \item The answer \answerNA{} means that the authors have not reviewed the NeurIPS Code of Ethics.
        \item If the authors answer \answerNo, they should explain the special circumstances that require a deviation from the Code of Ethics.
        \item The authors should make sure to preserve anonymity (e.g., if there is a special consideration due to laws or regulations in their jurisdiction).
    \end{itemize}

\item {\bf Broader impacts}
    \item[] Question: Does the paper discuss both potential positive societal impacts and negative societal impacts of the work performed?
    \item[] Answer: \answerYes{} 
    \item[] Justification: We discuss about the societal impacts of post-hoc confidence calibration in the introduction. We also emphasized the high value of the plug-and-play framework proposed in this article in enhancing the performance of existing post-hoc calibration methods.

    \item[] Guidelines:
    \begin{itemize}
        \item The answer \answerNA{} means that there is no societal impact of the work performed.
        \item If the authors answer \answerNA{} or \answerNo, they should explain why their work has no societal impact or why the paper does not address societal impact.
        \item Examples of negative societal impacts include potential malicious or unintended uses (e.g., disinformation, generating fake profiles, surveillance), fairness considerations (e.g., deployment of technologies that could make decisions that unfairly impact specific groups), privacy considerations, and security considerations.
        \item The conference expects that many papers will be foundational research and not tied to particular applications, let alone deployments. However, if there is a direct path to any negative applications, the authors should point it out. For example, it is legitimate to point out that an improvement in the quality of generative models could be used to generate Deepfakes for disinformation. On the other hand, it is not needed to point out that a generic algorithm for optimizing neural networks could enable people to train models that generate Deepfakes faster.
        \item The authors should consider possible harms that could arise when the technology is being used as intended and functioning correctly, harms that could arise when the technology is being used as intended but gives incorrect results, and harms following from (intentional or unintentional) misuse of the technology.
        \item If there are negative societal impacts, the authors could also discuss possible mitigation strategies (e.g., gated release of models, providing defenses in addition to attacks, mechanisms for monitoring misuse, mechanisms to monitor how a system learns from feedback over time, improving the efficiency and accessibility of ML).
    \end{itemize}
    
\item {\bf Safeguards}
    \item[] Question: Does the paper describe safeguards that have been put in place for responsible release of data or models that have a high risk for misuse (e.g., pre-trained language models, image generators, or scraped datasets)?
    \item[] Answer: \answerNA{} 
    \item[] Justification: The paper poses no such risks.
    \item[] Guidelines:
    \begin{itemize}
        \item The answer \answerNA{} means that the paper poses no such risks.
        \item Released models that have a high risk for misuse or dual-use should be released with necessary safeguards to allow for controlled use of the model, for example by requiring that users adhere to usage guidelines or restrictions to access the model or implementing safety filters. 
        \item Datasets that have been scraped from the Internet could pose safety risks. The authors should describe how they avoided releasing unsafe images.
        \item We recognize that providing effective safeguards is challenging, and many papers do not require this, but we encourage authors to take this into account and make a best faith effort.
    \end{itemize}

\item {\bf Licenses for existing assets}
    \item[] Question: Are the creators or original owners of assets (e.g., code, data, models), used in the paper, properly credited and are the license and terms of use explicitly mentioned and properly respected?
    \item[] Answer: \answerNA{} 
    \item[] Justification: We have cited all the original papers that produced the code or dataset.
    \item[] Guidelines:
    \begin{itemize}
        \item The answer \answerNA{} means that the paper does not use existing assets.
        \item The authors should cite the original paper that produced the code package or dataset.
        \item The authors should state which version of the asset is used and, if possible, include a URL.
        \item The name of the license (e.g., CC-BY 4.0) should be included for each asset.
        \item For scraped data from a particular source (e.g., website), the copyright and terms of service of that source should be provided.
        \item If assets are released, the license, copyright information, and terms of use in the package should be provided. For popular datasets, \url{paperswithcode.com/datasets} has curated licenses for some datasets. Their licensing guide can help determine the license of a dataset.
        \item For existing datasets that are re-packaged, both the original license and the license of the derived asset (if it has changed) should be provided.
        \item If this information is not available online, the authors are encouraged to reach out to the asset's creators.
    \end{itemize}

\item {\bf New assets}
    \item[] Question: Are new assets introduced in the paper well documented and is the documentation provided alongside the assets?
    \item[] Answer:\answerYes{} 
    \item[] Justification: We upload our code (as the new asset) in the supplement material.
    \item[] Guidelines:
    \begin{itemize}
        \item The answer \answerNA{} means that the paper does not release new assets.
        \item Researchers should communicate the details of the dataset\slash code\slash model as part of their submissions via structured templates. This includes details about training, license, limitations, etc. 
        \item The paper should discuss whether and how consent was obtained from people whose asset is used.
        \item At submission time, remember to anonymize your assets (if applicable). You can either create an anonymized URL or include an anonymized zip file.
    \end{itemize}

\item {\bf Crowdsourcing and research with human subjects}
    \item[] Question: For crowdsourcing experiments and research with human subjects, does the paper include the full text of instructions given to participants and screenshots, if applicable, as well as details about compensation (if any)? 
    \item[] Answer: \answerNA{} 
    \item[] Justification: The paper does not involve crowdsourcing nor research with human subjects.
    \item[] Guidelines:
    \begin{itemize}
        \item The answer \answerNA{} means that the paper does not involve crowdsourcing nor research with human subjects.
        \item Including this information in the supplemental material is fine, but if the main contribution of the paper involves human subjects, then as much detail as possible should be included in the main paper. 
        \item According to the NeurIPS Code of Ethics, workers involved in data collection, curation, or other labor should be paid at least the minimum wage in the country of the data collector. 
    \end{itemize}

\item {\bf Institutional review board (IRB) approvals or equivalent for research with human subjects}
    \item[] Question: Does the paper describe potential risks incurred by study participants, whether such risks were disclosed to the subjects, and whether Institutional Review Board (IRB) approvals (or an equivalent approval/review based on the requirements of your country or institution) were obtained?
    \item[] Answer: \answerNA{} 
    \item[] Justification: The paper does not involve crowdsourcing nor research with human subjects.
    \item[] Guidelines:
    \begin{itemize}
        \item The answer \answerNA{} means that the paper does not involve crowdsourcing nor research with human subjects.
        \item Depending on the country in which research is conducted, IRB approval (or equivalent) may be required for any human subjects research. If you obtained IRB approval, you should clearly state this in the paper. 
        \item We recognize that the procedures for this may vary significantly between institutions and locations, and we expect authors to adhere to the NeurIPS Code of Ethics and the guidelines for their institution. 
        \item For initial submissions, do not include any information that would break anonymity (if applicable), such as the institution conducting the review.
    \end{itemize}

\item {\bf Declaration of LLM usage}
    \item[] Question: Does the paper describe the usage of LLMs if it is an important, original, or non-standard component of the core methods in this research? Note that if the LLM is used only for writing, editing, or formatting purposes and does \emph{not} impact the core methodology, scientific rigor, or originality of the research, declaration is not required.
    \item[] Answer: \answerNA{} 
    \item[] Justification: The core method development in this research does not involve LLMs as any
important, original, or non-standard components.
    \item[] Guidelines:
    \begin{itemize}
        \item The answer \answerNA{} means that the core method development in this research does not involve LLMs as any important, original, or non-standard components.
        \item Please refer to our LLM policy in the NeurIPS handbook for what should or should not be described.
    \end{itemize}

\end{enumerate}

\end{document}